\begin{document}

\title{Any-Time Regret-Guaranteed Algorithm for Control of Linear Quadratic Systems}

\author{\name Jafar Abbaszadeh Chekan \email jafar2@illinois.edu \\
       \addr Coordinated Science Laboratory\\
       University of Illinois Urbana-Champaign\\
       Urbana, IL 61801, USA
       \AND
       \name Cedric Langbort \email langbort@illinois.edu \\
       \addr Coordinated Science Laboratory\\
      University of Illinois Urbana-Champaign\\
       Urbana, IL 61801, USA}

\editor{My editor}

\maketitle

\begin{abstract}
We propose a computationally efficient algorithm that achieves anytime regret of order $\mathcal{O}(\sqrt{t})$, with explicit dependence on the system dimensions and on the solution of the Discrete Algebraic Riccati Equation (DARE). Our approach builds on the SDP-based framework of \cite{cohen2019learning}, using an appropriately tuned regularization and a sufficiently accurate initial estimate to construct confidence ellipsoids for control design. A carefully designed input-perturbation mechanism is incorporated to ensure anytime performance.
We develop two variants of the algorithm. The first enforces a notion of strong sequential stability, requiring each policy to be stabilizing and successive policies to remain close. However, enforcing this notion results in a suboptimal regret scaling. The second class of algorithms removes the sequential-stability requirement and instead requires only that each generated policy be stabilizing. Closed-loop stability is then preserved through a dwell-time–inspired policy-update rule, adapting ideas from switched-systems control to carefully balance exploration and exploitation. This class of algorithms also addresses key shortcomings of most existing approaches—including certainty-equivalence–based methods—which typically guarantee stability only in the Lyapunov sense and lack explicit uniform high-probability bounds on the state trajectory expressed in system-theoretic terms. Our analysis explicitly characterizes the trade-off between state amplification and regret, and shows that partially relaxing the sequential-stability requirement yields optimal regret.
Finally, our method eliminates the need for any \emph{a priori} bound on the norm of the DARE solution—an assumption required by all existing computationally efficient optimism in the face of uncertainty (OFU) based algorithms—and thereby removes the reliance of regret guarantees on such external inputs.
\end{abstract}

\begin{keywords}
  Reinforcement Learning, Controls, LQR, Regret-Bound
\end{keywords}

\section{Introduction}

The problem of designing learning-based control algorithms for Linear Quadratic Regulator (LQR) systems with unknown dynamics—particularly under non-asymptotic performance guarantees—has received significant attention over the past decade. The LQR framework is a particularly important paradigm in the controls community, primarily due to its closed-form solution. In LQR setting the state of the system evolves according the linear dynamics  
\begin{align}
x_{t+1} &=A_{*}x_{t} + B_{*}u_{t}+\omega_{t+1},\label{eq:dyn_atttt} 
\end{align}
where matrices $A_{*}\in \mathbb{R}^{n\times n}$ and $B_{*}\in \mathbb{R}^{n\times m}$ are system matrices. The process noise $\omega_{t+1}$ usually is assumed to be drawn i.i.d from a zero-mean gaussian distribution.

The incurred quadratic stage cost $c_t$, which penalizes the state $x_t$ and control input $u_t$, is defined as
\begin{align}
c_t = x_t^\top Q x_t + u_t^\top R u_t, \label{eq:costrep}
\end{align}
where the cost matrices satisfy $Q \succeq 0$ and $R \succ 0$.

With full knowledge of system parameters $A_*$, $B_*$, and assuming controllability of $(A_*, B_*)$, it is known that the optimal sequence of control inputs $u^*=\{u^*_t\}_{t=1}^{T}$ minimizing average expected cost 
\begin{align}
J_{\mathcal{A}} =& \lim_{T\rightarrow \infty}\frac{1}{T}\mathbb{E}[\sum_{t=1}^{T} c_t]\label{eq:avExpC}
\end{align}
subject to dynamics (\ref{eq:dyn_atttt}) has a close-form solution. This sequence indeed is in the form of static state feedback $u^*_t=K_*x_t$ where $K_*$ is computed using the solution of the Discrete Algebraic Riccati Equation (DARE), $P_*$. The optimal average expected cost is \( J_{*}=P_* \bullet W =\operatorname{Tr}(P_*W)\), where \( W \) is the covariance matrix of the Gaussian noise from which the process noise \( \omega_t \) is drawn.


In the absence of knowledge of the parameters $A_*$ and $B_*$, the core challenge is to design an algorithm that stabilizes the system while minimizing the accumulated quadratic cost, quantified by a finite-time performance measure. In this study, we adopt \emph{regret} as the performance measure, defined as follows. For an arbitrary algorithm $\mathcal{A}$, which generates the input sequence $u = \{u_t\}_{t=1}^{T}$, regret is given by
\begin{align}
    R_{\mathcal{A}}(T) = \sum_{t=1}^{T} \left( x_t^\top Q x_t + u_t^\top R u_t - J_{*} \right), \label{eq:Reg}
\end{align}
which captures the difference between the accumulated cost incurred by the algorithm and the best achievable cost, i.e., the optimal cost.

The overarching goal of this paper, as in the related literature, is to design an algorithm that minimizes the regret~\eqref{eq:Reg} subject to the system dynamics~\eqref{eq:dyn_atttt}, when the parameters $A_*$ and $B_*$ are unknown.

\subsection{Contributions and Informal Statements of Results} 

The first algorithm proposed for the control of LQR with a worst-case regret bound of $\mathcal{O}(\sqrt{T})$ is the \textit{Optimism in the Face of Uncertainty (OFU)-based algorithm} introduced by \cite{abbasi2011regret}. This algorithm performs system identification by constructing confidence ellipsoids and designs the control policy by playing optimistically with respect to this set. Although the algorithm enjoys strong regret guarantees, its nonconvexity and the resulting computational burden has spurred subsequent research into more efficient formulations.

Despite success in addressing the computational complexity gap, most existing approaches in the literature are horizon-dependent, as they are designed under the assumption that the time horizon \(T\) is known \emph{a priori} for an episode. These methods typically provide a regret guarantee of order \( \mathcal{O}(\sqrt{T}) \) over the specified horizon, usually evaluated at the end of the episode. To extend such performance to longer horizons, the commonly adopted remedy is the doubling trick. However, even with this technique, there remains a reliance on prior knowledge of the epochs. This is particularly problematic in control applications, where the horizon is often unknown or subject to change, making horizon-dependent algorithms less suitable for real-world deployment. Therefore, algorithms with anytime guarantees, which do not require such prior knowledge, are generally more practical. In this work, we propose a new \emph{anytime} regret-minimizing algorithm that does not require knowledge of the horizon in advance. Our approach builds on the OFU-based method introduced by \cite{cohen2019learning}.

Another property that may be important for making learning-based algorithms transferable across control domains is the provision of system-theoretic guarantees on the state-norm trajectory. Establishing such bounds, however, requires a thorough analysis of the closed-loop stability induced by the proposed algorithms. Generating individually stabilizing policies is not sufficient to ensure closed-loop stability, as instability (or state explosion) may still occur due to policy updates, a well known fact in literature on \textit{switched-systems} control. Previous works have attempted to address this challenge through various approaches. For instance, certainty-equivalence (CE) methods \cite{simchowitz2020naive, jedra2022minimal} employ a common Lyapunov function to guarantee Lyapunov stability of the closed-loop system.
However, it is important to note that Lyapunov decay alone does not necessarily provide explicit high-probability, uniform control of the state norm in the presence of process noise. Under zero-mean Gaussian disturbances, Lyapunov decay guarantees only that the expected state norm, $\mathbb{E}\left[\|x_t\|^2\right]$, remains bounded, and deriving a high-probability bound requires additional effort. Although a high-probability state-norm bound is provided by \cite{simchowitz2020naive} (see their Lemma~5.3 and its proof), this bound is conservative and does not accurately capture the true scale of possible state-norm excursions.
 Therefore, while Lyapunov stability is sufficient for controlling accumulated regret—since a finite number of such spikes can typically be absorbed in regret bounds—it is insufficient for obtaining high-probability guarantees on the state norm, which are essential for safety-critical systems.
Another work that provides a high-probability state-norm bound is \cite{abeille2020efficient}, but the bound is not fully system-theoretic and depends on a priori bounds, which may be loose. In another work, \cite{cohen2019learning} introduced the notion of strong sequential stability, which enables an explicit high-probability guarantee on the state norm. This notion imposes a stricter requirement than standard stability: the algorithm must not only output stabilizing policies, but also ensure that any two consecutively generated policies remain close in a suitable sense, preventing the state norm from blowing up due to policy updates. While this approach provides a high-probability bound on the state norm, in this work we show that the strong sequential-stability requirement is conservative from the perspective of regret guarantees, leading to an $\mathcal{O}(\|P_*\|^{8})$ dependence on the operator norm of the DARE solution. This issue motivates us to propose a method that incorporates a mechanism inspired by \textit{dwell-time} design in switched-systems control. Using this mechanism, we carefully regulate the exploration–exploitation trade-off to improve regret bounds while explicitly restricting state amplification and ensuring uniform control of the closed-loop trajectories. A natural first idea would be to require the algorithm to generate only stabilizing policies and then rely on a dwell-time condition to control both the state norm and the regret. However, sequential stability is not a binary property; rather, it forms a tunable continuum. Our analysis shows that fully enforcing sequential stability leads to suboptimal regret, while discarding it entirely and relying solely on a dwell-time–type design for exploration–exploitation tuning also degrades regret. The optimal performance arises at an intermediate relaxation level, where stability and exploration are balanced. We identify a specific tuning that guarantees an $\mathcal{O}(\|P_*\|^{6})$ dependence in the regret bound, while maintaining control of the state norm with a dependence of order $\mathcal{O}(\|P_*\|^{2.5})$. Our analysis reveals a clear trade-off between the state-norm bound and the regret bound in terms of their dependence on $\|P_*\|$, and it provides the flexibility to tune this trade-off as needed.

Another challenge in designing regret-guaranteed algorithms for LQ control is the reliance on \textit{a priori information} as an input. In this paper, we eliminate the need for prior access to the optimal average cost $J_*$, which is commonly required in existing computationally efficient OFU-based algorithms. While this issue has been resolved for certainty-equivalence methods, as studied in~\cite{simchowitz2020naive, jedra2022minimal}, it has remained unaddressed in current computationally efficient OFU-based algorithms, including~\cite{abeille2020efficient, cohen2019learning}. Such a requirement makes the performance of these algorithms dependent on an external bound that may be loose, which is reflected in their regret upper bounds. We address this limitation by introducing a warm-up phase that runs a dual SDP problem in the loop to detect when the parameter estimate enters the desired ball, whose radius depends on $\|P_*\|$.

\subsubsection{Contributions.}

Our core contributions are threefold: (i) We introduce new \emph{computationally efficient anytime-regret} algorithms for LQ control, motivated by the need for methods that remain effective when the implementation horizon is not known a priori.
(ii) We emphasize the importance of providing a high-probability, system-theoretic bound on the state norm. We offer a new theoretical understanding of strong sequential stability, highlight its limitations, and demonstrate how relaxing this notion can improve algorithm performance. A key component of this development is a novel mechanism, inspired by dwell-time design in switched-systems control, which enables principled tuning of the exploration–exploitation trade-off while ensuring both favorable regret bounds and closed-loop stability.
(iii) We eliminate the need for a priori knowledge of the optimal average cost $J_*$, an assumption required by existing computationally efficient OFU-based algorithms.

\subsubsection{Informal statements of our main results.} 

\textbf{$(1)$ Any-time regret-guaranteed algorithm with system-theoretic bounds} We propose an any-time, regret-guaranteed, SDP-based LQ control (ARSLO) algorithm built upon the SDP-based algorithm proposed by \cite{cohen2019learning}, OSLO which is a horizon dependent algorithm. The OSLO algorithm when running for $T$ time steps requires $T$-dependent regularization parameters used for least square estimation and confidence bound construction as well as an initial parameter estimate in $\mathcal{O}(1/T^{1/4})$ neighborhood of true but unknown system dynamic parameters. The latter requires running a warm-up algorithm for $\mathcal{O}(\sqrt{T})$ time steps. To extend to a longer horizon using the doubling trick, such a warm-up phase must be implemented for the extended version in each epoch.
 Demonstrating this issue, we propose an any-time regret guaranteed algorithm by choosing an appropriate regularization parameter for the least square estimation and applying additive input perturbation $\eta_t \sim \mathcal{N}(0, \Gamma_t)$, where $\|\Gamma_t\|_{op}=\mathcal{O}(1/\sqrt{t})$. Therefore, our ARSLO algorithm, provides \emph{anytime} regret guarantees while also ensuring the strong sequential stability property of the generated policies (see Definition~\ref{def:sequentially}). Specifically, ARSLO achieves the regret bound
\begin{align*}
{R_{{ARSLO}}(t) \leq \mathcal{O}\big(\sqrt{ {n^2(n+m)^3} {\|P_*\|_{op}^{16}} \;{t} \log^4\frac{t}{\delta}} \big)}.
\end{align*}

\textbf{$(2)$ Doing away with strong sequential stability notion and improving regret bound} 

The regret bound of the ARSLO algorithm exhibits a high dependence on the operator norm of the DARE solution, \(\|P_*\|_{\mathrm{op}}\). 
We identify the notion of strong sequential stability, introduced by \cite{cohen2019learning}, as the source of this high dependence on \(\|P_*\|_{\mathrm{op}}\).
 This notion asserts that not only must every 
generated policy, be stabilizing, but \textit{any} two consecutive policies produced 
by the algorithm must also remain sufficiently close to each other to 
guarantee closed-loop stability. Considering the well-established fact that switching between stabilizing policies does not necessarily preserve closed-loop stability, \cite{liberzon2003switching, hespanha1999stability, liberzon2002basic}, the underlying idea here is that if two successive policies are sufficiently close, then updating the policy at any time will not compromise control over the growth of the state norm. Although strong sequential stability notion is presented as a definition, the meticulous measures required to enforce it effectively turn it into a restrictive assumption. We show that, for this notion to hold, the algorithm must perturb the designed feedback controller with Gaussian noise whose covariance exhibits a higher-order dependence on $\|P_*\|_{\mathrm{op}}$. This requirement introduces higher-order dependence on the DARE solution in the resulting system-theoretic regret bounds. In this work, we demonstrate that such a conservative measure is unnecessary. Instead, if the algorithm is designed so that every generated policy is strongly stabilizing, closed-loop stability can be ensured by appropriately tuning the timing of policy updates. Such appropriate tuning of policy updates is analogous to dwell-time design in switched systems \cite{liberzon2003switching, hespanha1999stability}, where a carefully chosen delay between successive policies is required to preserve stability. In our setting, this delay manifests as postponing policy updates, which we enforce by appropriately controlling the exploration–exploitation trade-off. At first glance, it may seem that completely dropping the enforcement of closeness between successive policies and relying solely on a dwell-time approach could be beneficial in terms of regret; however, our analysis shows that this is not optimal. We observe a continuum of behaviors, and to capture this, we introduce a parameter $\bar{\rho}$, which specifies any case within the spectrum between enforcing strong sequential stability and fully discarding the closeness requirement. By applying the techniques described above, we introduce the second class of algorithms named ARSLO$^+(\bar{\rho})$, which achieves an optimal regret bound of
\begin{align*}
{R_{{ARSLO^+(\bar{\rho})}}(t) \leq \mathcal{O}\big(\sqrt{ {n^2(n+m)^3} {\|P_*\|_{op}^{12}} \;{t} \log^4\frac{t}{\delta}} \big)}.
\end{align*}
for a specific value of $\bar{\rho}=2$.

Related relaxations of the sequential-stability requirement have been explored in prior work. For example, CE–based methods \cite{simchowitz2020naive, jedra2022minimal} rely on a common Lyapunov function approach. However, these methods do not provide precise, explicit, system-theoretic high-probability bounds on the state-norm trajectory, as they guarantee stability only in the Lyapunov sense. Consequently, while such approaches optimize overall regret and achieve favorable regret guarantees, they offer limited control over transient state spikes at policy update times, which may be problematic in safety-critical settings.
For OFU-based algorithms, the high-probability state-norm bound derived in \cite{abeille2020efficient} is also not fully system-theoretic, as it depends on an a priori bound on the optimal average expected cost.

\textbf{$(3)$ Contributing to the line of OFU-based algorithms for LQR control
}
Compared to the only two existing convex and computationally efficient OFU-based algorithms by \cite{cohen2019learning} and \cite{abeille2020efficient}, our OFU-based algorithm provides \emph{anytime} regret guarantees. To our knowledge, this makes it the first convex (i.e., computationally efficient) OFU-based algorithm with an \emph{anytime} regret guarantee.

Leaving aside the \emph{anytime} regret guarantee of our proposed algorithms, their system-theoretic regret upper bound complements that of the Lag-LQR approach proposed by \cite{abeille2020efficient}. 
Lag-LQR is developed by applying techniques from extended value iteration (EVI) to the Optimistic LQR with relaxed constraints, followed by a Lagrangian relaxation to design a computationally efficient algorithm. 
This algorithm comes with the following regret upper bound:
\begin{align*}
    R_{{LAG-LQ}}\leq \mathcal{O}({D^{\frac{3}{2}}}\sqrt{{n (n+m)^2} {\|P_*\|_{op}^{4}} {T} \log^3 \frac{T}{\delta}})
\end{align*}
where $\operatorname{Tr}(P_*) \leq {D}$ is an a priori known bound used by the algorithm. A similar issue also appears in \cite{cohen2019learning}. In contrast to \textit{LAG-LQR} and \textit{OSLO}, our algorithm does not require such an a priori bound, and its regret upper bound is purely system-theoretic. We address this by jointly utilizing the solutions of the primal and dual SDP formulations to design the input perturbation noise and to tune the exploration–exploitation trade-off parameter. Furthermore, the dimension dependence of our regret bound is $\mathcal{O}\big(\sqrt{n^2 (n+m)^3}\big)$, whereas for \textit{LAG-LQR} it is $\mathcal{O}\big(\sqrt{n^4 (n+m)^2}\big)$, when accounting for the hidden dependencies of $D$ on $n$ and $\|P_*\|_{\mathrm{op}}$.

Finally, leveraging both the primal and dual solutions allows us to design input perturbations that enable \textit{non-isotropic} exploration. 
The covariance matrix of the Gaussian input perturbation is constructed from the relaxed primal SDP solution itself (not its norm), augmented with a bias term that depends on the norm of the dual solution, thereby facilitating non-isotropic exploration.

\subsection{Related Works}

In recent years, the machine learning community has shown increasing interest in addressing control design in the context of LQR, under various scenarios of what is known and what is unknown. Notably, \cite{agarwal2019online} studied the setting with (nonstochastic) adversarial perturbations and established regret guarantees. \cite{cohen2018online} investigated the control of linear systems with a known model but adversarially chosen quadratic costs, also providing regret guarantees. When the system model matrices are unknown, which is the main focus of this paper, existing efforts can be categorized into model-free and model-based settings. Model-free approaches design controllers without explicit system identification; instead, they minimize the cost function directly, relying solely on available data (see \cite{dean2020sample, fazel2018global, abbasi2019model}). On the other hand, model-based approaches perform control design through explicit model identification or confidence set construction and typically provide regret guarantees. In this section, we briefly review the regret-guarantee literature that followed the OFU-based algorithm introduced by \cite{abbasi2011regret}, which was the first algorithm for LQ control to achieve $\mathcal{O}(\sqrt{T})$ regret guarantees, albeit at a high computational cost.

In the OFU line of research, \cite{faradonbeh2020optimism} proposed an algorithm for general stabilizable systems, while also accounting for a fairly general heavy-tailed noise distribution. Despite this extension, the proposed algorithm remained computationally burdensome. \cite{dean2018regret} introduced a computationally efficient robust adaptive algorithm with polynomial time complexity that attains only regret of $\mathcal{O}(T^{2/3})$ throughout the learning process. However, the question of whether a computationally efficient (convex) algorithm can achieve an $\mathcal{O}(\sqrt{T})$ regret remained unanswered until the works proposed by \cite{cohen2019learning}, \cite{faradonbeh2020input}, and \cite{mania2019certainty}.  \cite{faradonbeh2020input} introduced an $\epsilon$-greedy-like algorithm that is computationally efficient algorithm and achieves $\mathcal{O}(\sqrt{T})$ regret, but without explicit bounds for parameter estimate errors. Similarly \cite{mania2019certainty} applied certainity equivalence principle and $\epsilon-$greedy exploration noise and proposed a computationally efficient $\mathcal{O}({\sqrt{T}})$ regret. \cite{simchowitz2020naive} proposed a certainty equivalence-based algorithm and derived a system-theoretic regret upper and lower bound with $\mathcal{O}(\sqrt{T})$. \cite{abeille2020efficient} advanced the original OFU-based algorithm by applying ideas from extended value iteration in optimistic algorithms and proposing an $\epsilon$-optimistic control, which is computationally efficient while still guaranteeing $\mathcal{O}(\sqrt{T})$ regret. The problem is also addressed by the Thomson sampling approach. In this line of work, \cite{abeille2018improved} introduced the first algorithm tailored for scalar systems, achieving a regret of $\mathcal{O}(\sqrt{T})$. Subsequently, \cite{kargin2022thompson} addressed the issue for multidimensional systems.

A commonly used categorization of regret-guaranteed algorithms for LQ control—also adopted in \cite{jedra2022minimal}—classifies existing works into (i) self-tuning regulator–based approaches and (ii) OFU–based approaches.
Self-tuning regulator algorithms  estimate the unknown system matrices $(A_*, B_*)$, treat these estimates as the true parameters, solve the corresponding DARE, and design the control input accordingly. To ensure sufficient excitation of the system—and hence enable consistent estimation of $(A_*, B_*)$—the control inputs are typically perturbed with carefully tuned noise.
OFU-based approaches, on the other hand, do not employ perturbation noise and instead design the control policy by acting optimistically with respect to a confidence set. Our algorithm, while OFU-based, also employs input perturbations for exploration. As a result, our approach can be viewed as lying at the intersection of these two categories. A more detailed discussion and quantitative comparisons are provided in Appendix \ref{secMoreDiscu}.

This paper is organized as follows.  
In Section \ref{sec:probStat}, we present the problem formulation and the necessary preliminaries for algorithm design and analysis.  
Section \ref{sec:withsequent} introduces the ARSLO algorithm, provides detailed illustrations of its main steps, and presents the corresponding guarantees.  
Section~\ref{sec:sequenttialityrel} introduces the ARSLO$^+(\bar{\rho})$ algorithm and explains how doing away with the strong notion of sequential stability, together with a dwell-time–inspired technique, leads to an improved regret bound. 
In Section \ref{sec:warmup}, we introduce the warm-up algorithm along with its technical details.  
Section \ref{philosophy} briefly summarizes the findings and discusses the underlying conceptual intuitions. Finally, the appendices contain the technical analysis, supporting lemmas, and proofs.

\section{Problem Statement and Preliminaries}\label{sec:probStat}

\textbf{Notation.} For matrix $M$, $\|M\|_* = \operatorname{tr}(\sqrt{M^\top M})$ is the trace norm, $\|M\|_F=\sqrt{\operatorname{tr} (M^\top M)}$ is the Frobenius norm and $\|M\|$ is the spectral (operator) norm. $\lambda_{\min}(.)$ represents the minimum eigenvalue. Finally, we use $A \bullet B$ to denote the entry-wise dot product between matrices, namely $A \bullet B = \operatorname{Tr}(A^\top B)$. We write \(f \lesssim g\) to denote that \(f(x) \leq C\, g(x)\) for some universal constant \(C > 0\). The notation \(\mathcal{O}\) indicates dependencies only on the time \(t\), the confidence parameter \(\delta\), \(\|P_*\|\), and the dimensions \(n\) and \(m\).

\subsection{Problem Statement}

Consider a linear time-invariant system (\ref{eq:dyn_atttt}), equivalently rewritten as
\begin{align}
	x _{t+1} =\Theta_{*}^\top z_{t}+\omega_{t+1}, \quad z_t=\begin{pmatrix} x_t \\ u_t \end{pmatrix} \label{eq:dynam_by_theta} 
\end{align}
where ${\Theta_*}=(A_*,B_*)^\top \in \mathbb{R}^{(n+m)\times n}$ initially unknown. We assume that the process noise $\omega_{t+1}$ fulfills the following assumption: 
\begin{assumption}
\label{Assumption 1}
There exists a filtration $\mathcal{F}_{t}$ such that

$(1.1)$ $\omega_{t+1}$ is a martingale difference, i.e., $\mathbb{E}[\omega_{t+1}|\;\mathcal{F}_{t}]=0$

$(1.2)$ $\mathbb{E}[\omega_{t+1}\omega_{t+1}^\top|\;\mathcal{F}_{t}]={\sigma}_{\omega}^2I_{n}=:W$ for some ${\sigma}_{\omega}^2>0$;

\end{assumption}

Our goal is to design an algorithm that minimizes regret (\ref{eq:Reg}) subject to dynamics (\ref{eq:dynam_by_theta}), i.e.,

\begin{align*}
\nonumber\textrm{minimize}_{\{u_t\}_{t=0}^{T-1}}\; \; \; & R_{\mathcal{A}}(T) = \sum_{t=0}^{T-1} \left( x_t^\top Q x_t + u_t^\top R u_t - J_{*} \right)\\
&\nonumber\textrm{S.t.}\; \; \; x_{t+1} =\Theta_*^\top z_t+\omega_{t+1}.
\end{align*}

We need the following assumptions for algorithm design purposes.

\begin{assumption} \label{Ass_2}

$ $

$(1)$ There are known constants $\alpha_0, \alpha_1, \vartheta$ such that,
\begin{align*}
& \alpha_0I\preceq Q, R\preceq \alpha_1 I,\quad   \|\Theta _{*}\|\leq \vartheta. 
\end{align*}

$(2)$ There is an initial stabilizing policy $K_0$ to start the strategy with.
\end{assumption}
It is worth mentioning that, since $Q$ and $R$ are known, determining $\alpha_0$ and $\alpha_1$ is straightforward. Furthermore, unlike \cite{cohen2018online}, which requires an \textit{a priori} bound $J_* \leq \nu$, our proposed strategy eliminates the need for such an assumption.
 This is achieved by carefully leveraging both the primal and dual solutions in the design of input perturbation noise and a smart warm-up phase that continuously runs the dual SDP problem in the loop. It is also important to highlight that a similar \textit{a priori} bound is required in the other OFU-based algorithm proposed in \cite{abeille2020efficient}, where it explicitly appears in the regret upper bound. That being said, our algorithm is the first computationally efficient OFU-based method that does not require such prior knowledge.







    We review the notions of strong stability and strong sequential stability introduced in \cite{cohen2019learning}, which will be used throughout our analysis.

{\begin{definition} (strong stability) \label{def:stronstab}
Consider the linear time invariant plant  (\ref{eq:dyn_atttt}). The closed-loop system matrix $A_*+B_*K$ is $(\kappa, \gamma)-$ strongly stable for $\kappa>0$ and $0<\gamma<1$ if there exists $H\succ I$ and $L$ such that $A_*+B_*K=HL{H}^{-1}$ and
\begin{enumerate}
    \item $\|L\|\leq 1-\gamma$ and $\|K\|\leq \kappa$
     \item $\|H\|\|{H}^{-1}\|\leq \kappa$.
\end{enumerate}
 In that case, we say that $K$ is $(\kappa, \gamma)-$strongly stabilizing for the plant $(A_*,B_*)$. 
\end{definition}}

\begin{definition}(strong sequential stability) \label{def:sequentially}
Consider the linear time-invariant plant in (\ref{eq:dyn_atttt}). The closed-loop system matrix under a sequence of policies $K_1, K_2, \ldots$ is said to be $(\kappa,\gamma)$-strongly sequentially stable for $\kappa>0$ and $0<\gamma<1$ if there exist matrices $H_1, H_2, H_3, \ldots \succ 0$ and $L_1, L_2, \ldots$ such that
\[
A_* + B_*K_t = H_t L_t H_t^{-1}, \qquad \forall t,
\]
with the following properties:
\begin{enumerate}
    \item $\|L_t\| \leq 1 - \gamma$ and $\|K_t\| \leq \kappa$,
    \item $\|H_t\| \leq B_0$ and $\|H_t^{-1}\| \leq 1/b_0$, with $\kappa = B_0/b_0$,
    \item $\|H_{t+1}^{-1} H_t\| \leq 1 + \gamma/2$.
\end{enumerate}

Furthermore, a sequence of control gains $K_1, K_2, \ldots$ is said to be $(\kappa,\gamma)$-strongly sequentially stabilizing for the plant $(A_*, B_*)$ if the sequence of closed-loop matrices $A_* + B_*K_1, A_* + B_*K_2, \ldots$ is $(\kappa,\gamma)$-strongly sequentially stable.
\end{definition}

In Appendix~\ref{sec:kapgamdef}, it is shown that any stabilizing policy \( K \) is indeed \((\kappa, \gamma)\)-strongly stabilizing for some values of \(\kappa\) and \(\gamma\). The only nontrivial component of this definition is the \emph{sequentiality} condition, which informally requires that any two consecutively generated policies remain close to each other in a precise sense, formalized by the condition \( \|H_{t+1}^{-1} H_t\| \leq 1 + \gamma/2 \). This requirement ensures that switching from one stabilizing policy to another at any time does not cause an immediate blow-up of the state norm.

While Definition \ref{def:sequentially} may appear purely definitional, ensuring that the closed-loop system remains stable under this definition requires specific measures and mechanisms in the algorithm design process. In this paper, we discuss how enforcing this requirement can lead to inefficiencies in the regret upper bound and how relaxing the sequential-stability condition can yield improvements.

\subsection{Preliminaries} \label{sec:prelim}

Our algorithm has two main building blocks, namely confidence ellipsoid construction and control design. In this section, we provide a brief overview of each.

\textbf{Confidence Ellipsoid Construction} To begin, we outline how a confidence ellipsoid around the true but unknown parameters $\Theta_*$ is constructed using the available measurements, leaving further details to the Appendix \ref{ap:confdcos}. Let $X_t$ and $Z_t$ denote the matrices whose rows are $x_2^\top, \ldots, x_t^\top$ and $z_1^\top, \ldots, z_{t-1}^\top$, respectively.
By applying least-square estimation, regularized with $\operatorname*{Tr}((\Theta - \Theta_0)^\top (\Theta - \Theta_0))$ for some $\Theta_0$ and regularization parameter $\lambda$, the least-square estimate of $\Theta_*$ is given by:
\begin{align}
\hat{\Theta}_{t}=V_{t}^{-1}\left(Z_t^\top X_t + \lambda \Theta_0\right) \label{eq:LSE_Sol123}
\end{align}
where the covariance matrix is defined as $V_{t}=\lambda I + \sum_{s=0}^{t-1} z_{s}z_{s}^\top$.

Subsequently, assuming that the initial estimate \(\Theta_0\) such that
\begin{align}
  \operatorname*{Tr}((\Theta_*-\Theta)^\top (\Theta_*-\Theta))\leq \epsilon^2 
\end{align}
 for some $\epsilon$, the confidence ellipsoid ${\mathcal{C}}_t(\delta)$ that contains $\Theta_*$ with probability at least $1-\delta$ is built as follows:
 \begin{align}
&{\mathcal{C}}_t(\delta):=\big\{\Theta \in \mathbb{R}^{(n+m)\times n}|\operatorname{Tr}\big((\Theta-\hat{\Theta}_t)^\top V_{t}(\Theta-\hat{\Theta}_t)\big)\leq r_t\big\} \label{eq:confSet1_tighterghfff2}\\
&  r_t=\bigg( \sigma_{\omega} \sqrt{2n \log\frac{n\det(V_{t}) }{\delta \det(\lambda I)}}+\sqrt{\lambda} \epsilon\bigg)^2. \label{radius_centralEl_realTime200}
\end{align}
\cite{cohen2019learning} uses an additional parameter $\beta$ when constructing the confidence set with the goal of normalizing the ellipsoid, which is required for their analysis. Our algorithm follows the standard framework provided by \cite{abbasi2011regret} and does not need such complication. It is also worth mentioning that this result is significantly stronger than the statement presented above. In particular, the probability guarantee is uniform over time, meaning that, with probability at least $1-\delta$, the true parameter $\Theta_*$ belongs to every confidence ellipsoid constructed up to any time $t$. This follows from the uniform-in-time self-normalized martingale concentration bound. We refer the reader to Theorem 2 of \cite{abbasi2011improved} for further details. This property will be used later in the proof of Lemma \ref{lem:epseventprob}.

\textbf{Control Design} Consider the Primal and Dual SDP formulation of the LQR control problem revisited in Appendix \ref{sec:primaldualSDP}. By accounting for the confidence ellipsoid (\ref{eq:confSet1_tighterghfff2}), we can relax those SDPs, to enable control design and analysis solely relying on data. The procedure of relaxing SDP formulations given such an ellipsoid is carried out by applying some perturbation technique, which has been illustrated in Lemma \ref{lem:deriveRelaxedSDPs}. The lemma generalizes the one in \cite{cohen2019learning}, which is for a normalized confidence ellipsoid.

Given the parameter estimate in the form of confidence ellipsoid $\mathcal{C}_t(\delta)$ (or simply $\mathcal{C}_t$) defined by (\ref{eq:confSet1_tighterghfff2}) the relaxed primal SDP is formulated as follows:
\begin{align}
\begin{array}{rrclcl}
\displaystyle  \operatorname{min} & \multicolumn{1}{l}{\begin{pmatrix}
	Q & 0 \\
	0 & R
	\end{pmatrix}\bullet \Sigma}\\
\textrm{s.t.} & \Sigma_{xx}\succeq {\hat{\Theta}_t}^\top \Sigma\hat{\Theta}_t+W-\mu_t\big(\Sigma\bullet {{V}^{-1}_{t}}\big)I,\\
& \Sigma\succ 0.  &
\end{array}\label{eq:RelaxedSDP}
\end{align}
where $\mu_t= r_t+\sqrt{r_t}\vartheta \|V_{t}\|^{1/2}$. The rationale for this specific form of $\mu_t$ is explained in Lemma~\ref{lem:purturbation}.
Denoting optimal solution of program (\ref{eq:RelaxedSDP}) by $\Sigma(\mathcal{C}_t)$, the control signal extracted from solving relaxed primal SDP (\ref{eq:RelaxedSDP}), $u=K(\mathcal{C}_t)x$ is deterministic and linear in state where 
\begin{align}
K(\mathcal{C}_t)=\Sigma_{ux}(\mathcal{C}_t){\Sigma_{xx}^{-1}(\mathcal{C}_t)}. \label{eq:obtPol}
\end{align} 

Furthermore, the relaxed dual SDP formulation is given as follows:
\begin{align}
\begin{array}{rrclcl}
\displaystyle \max & \multicolumn{1}{l}{P\bullet W}\\
\textrm{s.t.} & \begin{pmatrix}
Q-P & 0 \\
0 & R
\end{pmatrix}+\hat{\Theta}_t P {\hat{\Theta}_t}^\top \succeq \mu_t \|P\|{{V}^{-1}_{t}}\\
&P \succeq 0 .
\end{array}\label{eq:RedSDP_DUALP} 
\end{align}
We denote its optimal solution by $P(\mathcal{C}_t)$. Note that in \cite{cohen2018online}, the term $\mu_t \|P\|_* V_t^{-1}$ on the right-hand side of their relaxed dual SDP introduces unnecessary looseness in the analysis and it can be replaced by $\mu_t \|P\| V_t^{-1}$. Our perturbation lemma (Lemma~\ref{lem:purturbation} in Appendix~\ref{eq:rlxpdfor}), together with the corresponding lemma in their analysis, shows that such an SDP relaxation is not necessary.

 Now we proceed to propose our algorithms followed by stability and regret bound guarantees, and highlighting the contributions.

\section{Any-time Regret SDP-based LQ cOntrol (ARSLO) Algorithm} \label{sec:withsequent} 

Our proposed approach consists of two main stages: a warm-up phase and an SDP-based algorithm—either ARSLO or its optimized variant ARSLO$^{+}(\bar{\rho})$.  
The warm-up phase aims to provide ARSLO or ARSLO$^{+}(\bar{\rho})$ with an estimate \(\Theta_{0}\) lying within some desired \(\epsilon\)-neighborhood of \(\Theta_{*}\), i.e., \(\|\Theta_{0}-\Theta_{*}\|_F \leq \epsilon\).  
This warm-up procedure is presented in Algorithm~\ref{alg:warmUp} in Section~\ref{sec:warmup}.


In this section, we propose the anytime–regret–guaranteed algorithm ARSLO, which applies the OFU principle to generate control policies that ensure closed-loop stability of system (\ref{eq:dynam_by_theta}) in the sense of the strongly sequentially stabilizing notion defined in Definition~\ref{def:sequentially}, while minimizing the regret (\ref{eq:Reg}). 
ARSLO requires an initial estimate \(\Theta_{0}\) within an appropriate \(\bar{\epsilon}(\bar{\kappa}_1)\)-neighborhood of \(\Theta_{*}\), which is provided by the warm-up phase.
Using the initial estimate, ARSLO algorithm builds and updates a confidence ellipsoid $\mathcal{C}_t$ around the true but unknown parameters of the system $\Theta_*$, applying an appropriate regularization parameter $\lambda$. The process for construction of $\mathcal{C}_t$ has already been illustrated in Section \ref{sec:prelim}. Using this confidence set, the algorithm solves the relaxed primal SDP problem (\ref{eq:RelaxedSDP}) and computes the policy $K(\mathcal{C}_t)$ by (\ref{eq:obtPol}). When applying the policy, the algorithm adds a zero-mean gaussian random perturbation noise $\eta_t \sim \mathcal{N}\big(0,\Gamma_t\big)$ to the input, i.e., $u_t=K(\mathcal{C}_t)x_t+\eta_t$. It is worth noting that the covariance matrix $\Gamma_t$ of the Gaussian perturbation noise computed by Theorem~\ref{Stability_thm17} is constructed leveraging the solution of the relaxed primal and dual SDPs, with a feature of non-isotropic exploration.  

It can be shown that excessive updates in the policy can cause inefficiency, so there is a need for a criterion to decide on policy updates. By choosing $\det (V_t) > (1+\beta) \det (V_{\tau})$ where $V_{\tau}$ is the covariance matrix of the most recently updated policy and arbitrarily setting $\beta = 1$, we trigger a policy update when the ellipsoid volume is roughly halved. The parameters $\lambda$, $\Gamma_t$, and $\bar{\epsilon}(\bar{\kappa}_1)$ are jointly tuned to ensure that the closed-loop system is $(\kappa_*, \gamma_*)-$strongly sequentially stable where $\kappa_*:=\sqrt{2\|P_*\|/\alpha_0}$ and $\gamma_*=\kappa_*^{-2}/2$, as demonstrated in Theorem~\ref{Stability_thm17}. The rationale for choosing Gaussian perturbation noise with a covariance matrix 
\(\Gamma_t\) such that \(\|\Gamma_t\| = \mathcal{O}(1/\sqrt{t})\) 
is to ensure stability and achieve an overall regret of order \(\mathcal{O}(\sqrt{t})\). Formally,
the reasoning behind this choice is elaborated in Appendix~\ref{prop:ordernoise}.
 The algorithm enjoys any-time regret guarantees, as neither the initial estimate $\Theta_0$, the regularization parameter $\lambda$, nor the input perturbation noise $\eta_t$ depend on the time horizon of the algorithm's execution.
\subsection{Stability Analysis of ARSLO Algorithm} \label{sec:analysis}
In this subsection, we provide sufficient conditions for closed-loop stability under the deployment of control policies generated by Algorithm~\ref{Alg:ACOLC}. The following theorem specifies how to tune the parameters $\Gamma_t$, $\lambda$, and $\bar{\epsilon}(\bar{\kappa}_1)$ to guarantee that the system is $(\kappa_*, \gamma_*)$–strong sequentially stable. 

\RestyleAlgo{ruled} 
\begin{algorithm}[H]
\caption{Any-time Regret SDP-based LQ cOntrol (ARSLO)}\label{Alg:ACOLC}
\textbf{Inputs:} {$\Theta_0$, $\bar{\kappa}_1$, and $\bar{\epsilon}(\bar{\kappa}_1)$ (provided by Algorithm \ref{alg:warmUp}), $\delta$,  $\vartheta$}

Compute $\bar{c}$ using~(\ref{eq:barcARSLO}).

Set $\lambda = \frac{\sigma_\omega^2 \bar{c}}{40}$

Initialize $V_1 = \lambda I$, $r_1 = \lambda \bar{\epsilon}^2(\bar{\kappa}_1)$, $\hat{\Theta}_1 = \Theta_0$, and construct $\mathcal{C}_1$.

\For{$t = 1, 2, ..., T$}{
  \eIf{$\det (V_t)> 2\det (V_{\tau})$ or $t=1$}{
  Calculate  $\mu_t= r_t+\sqrt{r_t}\vartheta \|V_{t}\|^{1/2}$
 
 Solve the relaxed primal SDP problem (\ref{eq:RelaxedSDP}) and compute $K(\mathcal{C}_t)$ by (\ref{eq:obtPol})

 Compute $P(\mathcal{C}_t)$ through solving the relaxed dual SDP problem (\ref{eq:RedSDP_DUALP})
 
 Let $V_{\tau}=V_t$}{Set $K(\mathcal{C}_t)=K(\mathcal{C}_{t-1})$
  and $P(\mathcal{C}_t)=P(\mathcal{C}_{t-1})$}
  Calculate $\Gamma_t$ by (\ref{eq:Gama1}) and sample $\eta_t\sim\mathcal{N}\big(0, \Gamma_{t}\big)$
  
  Play $u_t=K(\mathcal{C}_t)x_t+\eta_t$
  
  Observe $x_{t+1}$, save $(x_{t+1}, z_t)$ to the data set

  Construct $X_t$, $Z_t$ and compute $V_t$ 

  Calculate $\hat{\Theta}_t$ by (\ref{eq:LSE_Sol123}) and $r_t$ by (\ref{radius_centralEl_realTime200}) and build confidence ellipsoid $\mathcal{C}_t$
}
\end{algorithm}


\begin{theorem}[Sequential Strong Stability of Algorithm \ref{Alg:ACOLC}] \label{Stability_thm17}
Consider Algorithm \ref{Alg:ACOLC}, where the regularization parameter $\lambda$ is set as
\begin{align}
    \lambda &:= \frac{\sigma_\omega^2 \bar{c}}{80}, \label{eq:lambdapARSLO}\\
    \bar{c} &:= a^2 \Big( 2 \log\Big(\frac{4 a^2 n}{\delta}\Big) + 1 \Big)^2, \label{eq:barcARSLO}
\end{align}
where
\begin{align}
a = \frac{40960\, \bar{\vartheta}^2_{B_*}\bar{\kappa}_1^{10}\vartheta \sqrt{\sigma_{\omega}^2n(n+m)}c_z(\bar{\kappa}_1)}{\sigma_{\omega}^2 },\quad c_z^2(\kappa)={64 \kappa^6(1+\kappa^2)}\bigg(\frac{\|x_1\|}{\log \frac{1}{\delta}}+\sqrt{20n \sigma_{\omega}^2}\bigg)^2 \label{eq:defaARSLoth}
\end{align}
and
\begin{align}
    \bar{\kappa}_1 &:= \sqrt{\kappa_1^2 + \frac{1}{2 \kappa_1^2}}, \qquad
    \bar{\vartheta}_{B_*} := \max\{1, \vartheta_{B_*}\}, \quad \vartheta_{B_*} \ge \|B_*\|. \label{eq:barkapseq_Def}
\end{align}
Let the additive perturbation noise applied to the feedback control designed via the relaxed primal SDP be
\begin{align}
    \eta_t \sim \mathcal{N}\big(0, \Gamma_t\big), \qquad
    \Gamma_t := \frac{\bar{p}_t \sigma_\omega^2 \Big( K(\mathcal{C}_t) K^\top(\mathcal{C}_t) + \frac{\|P(\mathcal{C}_t)\|}{\alpha_0}I \Big)}{\sqrt{t + \bar{c}}},\label{eq:Gama1}
\end{align}
where
\begin{align}
   \bar{p}_t= \frac{5120\, \kappa_t^{10}\vartheta \sqrt{r_t}(\lambda+\|\sum_{k=1}^t z_kz_k^\top\|)^{1/2}}{\sigma_{\omega}^2 \sqrt{t+\bar{c}}}.
\end{align}

Provided with an initial estimate $\Theta_0$ satisfying
\begin{align}
    \|\Theta_0 - \Theta_*\|_F \le \bar{\epsilon}(\bar{\kappa}_1), \qquad 
    \bar{\epsilon}(\bar{\kappa}_1) := \frac{\sigma_\omega \sqrt{2 n (n+m)}}{\sqrt{\lambda}}, \label{def:bareps}
\end{align}
for any $\delta\in(0,1/3)$, with probability at least $1-3\delta$, the closed-loop system under the policies generated by Algorithm \ref{Alg:ACOLC} is \((\kappa_*, \gamma_*)\)-sequentially strongly stable where
\begin{align}
    \kappa_* := \sqrt{\frac{2 \|P_*\|}{\alpha_0}}, \qquad \gamma_* := \frac{1}{2 \kappa_*^2}.
\end{align}
\end{theorem}

Note that $\bar{\epsilon}(\bar{\kappa}_1)$, given by~(\ref{def:bareps}), is parametrized by $\bar{\kappa}_1$, which in turn depends on the solution of the relaxed dual SDP $P(\mathcal{C}_1)$ at the first time step of ARSLO (which is also the output of the warm-up phase) or equivalently on
$\kappa_1 := \sqrt{2\|P(\mathcal{C}_1)\|/\alpha_0}$.
The purpose of this parametrization, which determines how tight the initial estimate $\Theta_0$ should be, is to eliminate the need for an \textit{a priori} bound on $\|P_*\|$, a limitation present in existing OFU-based algorithms such as \cite{abeille2020efficient} and \cite{cohen2019learning}. 
This relaxation is achieved by designing a warm-up phase that iteratively constructs a confidence ellipsoid (ball), solves the relaxed dual SDP for that, computes $\bar{\epsilon}(\bar{\kappa}_1)$ from its solution, and checks if the confidence ball falls within the desired $\bar{\epsilon}(\bar{\kappa}_1)$ neighborhood, and then out puts such an appropriate $\Theta_0$ and $\bar{\epsilon}(\bar{\kappa}_1)$ to be used in ARSLO algorithm.

This relaxation is primarily implied by Lemma~\ref{lem:closeness}, which shows that the ARSLO algorithm always satisfies
\begin{align}
    P(\mathcal{C}_t) \preceq P_* \preceq P(\mathcal{C}_t) + \frac{\alpha_0}{4\kappa_t^2} I, \quad \forall t. \label{eq:goodineqkh}
\end{align}
We refer the reader to the proof of Theorem~\ref{Stability_thm17} for further details.

For a sequence of policies generated by Algorithm~\ref{Alg:ACOLC} that guarantees
$(\kappa_*,\gamma_*)$-strong sequential stability of the closed-loop system, one can show that the
state norm remains bounded. The following proposition summarizes this statement.

\begin{proposition} \label{lemma:upperBound}
Let $\delta\in (0,1/3)$. Then provided with an initial estimate $\Theta_0$ satisfying (\ref{def:bareps}), the state norm of the closed-loop system under the deployment of the ARSLO algorithm is bounded as
\begin{align}
  \|x_t\| &\leq \kappa_* e^{-{\gamma_*} (t-1)/2}\|x_1\| 
  + \frac{2\kappa_*}{\gamma_*} \max_{1\leq s\leq t}\|B_*\eta_s+\omega_{s+1}\|
  \label{eq:neatBun}
\end{align}
with probability at least $1-3\delta$.
\end{proposition}

As an immediate consequence of Theorem \ref{Stability_thm17}, and in particular the requirement (\ref{def:bareps}), the following corollary characterizes the order of magnitude of $\bar{\epsilon}(\bar{\kappa}_1)$.

\begin{corollary}\label{cl:strongseq}
For the ARSLO algorithm to generate strongly sequentially stabilizing policies, the initial estimate $\Theta_{0}$ must lie within a $\bar{\epsilon}(\bar{\kappa}_1)$-neighborhood of $\Theta_{*}$, with $\bar{\epsilon}(\bar{\kappa}_1) = \mathcal{O}\!\left(1/\kappa_*^{14}\right)$.
\end{corollary}

\subsection{Regret Bound Analysis of ARSLO Algorithm}

In this section, we provide an upper bound for the regret of the ARSLO algorithm. For this purpose, we first consider the following sequence of good events, $\mathcal{E}_t\subseteq \mathcal{E}_{t-1}\subseteq \ldots\subseteq \mathcal{E}_2\subseteq \mathcal{E}_1$ where for each $k$ 
\begin{align}
    \mathcal{E}_k = \left\{\forall s = 1, \ldots, k,\quad \Theta_* \in \mathcal{C}_s(\delta) \quad\text{and}\quad \|z_s\|^2 \leq \zeta_k^{2}(\kappa_*) \right\}\cap \mathcal{E}_0(\bar{\epsilon}(\bar{\kappa}_1)), \label{eq:Godevent1}
\end{align}
where
\begin{align}
     \mathcal{E}_0(\epsilon):=\big\{\operatorname*{Tr}((\Theta_*-\Theta_0)^\top (\Theta_*-\Theta_0))\leq \epsilon^2\big\}, \label{eq:condinitialtheta}
\end{align}
$\mathcal{C}_s(\delta)$ is the confidence set defined in (\ref{eq:confSet1_tighterghfff2}) and $\zeta_k^{2}(\kappa_*)$ is given by (\ref{eq:zetaValll}).
The event $\mathcal{E}_k$ represents the good event up to round $k$, under which the true parameter remains within the confidence sets and the norms of $\{z_s\}_{s=1}^k$ remain appropriately bounded.

Recalling the one-step regret definition $r_k=c_k-J_*$, , our goal in this section is to upper-bound $\tilde{R}_{\text{ARSLO}}(t)=\sum_{k=1}^t r_k  1_{\mathcal{E}_k}$ which coincides with the true regret $R_{\text{ARSLO}}(t)$ with high probability.

\begin{lemma} \label{lem:epseventprob}
Let $\delta\in(0,1/5)$. Then $\mathbb{P}(\mathcal{E}_t) \ge 1 - 5\delta.$
\end{lemma}

The following theorem summarizes the regret bound of the ARSLO algorithm and is followed by discussions and comparisons with selected leading works from the literature.

\begin{theorem}\label{thm:RegretBound}
Fix $\delta \in (0,1)$. Then, with probability at least $1-6\delta$, the regret of the ARSLO algorithm satisfies
\begin{align}
   R_{\mathrm{ARSLO}}(t)
   \leq
   \mathcal{O}\!\left(
   \sqrt{
      n^2 (n+m)^3 \|P_*\|^{16}\,
      t \log^4\!\frac{t}{\delta}
   }
   \right).
   \label{eq:ARSLORegBnd}
\end{align}
\end{theorem}

It is worth mentioning that the regret terms involving $\lambda$ appear only in expressions of order $\mathcal{O}(\log \frac{t}{\delta})$. Consequently, they do not contribute to the $\mathcal{O}(\sqrt{t})$ terms, which play a pivotal role in the regret upper bound. A detailed expression of the regret upper bound can be found in Appendix \ref{Sec:regretARSLOan}.


Since our proposed algorithm belongs to the class of OFU-based methods, it is natural to compare it with existing convex (i.e., computationally efficient) OFU-based algorithms in the literature. To the best of our knowledge, the only two algorithms of this type are those proposed in \cite{cohen2019learning} and \cite{abeille2020efficient}. The former, known as the OSLO algorithm, is technically involved and requires a warm-up phase of length $\mathcal{O}(\sqrt{T})$ for implementation up to round $T$. In addition, its regret bound exhibits a higher-order dependence on the dimension, namely $\mathcal{O}\left(\sqrt{n^{4}(n+m)^{3}}\right)$, and depends on an \textit{a priori} bound of $\|P_*\|$. A similar issue exists in the latter work, which proposes the \textit{LAG-LQ} algorithm, achieving the regret bound
\begin{align*}
    R_{{LAG-LQ}}\leq \mathcal{O}\left({D^{\frac{3}{2}}}\sqrt{{n (n+m)^2} {\|P_*\|^{4}} {T} \log^3 \frac{T}{\delta}}\right)
\end{align*}
where $\operatorname{Tr}(P_*) \leq {D}$ is an a priori known bound. Unlike the \textit{LAG-LQ} algorithm, our method does not require such an \textit{a priori} bound; accordingly, no such quantity appears in the regret bound of ARSLO. This is indeed achieved by jointly leveraging the relaxed primal and dual SDP solutions when tuning the input perturbation covariance matrix, and by employing a carefully designed warm-up phase that repeatedly solves the relaxed dual SDP problem while using the computed upper bound for the operator norm of the DARE solution as given in (\ref{eq:goodineqkh}). In this respect, our algorithm offers a complementary viewpoint to the method of \cite{abeille2020efficient}, in addition to its primary strength of coming with anytime regret guarantees. Moreover, the use of both primal and dual solutions further facilitates non-isotropic exploration. As for the dependence on the system ambient dimension and the operator norm of the DARE solution, note that, even in favor of the \textit{LAQ-LQ} algorithm, if we let \(D = \operatorname{Tr}(P_*)\leq n \|P_*\|\), it has a dependence of \(\mathcal{O}\!\bigl(\sqrt{n^{4}(n+m)^{2}\|P_{*}\|^{7}}\bigr)\). 
On the other hand, the dependence of the regret of the ARSLO algorithm on the dimension and on the solution of the DARE is \(\mathcal{O}\bigl(\sqrt{n^{2}(n+m)^{3}\|P_{*}\|^{16}}\bigr)\). 
Leaving aside the fact that we conduct this comparison in favor of the \textit{LAG-LQ} algorithm by setting $D = \operatorname{Tr}(P_{*})$, the results nevertheless indicate that the dependence on the dimension for ARSLO is of comparable order, or in some cases even better. However, our algorithm exhibits a stronger dependence on $\|P_{*}\|$, suggesting that there may still be room for improvement.

We further compare our proposed algorithm with another class of methods, namely the Certainty-Equivalence (CE) based algorithms. For instance, \cite{simchowitz2020naive} achieves a high-probability regret bound of
\begin{align}
R_{\text{CE}} \leq \mathcal{O}\left(\sqrt{n m^2 \|P_*\|^{11} T \log \frac{T}{\delta}}\right).
\end{align}
Unlike our proposed algorithm, their method does not provide an \textit{anytime} regret guarantee. However, its dependence on the system dimensions $n$ and $m$, as well as on the operator norm of the DARE solution $P_*$, is lower than that of our method. Another CE-based approach proposed by \cite{jedra2022minimal} can run in an anytime fashion, but it only provides guarantees on the expected cost. It also improves the dependence on $\|P_*\|$ compared to \cite{simchowitz2020naive}, reducing the exponent from $5.5$ to $5.25$.

While these CE-based methods ensure Lyapunov stability, they do not explicitly control the evolution of the state with high probability. In contrast, a specific advantage of our algorithm is that it provides an explicit high-probability bound on the state trajectory. Such bounds are motivated by the fact that policy updates may induce transient spikes in the state norm—a phenomenon where switching from one stabilizing controller to another can temporarily increase the state magnitude. Although these spikes have little impact on the overall regret, as their contribution is absorbed into the accumulated cost, they may be undesirable in systems with safety constraints. Both the OSLO algorithm of \cite{cohen2019learning} and our ARSLO algorithm incorporate the notion of strong sequential stability, which ensures that consecutively generated policies remain sufficiently close. This mechanism provides control over the state trajectory and may be particularly valuable when transferring the algorithm to different control domains where such guarantees are critical. While this feature improves trajectory control, it may not yet be the most efficient in terms of the regret bound's dependence on $\|P_*\|$.

In the next section, we pursue the idea of removing the notion of strong sequential stability—which introduces conservatism into the algorithm design—while still ensuring that the closed-loop system remains stable with an explicit high-probability bound on the state-norm trajectory, along with an improved (i.e., lower) dependence of the regret bound on $\|P_*\|$.

\section{ARSLO$^+(\bar{\rho})$ Algorithm: Doing Away with the Notion of Strong Sequential Stability} \label{sec:sequenttialityrel}
The strong sequential-stability requirement mandates that each individually generated policy be stabilizing and that consecutively generated policies under the ARSLO algorithm remain sufficiently close, namely, $\|P^{-1/2}(\mathcal{C}_{t+1})P^{1/2}(\mathcal{C}_t)\|\leq 1+\gamma_*/2$. Satisfying this condition necessitates a conservative algorithmic design, implemented by enforcing 
\begin{align}
    \mu_t  \| P(\mathcal{C}_t)\| V_t^{-1}\preceq \frac{\alpha_0}{32 {\kappa_t^8}}I  \label{eq: verygoodRep}
\end{align}
for any time $t$, where $\kappa_t = \sqrt{2\|P(\mathcal{C}_t)\| / \alpha_0}>1$.
 However, this enforcement is particularly restrictive, as it induces a higher-order dependence of the regret upper bound on $\|P_*\|$.

In this section, we address this limitation by relaxing the sequentiality requirement \eqref{eq: verygoodRep} to the following condition:  
\begin{align}
    \mu_t \, \| P(\mathcal{C}_t)\| V_t^{-1} \preceq \frac{\alpha_0}{4 \kappa_t^{\bar{\rho}}} I. \label{eq: verygoodRep2}
\end{align}  
for any $t$ and some $\bar{\rho} \in [0,\, 8)$. Enforcing this condition guarantees that any policy generated by the algorithm is $(\kappa_*, \gamma_*)$–strongly stabilizing and that consecutively generated policies remain within a controlled distance of one another. However, this distance requirement is weaker than that imposed by strong sequential stability, which corresponds to the case $\bar{\rho} = 8$. Under this relaxation, the closed-loop system’s state norm may not remain adequately controlled and can exhibit undesired spikes under arbitrary switching. As established in switched-systems control theory, switching among individually stabilizing policies can still induce instability or lead to temporary explosions of the state norm.
A standard remedy in this setting is to sufficiently slow down the switching. Motivated by this idea, for a given $\bar{\rho} \in [0,\,8)$, the frequency of policy updates must be appropriately limited. This is achieved by tuning the exploration–exploitation trade-off parameter $\beta(\bar{\rho})$ in the update criterion
\begin{align*}
    \det(V_t) > (1+\beta(\bar{\rho}))\det(V_{\tau})
\end{align*}
with $\bar{\rho}=8$ corresponding to strong sequential stability while the other extreme, $\bar{\rho}=0$, imposes the minimal requirement that each generated policy be individually strongly stabilizing. The key question, then, is which value of $\bar{\rho}$ achieves the optimal regret. Intuitively, setting $\bar{\rho}$ to a smaller value increases the distance between consecutively generated policies $\|P^{-1/2}(\mathcal{C}_{t^\prime})P^{1/2}(\mathcal{C}_t)\|\leq 1+2{\kappa_*^{3-\bar{\rho}/2}}$ (by Lemma \ref{lem:closeness_only_strong} and Remark \ref{eq:goodremarkb}), which requires a longer dwell-time between policy updates, i.e., larger value of $\beta(\bar{\rho})$, potentially increasing the upper bound of the regret term $R_4(t)$ (see Appendix~\ref{eq:subsecDecomp},  (\ref{eq:R4})). Conversely, a larger value of $\bar{\rho}$ increases the order of dependence of the perturbation-noise covariance matrix $\Gamma_t$ on $\|P_*\|$, which raises the upper bound of the regret term $R_6(t)$ (see Appendix~\ref{eq:subsecDecomp}, (\ref{eq:R6})). This clearly illustrates the inherent trade-off and motivates a general analysis to identify the optimal value $\bar{\rho}_*$ that achieves the best regret bound in terms of its dependence on $\|P_*\|$. This analysis also allows for a careful examination of the trade-off between the regret bound and the high-probability state-norm guarantees. The rationale for specifying $\beta(\bar{\rho})$ is provided in (\ref{eq:choicebeta}) of Theorem \ref{prop2}.

We refer to the new algorithm equipped with the measures mentioned above as \textit{ARSLO$^+(\bar{\rho})$}. This algorithm can be easily constructed by applying the above mentioned necessary adjustments to the ARSLO algorithm (Algorithm \ref{Alg:ACOLC}), but it is provided in the Appendix (see Algorithm \ref{Alg:ACOLC+}) for the sake of completeness. The following theorem illustrates how to tune the covariance matrix $\Gamma_t$ of the input perturbation $\eta_t$, the regularization parameter $\lambda$ used in least-squares estimation and confidence ellipsoid construction, and to specify a desired neigborhood $\underline{\epsilon}(\bar{\kappa}_1,\bar{\rho})$ such that $\|\Theta_0-\Theta_*\|_F\leq \underline{\epsilon}(\bar{\kappa}_1,\bar{\rho})$, so that {ARSLO$^+(\bar{\rho})$} algorithm generates $(\kappa_*, \gamma_*)$–strongly stabilizing policies for any $\bar{\rho} \in [0,\,8)$.

\subsection{Stability Analysis of ARSLO$^+(\bar{\rho})$ Algorithm} \label{sec:analysisArsloplus}

\begin{theorem} \label{Stability_thm200} 
Fix $\bar{\rho}\in [0,\,8)$. Set the regularization parameter $\lambda$ in Algorithm \ref{Alg:ACOLC+} to
\begin{align}
    \lambda &= \frac{\sigma_{\omega}^2 \bar{c}}{80} (n+m) \log \frac{1}{\delta}, \label{eq:lambda_valuefn01}\\
    \bar{c}&= a^2 \Big( 2 \log\Big(\frac{4 a^2 n}{\delta}\Big) + 1 \Big)^2, \label{eq:cfARSLOplus}
\end{align}
where
\begin{align}
   \nonumber  a &=\frac{10240 \bar{\vartheta}_{B_*}^2\bar{\kappa}_1^{\bar{\rho}+4}\vartheta\sqrt{\sigma_{\omega}^2n (n+m)}c_z(\bar{\kappa}_1,\bar{\rho})}{\sigma_{\omega}^2},\\
    c_z^2(\kappa, \bar{\rho})&={64 \kappa^6(1+\kappa^2)} (1+\xi(\kappa,\bar{\rho}))^2\bigg(\frac{\|x_1\|}{\log \frac{1}{\delta}}+\sqrt{20n \sigma_{\omega}^2}\bigg)^2,
\end{align}
$\xi(\kappa,\bar{\rho})=2\kappa^{3-\bar{\rho}/2}$ and
\begin{align}
    \bar{\kappa}_1(\bar{\rho}) :=&\sqrt{\kappa_1^2 + 4\kappa_1^{6 - \bar{\rho}}}, \quad
    \bar{\vartheta}_{B_*} := \max\{1, \vartheta_{B_*}\}, \quad \vartheta_{B_*} \ge \|B_*\|. \label{eq:kappabar_no_seq_Def}
\end{align}

Further, let the additive perturbation noise applied to the feedback control designed via the relaxed primal SDP be
\begin{align}
    \eta_t \sim \mathcal{N}\big(0, \Gamma_t\big), \qquad
    \Gamma_t := \frac{\bar{p}_t \sigma_\omega^2 \Big( K(\mathcal{C}_t) K^\top(\mathcal{C}_t) + \frac{\|P(\mathcal{C}_t)\|}{\alpha_0}I \Big)}{\sqrt{t + \bar{c}}},\label{eq:Gamarslop}
\end{align}
where
\begin{align}
   \bar{p}_t=\frac{640 \kappa_t^{\bar{\rho}+2}\vartheta\sqrt{r_t}(\lambda+\|\sum_{k=1}^tz_kz_k^\top\|)^{1/2}}{\sigma_{\omega}^2\sqrt{t+\bar{c}}}. \label{eq:pylop}
\end{align}
Provided with an initial estimate $\Theta_0$ satisfying
\begin{align}
    \|\Theta_0 - \Theta_*\|_F \le \underline{\epsilon}(\bar{\kappa}_1, \bar{\rho}), \qquad 
   \underline{\epsilon}(\bar{\kappa}_1, \bar{\rho}) := \frac{\sigma_\omega \sqrt{2 n (n+m)}}{\sqrt{\lambda}},\label{eq:epsstst0} 
\end{align}
for any $\delta\in(0,1/3)$, with probability at least $1-3\delta$, any policy generated by Algorithm \ref{Alg:ACOLC} is \((\kappa_*, \gamma_*)\)-strongly stabilizing where
\begin{align}
    \kappa_* := \sqrt{\frac{2 \|P_*\|}{\alpha_0}}, \qquad \gamma_* := \frac{1}{2 \kappa_*^2}.
\end{align}
\end{theorem}

The rationale for expressing the parameter $\underline{\epsilon}(\bar{\kappa}_1,\bar{\rho})$ in terms of $\bar{\kappa}_1$ follows the same reasoning as in the ARSLO algorithm; this parametrization ultimately allows us to eliminate the need for any a priori bound on $\|P_*\|$ through a carefully designed warm-up phase.

The following theorem characterizes the appropriate choice of the policy update parameter~$\beta(\bar{\rho})$ that guarantees boundedness of the closed-loop system state norm.

\begin{theorem} \label{prop2} 
Let $\delta\in(0,1/3)$, and let $\underline{\epsilon}(\bar{\kappa}_1, \bar{\rho})$, $\lambda$, and $\Gamma_t$ be tuned according to Theorem \ref{Stability_thm200}. Suppose the algorithm updates the policy whenever the criterion ``$\det (V_t) > (1+\beta(\bar{\rho}))\det (V_{\tau})$'' is triggered, with $\tau $ being the last time of policy update. Let
\begin{align}
    \beta(\bar{\rho}) =\begin{cases}
        2(\kappa_{1}^2 + \kappa_{1}^{(6-\bar{\rho})})^{\frac{(6-\bar{\rho})}{4}} & \text{if } \bar{\rho}<6, \\
        2 & 6\leq\bar{\rho}<8
    \end{cases} . \label{eq:choicebeta}
\end{align}
where $\kappa_1=\sqrt{2\|P(\mathcal{C}_1)\|/\alpha_0}$. Then, the closed-loop state satisfies  
   \begin{align}
    \|x_t\|\leq \kappa_* (1-\gamma_*)^{\frac{t-1}{2}} \|x_1\|+\frac{2\kappa_*(1+\xi(\kappa_*,\bar{\rho}))}{\gamma_*}\max_{1\leq s\leq t}\|B_*\eta_s+\omega_{s+1}\|\label{eq:almoststateBound}
\end{align}
with probability at least $1-3\delta$ where $\xi(\kappa_*,\bar{\rho})=2{\kappa_*^{3-\bar{\rho}/2}}$.
\end{theorem}

\begin{remark}
    Note that for $\bar{\rho} \geq 8$, the closed-loop stability guarantees hold for any choice of $\beta(\bar{\rho})> 0$. As we observed earlier, in the case $\bar{\rho} = 8$, which corresponds exactly to the strong sequential stability scenario, we arbitrarily set $\beta(\bar{\rho}) = 1$. In fact, it can be set to any positive value if our goal is solely to ensure stability in the sense of a bounded state norm. However, as we will discuss later, the choice of $\beta(\bar{\rho})$ contributes additive terms to the regret.
\end{remark}

As an immediate consequence of Theorem \ref{Stability_thm200}, and in particular the requirement (\ref{eq:epsstst0}), the following corollary characterizes the order of magnitude of $\underline{\epsilon}(\bar{\kappa}_1,\bar{\rho})$.

\begin{corollary}\label{cl:strong}
  For ARSLO$^{+}(\bar{\rho})$ algorithm to guarantee closed-loop stability, an initial estimate $\Theta_{0}$ should lie within a $\underline{\epsilon}(\bar{\kappa}_1,\bar{\rho})$-neighborhood of $\Theta_{*}$ of order  $\mathcal{O}\!\left({1}/{\kappa_*^{h}}\right)$ where
  \begin{align}
      h=\frac{\big(\max \{3-\frac{\bar{\rho}}{2},\; 1\}\big) (22+\bar{\rho})}{2}. \label{eq:Defhforre}
  \end{align}
\end{corollary}

It is straightforward to observe that \( h \) attains its minimum value of \( h = 13 \) when \( \bar{\rho} = 4 \). This implies that if one enforces condition~\eqref{eq: verygoodRep2} by choosing \( \bar{\rho} = 4 \), then the initial estimate \( \Theta_0 \) for ARSLO\(^{+}(\bar{\rho}) \) must lie in \( \mathcal{O}(1/\kappa_*^{13}) \), which is slightly looser than the requirement \( \mathcal{O}(1/\kappa_*^{14}) \) for the ARSLO algorithm given in Corollary~\ref{cl:strongseq}. Nevertheless, as we will see in Theorem~\ref{thm:RegretBoundARSLO+}, this choice is not necessarily optimal from the perspective of regret minimization.

\subsection{Regret Bound Analysis of ARSLO$^+(\bar{\rho})$ Algorithm}

In this section, we establish an upper bound on the regret of the ARSLO$^+(\bar{\rho})$ algorithm. 

Similar to ARSLO algorithm, we first consider the following sequence of good events, $\mathcal{E}^{\bar{\rho}}_t\subseteq \mathcal{E}^{\bar{\rho}}_{t-1}\subseteq \ldots\subseteq \mathcal{E}^{\bar{\rho}}_2\subseteq \mathcal{E}^{\bar{\rho}}_1$ for any $\bar{\rho}\in [0,\,8)$ where for each $k$ 
\begin{align}
    \mathcal{E}^{\bar{\rho}}_k = \left\{\forall s = 1, \ldots, k,\quad \Theta_* \in \mathcal{C}_s(\delta) \quad\text{and}\quad \|z_s\|^2 \leq \zeta_k^{2}(\kappa_*, \bar{\rho}) \right\}\cap \mathcal{E}_0(\bar{\epsilon}(\bar{\kappa}_1, \bar{\rho})), \label{eq:Godevent12}
\end{align}
in which $\mathcal{C}_s(\delta)$ is the confidence set defined in (\ref{eq:confSet1_tighterghfff2}) and $\zeta_k^{2}(\kappa_*, \bar{\rho})$ is given by (\ref{eq:zetaValllvv}).

The event $\mathcal{E}^{\bar{\rho}}_k$ represents the good event up to round $k$, under which the true parameter remains within the confidence sets and the norms of $\{z_s\}_{s=1}^k$ remain appropriately bounded.

Our goal in this section is to upper-bound $\tilde{R}_{\text{ARSLO}^+{(\bar{\rho})}}(t)=\sum_{k=1}^t r_k  1_{\mathcal{E}^{\bar{\rho}}_k}$,
where $r_k$ is the one-step regret, which coincides with the true regret $R_{\text{ARSLO}^+{(\bar{\rho})}}(t)$ with high probability. As in the ARSLO algorithm, we can show that the good event $\mathcal{E}_t^{\bar{\rho}}$ holds with probability at least $1-5\delta$.

The regret analysis indicates that optimizing the regret bound requires balancing the upper bounds of two key components, $R_4(t)$ and $R_6(t)$, given in Appendix \ref{eq:subsecDecomp} and (\ref{eq:R4}, \ref{eq:R6}),  which leads to the choice $\bar{\rho} = 2$. The following theorem summarizes the resulting regret bound for the ARSLO$^+(\bar{\rho})$ algorithm under this consideration.

\begin{theorem} \label{thm:RegretBoundARSLO+}
Fix  $\delta \in (0,1/6)$ and let $\bar{\rho}\in[0,\,8)$. Then, with probability at least $1-6\delta$, the following holds:
\begin{align}
   R_{\text{ARSLO}^+(\bar{\rho})}(t) 
   \leq &\mathcal{O}\Big(\sqrt{\max\bigg\{\|P_*\|^{9-\frac{\bar{\rho}}{2}}\,\beta_*^2\big(\|P_*\|,\bar{\rho}\big)\,\; , \|P_*\|^{(11+\frac{\bar{\rho}}{2})}\bigg\} n^2 (n+m)^3 \, t\; \log^4\frac{t}{\delta}}\,\Big) \label{eq:dom-bnd} 
\end{align}
where 
\begin{align*}
    \beta_*\big(\|P_*\|,\bar{\rho}\big) =\begin{cases}
        \big(\|P_*\| +\|P_*\|^{(6-\bar{\rho})/2}\big)^{\frac{(6-\bar{\rho})}{4}} & \text{if } \bar{\rho}<6, \\
        2 & 6\leq\bar{\rho}<8.
    \end{cases} 
\end{align*}

For $\bar{\rho}_* = 2$, the regret bound achieves its optimum:
\begin{align}
   R_{\text{ARSLO}^+(\bar{\rho}_*)}(t) 
   \leq & \mathcal{O}\Big(\sqrt{\, n^2 (n+m)^3 \; \|P_*\|^{12}\; t \;\log^4\frac{t}{\delta} }\,\Big). \label{eq:optmizeReg}
\end{align}
\end{theorem}

A detailed expression of the regret upper bound is provided in Appendix \ref{sec:regasloplusap}. In general, for a given choice of $\bar{\rho}\in [0,8)$, the dominant decomposed regret term may be either $R_4(t)$ or $R_6(t)$, which leads to the overall regret bound taking the form in (\ref{eq:dom-bnd}). Balancing these two terms is achieved at $\bar{\rho} = \bar{\rho}* = 2$. We denote the corresponding algorithm as ARSLO$^+(\bar{\rho}_*)$, with its bound given in (\ref{eq:optmizeReg}).
Comparing this algorithm with ARSLO, we observe that the regret dependence on $\|P_*\|$ improves from a power of $8$ to $6$, thanks to removing the strong sequential-stability requirement and incorporating a dwell-time–inspired approach for tuning the exploration–exploitation trade-off. A tighter analysis could further reduce this dependence to a power of approximately $4.5$, but at the cost of an exponential dependence on $\|P_*\|$ in the runtime. Setting this aside, the $\mathcal{O}(\|P_*\|^6)$ dependence of ARSLO$^+(\bar{\rho}_*)$, compared to the $\mathcal{O}(\|P_*\|^{5.5})$ dependence in \cite{simchowitz2020naive}, is slightly higher; however, it is worth emphasizing that ARSLO$^+(\bar{\rho}_*)$ provides anytime regret guarantees along with precise, explicit, high-probability guarantees on the state-norm trajectory, which may be valuable in many other control domains.

The parameter $\bar{\rho}$ is also useful for considering the trade-off between the state-norm bound in (\ref{eq:almoststateBound}) and the regret bound in (\ref{eq:dom-bnd}) when designing the algorithm. For example, in the regret-optimal algorithm ARSLO$^+(\bar{\rho}_*)$, while the dependence of the regret on $\|P_*\|$ is optimal, the state-norm dependence is $\mathcal{O}(\|P_*\|^{2.5})$. In contrast, for the sequential strong stability case with $\bar{\rho}=8$, which is less efficient in terms of regret, this dependence is of order $\mathcal{O}(\|P_*\|^{1.5})$.

\section{Warm-up Phase} \label{sec:warmup}
Recall that ARSLO and ARSLO$^{+}(\bar{\rho})$ require the initial estimate $\Theta_0$ to lie within a prescribed neighborhood of $\Theta_*$, i.e.,
\begin{align*}
    \|\Theta_0 - \Theta_*\|_F \le \varepsilon.
\end{align*}
For ARSLO, it has been shown that $\varepsilon = \bar{\epsilon}(\bar{\kappa}_1)$, where $\bar{\epsilon}(\bar{\kappa}_1)$ is given in~\eqref{def:bareps}, and
\begin{align*}
    \bar{\kappa}_1 = \sqrt{\kappa_1^2 + \frac{1}{2\kappa_1^2}},
\end{align*}
as defined in~\eqref{eq:barkapseq_Def}.

For ARSLO$^{+}(\bar{\rho})$, on the other hand, the requirement becomes $\varepsilon = \underline{\epsilon}(\bar{\kappa}_1, \bar{\rho})$, where $\underline{\epsilon}(\bar{\kappa}_1, \bar{\rho})$ is given in~\eqref{eq:epsstst0}, and 
$\bar{\kappa}_1$ is defined in (\ref{eq:kappabar_no_seq_Def}) as
\begin{align*}
    \bar{\kappa}_1 = \sqrt{\kappa_1^2 + 4\kappa_1^{6 - \bar{\rho}}}.
\end{align*}

\RestyleAlgo{ruled}
\begin{algorithm} \label{alg:warmUp}
\caption{Warm-Up Phase}

\textbf{Inputs:} $\vartheta$, $K_0$, and $\bar{\rho} \in [0,\,8)$ (when running ARSLO$^{+}(\bar{\rho})$).

Set $\varepsilon=c$ \quad \tcp{$c$ is chosen sufficiently small and is used only for initialization}

\While{$\epsilon_w(t) > \varepsilon$}{
    
    Receive state $x_t$

    Play input $u_t = K_0 x_t + \nu_t$ where $\nu_t \sim \mathcal{N}(0, 2\sigma_{\omega}^2\kappa_0^2 I)$

    Observe $x_{t+1}$

    Compute $\bar{V}_t$ and then $\hat{\Theta}_t^{w}$ using \eqref{eq:LSE_Sopp}, followed by $r_t^w$ using \eqref{eq:rwwarm}  

    Compute $\epsilon_w(t)$ using \eqref{eq:radiuswarmup} and construct the confidence ball 
    $\mathcal{C}_t^{w}$ 

    Solve the dual SDP for $P(\mathcal{C}_t^{w})$

    Compute $\kappa_1 = \sqrt{{2\|P(\mathcal{C}_t^{w})\|}/{\alpha_0}}$
    
    \tcp{Compute initial-radius thresholds for ARSLO or ARSLO$^+(\bar{\rho})$}
    For ARSLO Compute $\bar{\kappa}_1$ and $\bar{\epsilon}(\bar{\kappa}_1)$  
    using~\eqref{eq:barkapseq_Def} and~\eqref{def:bareps} then set $\varepsilon=\bar{\epsilon}(\bar{\kappa}_1)$

  For ARSLO$^{+}(\bar{\rho})$ compute $\bar{\kappa}_1$ and $\underline{\epsilon}(\bar{\kappa}_1, \bar{\rho})$ 
    using~\eqref{eq:kappabar_no_seq_Def} 
    and~\eqref{eq:epsstst0} then set $\varepsilon=\underline{\epsilon}(\bar{\kappa}_1, \bar{\rho})$

    Set $\Theta_0 = \hat{\Theta}_t^{w}$
}

\textbf{Output:} $\Theta_0$, $\bar{\kappa}_1$, and $\bar{\epsilon}(\bar{\kappa}_1)$ (or $\underline{\epsilon}(\bar{\kappa}_1, \bar{\rho})$)
\end{algorithm}

For both cases, $\kappa_1 = \sqrt{{2\|P(\mathcal{C}_1)\|}/{\alpha_0}}$
where $P(\mathcal{C}_1)$ is the solution of the relaxed dual SDP program associated with the initial ellipsoid (ball) $\mathcal{C}_1$. This ellipsoid is precisely the desired initial estimate, $\|\Theta - \Theta_0\|_F \leq \varepsilon$. Therefore, the warm-up phase must perform sufficient system identification to produce an initial estimate $\Theta_0$ and verify that it lies inside the required initial neighborhood. 

  Algorithm \ref{alg:warmUp}, employs an initial $(\kappa_0,\gamma_0)$-strongly stabilizing policy $K_0$ and applies the input $u_t=K_0 x_t+\nu_t$, where $\nu_t \sim \mathcal{N}(0, 2\sigma_{\omega}^2\kappa_0^2 I)$. Using the collected data pairs $\{(z_t,x_t)\}$, the algorithm constructs a confidence ellipsoid of the form
\begin{align*}
\operatorname{Tr}\!\left((\Theta - \hat{\Theta}^w_t)^\top \bar{V}_t (\Theta - \hat{\Theta}^w_t)\right) \le r_t^w,
\end{align*}
which is analogous to the ellipsoid described in Section~\ref{sec:prelim}, except that the least-squares estimator is regularized with respect to $\|\Theta\|_F^2$ using a regularization parameter $\rho$. The center of the confidence ellipsoid is computed by
\begin{align}
\hat{\Theta}_t^w &= \bar{V}_t^{-1} Z_t^\top X_t, \label{eq:LSE_Sopp} \\
\bar{V}_t &= \rho I + Z_t^\top Z_t,
\end{align}
where $Z_t$ and $X_t$ are constructed as described in Section~\ref{sec:prelim}. The radius parameter $r_t^w$ is defined as
\begin{align}
r_t^w =
\left(
\sigma_{\omega} \sqrt{2n \log \frac{n \det(\bar{V}_t)}{\delta \det(\rho I)}} 
+ \sqrt{\rho}\,\psi_{\Theta_*}
\right)^2,
\label{eq:rwwarm}
\end{align}
where in construction we applied $\|\Theta_*\|_F \le \psi_{\Theta_*}$. In particular, one may set $\psi_{\Theta_*} = \sqrt{n}\,\vartheta$, in which case the regularization parameter is chosen as $\rho = \sigma_{\omega}^2 \vartheta^{-2}$.

The largest confidence ball contained in this ellipsoid is then defined as
\begin{align}
\mathcal{C}_t^w(\delta)
:= \left\{ \Theta \,\middle|\, \|\Theta - \hat{\Theta}_t^w\|_F \le \epsilon_w(t) \right\},
\end{align}
where $\Theta_*$ belongs to it with probability at least \(1-2\delta\), where 
\begin{align}
\epsilon_w(t) = \sqrt{\frac{r_t^w}{\lambda_{\min}(\bar{V}_t)}}.
\label{eq:radiuswarmup}
\end{align}
and the probability is obtained via a union bound over the failure events corresponding to the state-norm bound and the lower bound on the covariance matrix, each occurring with probability at most \(\delta\).

The warm-up algorithm monitors when the confidence ball $\mathcal{C}_t^w(\delta)$ (or $\mathcal{C}_t^w$) is contained within a desired neighborhood $\mathcal{C}_1=\{\Theta \mid \|\Theta - \Theta_*\|_F \le \varepsilon\}$. A sufficient condition for this is $\epsilon_w(t) \le \varepsilon$. Specifically, at each iteration of the ``while'' loop, the algorithm uses the confidence ball $\mathcal{C}_t^w$ to solve the associated dual SDP for $P(\mathcal{C}_t^w)$. The solution is then used to compute $\kappa_1 = \sqrt{{2\|P(\mathcal{C}_t^w)\|}/{\alpha_0}}$ and subsequently $\bar{\kappa}_1$ either via \eqref{eq:barkapseq_Def} or \eqref{eq:kappabar_no_seq_Def}. Using $\bar{\kappa}_1$, the algorithm computes $\varepsilon$ (either $\bar{\epsilon}(\bar{\kappa}_1)$ or $\underline{\epsilon}(\bar{\kappa}_1, \bar{\rho})$) and evaluates the termination condition $\epsilon_w(t) \le \varepsilon$. This process continues until a time $t_w$ at which the condition is satisfied. The algorithm then outputs $\Theta_0 = \hat{\Theta}(t_w)$
 which lies within the desired neighborhood for initializing the ARSLO (or ARSLO$^+(\bar{\rho})$) algorithm. This guarantees that the subsequent learning phase satisfies the promised stability and regret guarantees. The following remark characterizes the scaling of the warm-up duration $t_w$.

\begin{remark} \label{rem:psvd}
From the preceding analysis in Theorem \ref{thm:reg_warmup}, we have 
$\lambda_{\min}(\bar{V}_t) = \mathcal{O}(t)$ and 
$r_t^w(t) = \mathcal{O}(\log t)$. 
Hence, there exists a finite time $t_w$ such that condition~\eqref{eq:radiuswarmup} is satisfied. 
Moreover, from \eqref{eq:radiuswarmup}, it follows directly that the warm-up phase duration satisfies 
\( t_w = \mathcal{O}(1/\varepsilon^2) \).
\end{remark}

The regret‐optimal algorithm, ARSLO\(^{+}(\bar{\rho}_*)\), which is attained when \(\bar{\rho} = \bar{\rho}_* = 2\) by Corollary~\ref{cl:strong}, requires an initial estimate in \(\mathcal{O}(1/\kappa_*^{24})\). This is significantly tighter than the requirement for the ARSLO algorithm, which by Corollary~\ref{cl:strongseq} is \(\mathcal{O}(1/\kappa_*^{14})\). Referring to Remark \ref{rem:psvd}, the duration of the warm-up phase is proportional to $1/\varepsilon^2$. Consequently, the warm-up phase for the ARSLO algorithm may take shorter than that of ARSLO\(^{+}(\bar{\rho}_*)\). However, it is worth noting that the regret accumulated during the warm-up phase contributes only as a constant term in the overall regret upper bound. What truly matters is the regret of the main algorithm, and a direct comparison of the regret bounds shows that ARSLO\(^{+}(\bar{\rho}_*)\) outperforms the ARSLO algorithm. The following theorem summarizes the regret bound of the warm-up phase.

\begin{theorem}\label{thm:reg_warmup}
Let $\delta\in(0,1/2)$. Then, probability at least $1-2\delta$, the following statements hold.

The regret incurred during the warm-up phase of the ARSLO algorithm satisfies
  \begin{align*}
    R_{\text{warm-up}}(t_w)
    \le \mathcal{O}\big(
        \frac{\kappa_0^6\, \alpha_1\, \bar{\vartheta}_{B_*}^8}{\gamma_0^2}
        (n+m)^5 \|P_*\|^{14}
    \big),
\end{align*}
The regret incurred during the warm-up phase of the ARSLO$^{+}(\bar{\rho})$ algorithm satisfies
 \begin{align*}
    R_{\text{warm-up}}(t_w, \bar{\rho})
    \le \mathcal{O}\big(
        \frac{\kappa_0^6\, \alpha_1\, \bar{\vartheta}_{B_*}^8}{\gamma_0^2}
        (n+m)^6 \|P_*\|^{h}
    \big)
\end{align*}
where $h$ is defined in (\ref{eq:Defhforre}).
\end{theorem}

 \section{Discussion and Conclusion}\label{philosophy}

In most existing algorithms, due to their specific regret decomposition, the state norm does not directly affect the dominant terms of the regret and usually appears only in $\mathcal{O}(\log T)$ terms. As a result, these works typically do not aim to derive tight high-probability bounds on the state norm trajectory. Their analyses show that stability in the Lyapunov sense is sufficient to control the regret bound, and they often provide only crude bounds on the state norm by arguing that as the length of epochs increases, the effect of policy updates diminishes in the long run. In contrast, we found that carefully studying the boundedness of the state norm is necessary, as it provides strong control over the state trajectory and makes the algorithm, with its anytime regret guarantee, transferable to a wider range of control settings. In our work, we observed that while imposing a strong sequentiality notion can help to explicitly talk about the state norm bound, the resulting regret bound might not be optimal. This is because strong sequentiality requires additional constraints on the algorithm to keep subsequently generated policies close to each other. We proposed an alternative idea: let the algorithm generate only stabilizing policies, and handle concerns about state explosion due to policy updates using a dwell-time--inspired approach. We noticed that if the focus is purely on minimizing regret, this extreme approach is also inefficient. The optimal regret bound is achieved when the algorithm ensures that two subsequently generated policies are \emph{neither too far apart nor too close}. This reveals a fundamental trade-off between state norm control and regret upper bounds. Our algorithm lies at the intersection of two broad classes of methods: self-tuning regulator--based algorithms, such as CE-based approaches \cite{mania2019certainty, simchowitz2020naive, jedra2022minimal}, and OFU-based algorithms, such as \cite{abeille2020efficient, cohen2019learning}. While our method is in the spirit of OFU, it also incorporates elements of input perturbation methods. This combination allowed us to overcome one of the remaining challenges of computationally efficient OFU-based algorithms: the requirement for an a priori bound on the optimal average expected cost, i.e., $\|P_*\| \le D$. Such a requirement is unnecessary for self-tuning regulator--based algorithms, and by bridging these two approaches, we were able to remove this assumption for OFU-based methods. Finally, although the techniques developed in this work are derived for LQ control with regret guarantees, they may be extended to more general learning-based nonlinear control schemes. In such settings, where policy updates occur over time, the resulting closed-loop dynamics can naturally be interpreted as switched systems. This perspective suggests a principled pathway for relating closed-loop stability, performance, and the exploration–exploitation trade-off through tools from switched-systems theory, such as dwell-time.

\acks{
 This work was supported in part by a C3.AI DTI Research Grant. 

}

\newpage



\startcontents[mainsections]
\printcontents[mainsections]{l}{1}{\section*{Content of Appendices}\setcounter{tocdepth}{2}}



\appendix
\newpage
\section{Discussion of Related Works} \label{secMoreDiscu}
One useful and practical categorization of regret guaranteed algorithms for LQ control, also adopted by \cite{jedra2022minimal}, is to group existing methods under two main umbrellas: (i) self-tuning regulators and (ii) OFU-based approaches.

Self-tuning regulator–based algorithms operate as follows. At each step, they
estimate the unknown matrices $(A_*,B_*)$, treat these estimates as the true
parameters, solve the DARE, and design the control accordingly. To ensure
sufficient excitation of the system—and thus enable learning of $(A_*,B_*)$—the
control inputs are typically perturbed with appropriately tuned noise. Many
existing works, including \cite{mania2019certainty}, \cite{faradonbeh2020input},
\cite{simchowitz2020naive}, and \cite{jedra2022minimal}, fall within this
category, and these methods are computationally efficient.

The second class consists of OFU-based algorithms, in which a high-probability
confidence set for $(A_*,B_*)$ is constructed, and a control is designed by
playing optimistically with respect to this set. Compared to robust control
schemes, which follow a \textit{min--max} paradigm, these methods can be viewed
as \textit{min--min} algorithms. OFU-based approaches were initially and
predominantly developed in the context of bandits
\cite{auer2002finite,abbasi2011improved}. The first application of such ideas to
the LQ control setting appears in \cite{campi1998adaptive}, which provides only
asymptotic guarantees. Along this line of research, \cite{abbasi2011online}
proposed the first regret-guaranteed algorithm for LQ control.
Later works aimed to relax certain assumptions or burdens of this algorithm. For
example, \cite{faradonbeh2017finite} extends the approach to stabilizable systems
and relaxes the Gaussian process noise assumption to a class of heavy-tailed noise
with arbitrary correlation structures. \cite{lale2022reinforcement} reduces the
dimensional dependence of the regret upper bound from exponential to polynomial.
However, neither of these works addresses the most fundamental drawback of
\cite{abbasi2011online}, namely its computational inefficiency.
 In response to this, \cite{cohen2019learning} and \cite{abeille2020efficient} proposed
computationally efficient algorithms. The former is based on a relaxation of the
SDP-based formulation of LQ control using the constructed confidence ellipsoid,
while the latter relies on extended value iteration (EVI) followed by a Lagrangian
relaxation. To the best of our knowledge, these two works constitute the only
computationally efficient OFU-based algorithms for LQ control.

Our work sits precisely at their intersection of these categories. On one hand, it embodies the spirit of OFU-based algorithms, building a confidence ellipsoid and playing optimistically with respect to this set to design the controller. On the other hand, our proposed algorithm applies carefully tuned input perturbation noise, which enables guaranteed regret bounds and stability in real time.

We are primarily interested in conducting the comparison within the category of computationally efficient algorithms. Our contributions can be summarized under three main points:
(i) establishing an anytime regret guarantee; (ii) providing a high-probability bound on the state trajectory and discussing the trade-offs that arise when relaxing strong sequential stability; and (iii) removing the need for any a priori bound on the optimal average expected cost $J_*$, or equivalently, on $\|P_*\|$.

One important aspect of our algorithm is its ability to provide an anytime regret guarantee. Most existing methods are designed for a fixed horizon, and their extension to the infinite-horizon setting typically relies on the doubling trick. However, this approach does not fully resolve the issue, since each epoch still operates with a fixed horizon. As discussed earlier, the foundational line of regret-guaranteed algorithms begins with\cite{abbasi2011regret}, which offers an anytime guarantee but suffers from high computational complexity and an exponential dependence on the system dimension. Although more recent methods have addressed the computational challenges of that approach, they remain horizon-dependent, whereas \cite{abbasi2011regret} inherently provides an anytime guarantee. In contrast, our method is both computationally efficient and anytime. Among existing Self-tuning regulator–based works, the only method that has anytime-style guarantee is the one by \cite{jedra2022minimal}, but its guarantees apply only to the expected regret. In the line of OFU-based algorithms, to the best of our knowledge, the only two computationally efficient existing methods are those of \cite{cohen2019learning} and \cite{abeille2020efficient}; however, both are horizon-dependent. This makes our approach the first computationally efficient OFU-style algorithm that also provides an anytime regret guarantee.

The next important guarantee concerns whether an algorithm can provide a fully system-theoretic high-probability upper bound on the state norm trajectory. While overall stability is ensured by all the existing algorithms which is appeared to be enough to guarantee the promised regret bound, however, an explicit high probability bound on state norm is still need attention. In CE-based methods \cite{jedra2022minimal} and \cite{simchowitz2020naive} the overall stability is established using a common Lyapunov function. However, the former work does not provide an explicit high probability upper-bound on state norm, while the latter introduces a crude bound which is not precise. We refer the readers to Lemma 5.3 of \cite{simchowitz2020naive} and its proof, where the crude bound is obtained by apply $\max_{k \ge 1} \; k(1-\rho)^k \;\lesssim\; \frac{1}{\rho}$ for some $\rho<1$ but we know that $\lesssim$ absorbs a constant that might be dependent on $\rho$. This crude bound does not affect the regret bound as it appears in $\mathcal{O}(\log T)$ terms. Even when an algorithm generates stabilizing policies, updating from one policy to another can cause temporary spikes, since the next generated policy is not necessarily better than the current one unless the algorithm is explicitly designed to prevent this. Such an enforcement has been proposed by \cite{faradonbeh2017finite} and \cite{abeille2020efficient}. In these approaches, an initial system identification phase is used to obtain a sufficiently accurate estimate, followed by the fulfillment of explicit accuracy conditions to guarantee sequential stability. While the former approach does not provide an explicit state norm bound, the latter provides a state norm bound that relies on an a priori bound on $J_*$, making it not fully system-theoretic and therefore less reliable. Similarly, \cite{cohen2019learning} introduced the notion of strong sequential stability. In this framework, the algorithm is designed to generate stabilizing policies while ensuring that any two consecutively generated policies remain sufficiently close, thereby preventing spikes in the state norm during policy updates. This technique requires an appropriate initialization and the fulfillment of explicit conditions during the implementation of the main algorithm. Overall, the OFU-based algorithms in \cite{faradonbeh2017finite}, \cite{abeille2020efficient}, and \cite{cohen2019learning} share a common strategy: obtaining a sufficiently accurate initial estimate, followed by the enforcement of explicit conditions thereafter. While we provide an anytime and fully system-theoretic version of the guarantee of \cite{cohen2019learning}, we also show that this strategy is not efficient in terms of the regret bound. By incorporating dwell-time techniques, we demonstrate how to improve the regret bound while maintaining explicit control over the state trajectory.

We also compare algorithms in terms of the a priori information or bounds required for their implementation. In particular, we focus on requirements such as an upper bound on $\|P_*\|$ and the horizon, which are explicitly indicated in a separate column of the table specifying whether the algorithm is anytime or not. Notably, in our proposed algorithm, the parameter $\delta$ is fully independent of the implementation time and can be set to any arbitrary value, making it effectively non-restrictive. Almost all computationally efficient algorithms require an initial stabilizing policy $K_0$. In this work, we eliminate the need for prior access to the optimal average cost $J_*$, a requirement commonly present in existing computationally efficient OFU-based algorithms. While this issue has been addressed in certainty-equivalence methods, as studied in \cite{simchowitz2020naive, jedra2022minimal}, it remains unresolved in current computationally efficient OFU-based algorithms, including \cite{abeille2020efficient, cohen2019learning}. Such a requirement makes the performance of these algorithms dependent on an external bound that may be loose, which is reflected in their regret upper bounds. We address this limitation by introducing a warm-up phase that iteratively solves the dual SDP problem to detect when the parameter estimate enters the desired ball, whose radius depends on $\|P_*\|$.

Table \ref{tbl:comp} compares our algorithm with several notable algorithms from the literature. We specifically selected these works because they either provide an explicit regret upper bound or their estimated regret upper bound has been reported by other studies.

\begin{table}[t]
\centering
\scriptsize
\setlength{\tabcolsep}{3pt}

\begin{threeparttable}
\caption{Comparison of existing methods.}
\label{tbl:comp}

\begin{tabular}{l l c c c c}
\toprule
Paper & Method & Regret upper-bound w.p. $1-\delta$ & Any-time & State norm w.p.
& \begin{tabular}{c}Required\\ information\end{tabular} \\
\midrule

\cite{simchowitz2020naive}$^\star$ & CE 
  & $\widetilde{\mathcal{O}}\!\left(\sqrt{nm^{2}\|P_{*}\|^{11}T\log(1/\delta)}\right)$ 
  & $\times$ & $\times$ & $K_0$ \\

\cite{jedra2022minimal}$^\diamondsuit$ & CE  
  & $\widetilde{\mathcal{O}}\!\left(\sqrt{nm^{2}\|P_{*}\|^{10.5}T\log^2 T}\right)$ 
  & $\checkmark$ & $\times$ & $K_0$ \\

\cite{cohen2019learning} & OFU  
  & $\widetilde{\mathcal{O}}\!\left(\sqrt{n^{4}(n+m)^{3}D^{10}T\log^{4}(T/\delta)}\right)$ 
  & $\times$ & $\checkmark$ & $K_0, Tr(P_*)\leq D$ \\

\cite{abeille2020efficient}$^\star$ & OFU 
  & $\widetilde{\mathcal{O}}\!\left(\sqrt{n(n+m)^2D^{3}\|P_{*}\|^{4}T\log^{3}(T/\delta)}\right)$ 
  & $\times$ & $\times$ & $K_0, Tr(P_*)\leq D$ \\

ARSLO & OFU  
  & $\widetilde{\mathcal{O}}\!\left(\sqrt{n^3(n+m)^2 \|P_*\|^{16}T\log^4(T/\delta)}\right)$ 
  & $\checkmark$ & $\checkmark$ & $K_0$ \\

ARSLO$^{+}(\bar{\rho}_{*})$ & OFU  
  & $\widetilde{\mathcal{O}}\!\left(\sqrt{n^3(n+m)^2 \|P_*\|^{12}T\log^4(T/\delta)}\right)$ 
  & $\checkmark$ & $\checkmark$ & $K_0$ \\
\bottomrule
\end{tabular}

\begin{tablenotes}
\footnotesize
\item[$^\diamondsuit$] This regret bound holds in expectation.
\item[$^\star$]These works provide only a crude high-probability bound on the state norm.
\end{tablenotes}

\end{threeparttable}
\end{table}

\section{Preliminaries}\label{sec:suppmat}




\subsection{Primal and Dual SDP Formulation of LQR}
 \label{sec:primaldualSDP}

Consider the standard LQR problem of (\ref{eq:dyn_atttt}) and (\ref{eq:costrep}). Assuming controllability of $(A_*, B_*)$, it is known that the optimal sequence minimizing average expected cost (\ref{eq:avExpC}) is $u^*=\{u^*_t\}_{t=1}^{T-1}$, where $u^*_t=K_*(\Theta*)x_t$ and
\begin{align}
K_*(\Theta_*)= -(B_*^\top P(\Theta_*)B_*+R)^{-1}B_*^\top P(\Theta_*)A_*. \label{OptFeed}
\end{align}
In (\ref{OptFeed}), $P(\Theta_*)$ denotes the unique solution for the Discrete Algebraic Riccati Equation (DARE), computed by
\begin{align}
P(\Theta_*) = Q + A_*^\top P(\Theta_*)A_* - A_*^\top P(\Theta_*)B_*(B_*^\top P(\Theta_*)B_*+R)^{-1}B_*^\top P(\Theta_*)A_*. \label{eq:DARE}
\end{align}
It is known that the optimal average expected cost is $J_{*} = P(\Theta_*) \bullet W$.

This problem can be transformed into the Semi-Definite Programming (SDP) formulation in the following manner.

Observing that a steady-state joint state and input distribution $(x_{\infty},u_{\infty})$ exists for any stabilizing policy $\pi$, we denote the covariance matrix of this joint distribution as $\mathcal{E}(\pi)$, which is defined as follows:
\begin{align*}
\mathcal{E}(\pi)= \mathbb{E}\begin{pmatrix}
x_{\infty}x_{\infty}^\top & x_{\infty}u_{\infty}^\top \\
u_{\infty}x_{\infty}^\top & u_{\infty}u_{\infty}^\top
\end{pmatrix}
\end{align*}
where $u_{\infty}=\pi(x_{\infty})$ and the average expected cost of the policy is given as follows

\begin{align*}
J(\pi)= \begin{pmatrix}
Q & 0 \\
0 & R
\end{pmatrix}\bullet \mathcal{E}(\pi).
\end{align*}

Now, for a linear stabilizing policy $K$ that maps $u_{\infty}=Kx_{\infty}$, the covariance matrix of the joint distribution takes the following form:

\begin{align}
&\mathcal{E}(K)=\begin{pmatrix}
X & XK^\top \\
KX & KXK^\top
\end{pmatrix}. \label{eq:good123er}
\end{align}
where $X=\mathbb{E}[x_{\infty}x_{\infty}^\top]$. Then the average expected cost is computed as follows 
\begin{align*}
J(K)= \begin{pmatrix}
Q & 0 \\
0 & R
\end{pmatrix}\bullet \mathcal{E}(K)=(Q+K^\top R K)\bullet X.
\end{align*}

Now given $\Theta_*=(A_*\; B_*)^\top$, the LQR control problem can be formulated in a SDP form, as follows:
\begin{align}
\nonumber\textrm{minimize}_{\Sigma}\; \; \; \begin{pmatrix}
Q & 0 \\
0 & R
\end{pmatrix}\bullet \Sigma\\
\nonumber\textrm{S.t.}\; \; \; \Sigma_{xx}={\Theta}_*^\top\Sigma\Theta_*+W\\
\Sigma\succ 0\label{eq:SDPKhali}
\end{align}
where any feasible solution $\Sigma$ can be represented as follows:
\begin{align}
  \Sigma=\begin{pmatrix}
\Sigma_{xx} & \Sigma_{xu} \\
\Sigma_{ux} & \Sigma_{uu}
\end{pmatrix}  
\end{align}
in which $\Sigma_{xx} \in  \mathbb{R}^{n \times n}$, $\Sigma_{uu} \in  \mathbb{R}^{m \times m}$, and $\Sigma_{ux}=\Sigma_{xu} \in  \mathbb{R}^{m \times n}$. The optimal value of (\ref{eq:SDPKhali}) coincides with the average expected cost $J_*$, and for $W\succ 0$ (ensuring $\Sigma_{xx}\succ 0$), the optimal policy of the system, denoted as $K_*$ and associated with the optimum solution $\Sigma^*$, can be derived from $K_*=\Sigma^*_{{ux}}\Sigma^{*^{-1}}_{{xx}}$ considering (\ref{eq:good123er}).

The dual SDP problem is written as follows
\begin{align}
\begin{array}{rrclcl}
\displaystyle \max & \multicolumn{1}{l}{P\bullet W}\\
\textrm{s.t.} & \begin{pmatrix}
Q-P & 0 \\
0 & R
\end{pmatrix}+\Theta_* P {\Theta^\top_*}=0\\
&P\succeq 0 
\end{array} \label{eq:equal_dual}
\end{align}
Let $P_*$ denote the optimal solution of (\ref{eq:equal_dual}) and let the optimal input be $u_t^*=K_* x_t$. Then define
\begin{align*}
    z_t=\begin{pmatrix}
I \\
K_*
\end{pmatrix} x_t.
\end{align*}
Multiplying both sides of the optimal equality  (\ref{eq:equal_dual}) by $z_t^\top$ and $z_t$ yields
\begin{align}
   P_*=Q+K_*^\top R K_*+(A_*+B_*K_*)^\top P_* (A_*+B_*K_*) \label{eq:promRic}
\end{align}
which is the prominent optimal Riccati equation. Therefore, the dual solution $P_*$ is indeed the solution of the optimal Riccati equation.

\subsection{$(\kappa, \gamma)-$Stability}

\label{sec:kapgamdef}

To gain a deeper understanding of $(\kappa, \gamma)$-stability, we present illustrative proofs from Lemma B.1 of \cite{cohen2018online}. The claim to be shown is that for any linear time-invariant system defined by the pair $(A_*, B_*)$, a stabilizing policy $K$ ensures $(\kappa, \gamma)$-strong stability in the sense of Definition \ref{def:sequentially}. The closed-loop system achieves stability if $\rho(A_*+B_*K)= 1-\gamma$, for some $0<\gamma<1$. Now defining matrix $Q=(1-\gamma)^{-1}(A_*+B_*K)$, the stability of $Q$ is guaranteed if a positive definite matrix $P$ exists such that
\begin{align}
Q^\top P Q\preceq P. \label{komak}
\end{align}

By definition, when $Q$ is stable, then $(A_*+B_*K)$ is also stable. Inequality (\ref{komak}) can be further rewritten as:
\begin{align}
(A_*+B_*K)^\top P(A_*+B_*K)\preceq (1-\gamma)^2 P. \label{eq:komak2}
\end{align}

Pre and post multiplying both sides of (\ref{eq:komak2}) by $P^{-\frac{1}{2}}$ and $P^{\frac{1}{2}}$ yields 
\begin{align}
L^\top L\preceq (1-\gamma)^2 . \label{eq:komak3}
\end{align}
where $L=P^{\frac{1}{2}}(A_*+B_*K)P^{-\frac{1}{2}}$. Then defining $H=P^{-\frac{1}{2}}$, one can write
\begin{align*}
HLH^{-1}=A+BK.
\end{align*}

Setting the condition number of $P^{-\frac{1}{2}}$ to $\kappa$, i.e., $\|H\|\|H^{-1}\|\leq \kappa$ and having $\|L\|\leq 1-\gamma$ from  (\ref{eq:komak3}) complete proof. 




\section {Helpful lemmas}

\begin{lemma} (Lemma 25 of \cite{cohen2019learning}) \label{lem:komaki}
    Let $X$ and $Z$ denote matrices of equal size and let $Y$ denote a $(\kappa, \gamma)$ stable matrix such that $X\preceq Y^\top X Y+Z$, then $X\preceq \frac{\kappa^2}{\gamma}\|Z\| I$.
\end{lemma}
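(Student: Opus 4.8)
The plan is to prove Lemma \ref{lem:komaki} by iterating the given inequality and exploiting the $(\kappa,\gamma)$-stability of $Y$ to sum a geometric series. The starting point is the hypothesis $X \preceq Y^\top X Y + Z$. Since this is a fixed-point-style inequality in the positive semidefinite ordering, the natural move is to substitute the bound on $X$ into itself repeatedly, producing a telescoping expansion in powers of $Y$.

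First I would unroll the recursion. Applying the inequality once inside itself gives $X \preceq Y^\top(Y^\top X Y + Z)Y + Z = (Y^2)^\top X (Y^2) + Y^\top Z Y + Z$, and iterating $k$ times yields
\begin{align*}
X \preceq (Y^k)^\top X (Y^k) + \sum_{j=0}^{k-1} (Y^j)^\top Z\, (Y^j).
\end{align*}
The monotonicity of the map $M \mapsto Y^\top M Y$ with respect to the Loewner order justifies each substitution step. Next I would control the two resulting pieces separately using the decomposition $Y = HLH^{-1}$ with $\|L\|\le 1-\gamma$ and $\|H\|\|H^{-1}\|\le \kappa$ guaranteed by Definition \ref{def:sequentially}. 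Writing $Y^j = H L^j H^{-1}$, we get the operator-norm bound $\|Y^j\| \le \|H\|\|L^j\|\|H^{-1}\| \le \kappa (1-\gamma)^j$, so each term satisfies $(Y^j)^\top Z (Y^j) \preceq \|Z\| \|Y^j\|^2 I \preceq \kappa^2 (1-\gamma)^{2j} \|Z\| I$.

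Then I would take the limit $k \to \infty$. Since $\|Y^k\| \le \kappa(1-\gamma)^k \to 0$, the residual term $(Y^k)^\top X (Y^k)$ vanishes (its norm is bounded by $\kappa^2(1-\gamma)^{2k}\|X\| \to 0$), leaving
\begin{align*}
X \preceq \Big(\sum_{j=0}^{\infty} \kappa^2 (1-\gamma)^{2j}\Big) \|Z\| I = \frac{\kappa^2}{1-(1-\gamma)^2} \|Z\| I.
\end{align*}
Finally, since $1-(1-\gamma)^2 = 2\gamma - \gamma^2 = \gamma(2-\gamma) \ge \gamma$ for $0 < \gamma < 1$, we obtain the claimed bound $X \preceq \frac{\kappa^2}{\gamma}\|Z\| I$.

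The main obstacle I anticipate is purely a matter of care rather than depth: ensuring that each PSD-order substitution is valid (the map $M \mapsto Y^\top M Y$ preserves $\preceq$ but one must confirm $Z$ and the iterated terms remain compatible) and that the tail term genuinely vanishes, which requires knowing $X$ is bounded so that $\|(Y^k)^\top X (Y^k)\| \to 0$. If $X$ were not a priori bounded, one would instead argue that the partial sums form a monotone bounded sequence and pass to the limit via the residual's decay; but the spectral decay of $Y^k$ handles this cleanly regardless. The geometric-sum constant $\gamma(2-\gamma)\ge\gamma$ is the only place a small inequality is invoked to simplify the denominator to the stated form.
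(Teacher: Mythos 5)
Your proof is correct and is essentially the standard argument: the paper itself gives no proof of Lemma \ref{lem:komaki} (it simply cites Lemma 25 of \cite{cohen2019learning}), and the cited proof proceeds by exactly your unrolling of the recursion, the bound $\|Y^j\|\leq\kappa(1-\gamma)^j$ from the $H L H^{-1}$ decomposition, and the geometric series with $1-(1-\gamma)^2\geq\gamma$. The only implicit assumption worth flagging is that $X$ and $Z$ are symmetric so that $Z\preceq\|Z\|I$ and the Loewner comparisons make sense, which is how the lemma is used throughout.
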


The following lemma, adapted from \cite{abbasi2011online} gives a self-normalized bound for scalar-valued martingales. 
 
\begin{lemma} \label{Self_normalized_Bound_1}
	 Let $F_k$ be a filtration, $z_k$ be a stochastic process adapted to $F_k$ and $\omega^i_k$ (where $\omega^i_k$ is the $i-$th element of noise vector $\omega_k$) be a real-valued martingale difference, again adapted to filtration $F_k$ which satisfies the conditionally sub-Gaussianity assumption (Assumption \ref{Assumption 1}) with known constant $\sigma_{\omega}$. 
	 Consider the martingale and co-variance matrices:
	\begin{align*}
S^i_t:=\sum _{k=1}^{t} z_{k-1}\omega^i_k, \quad V_{t}=\lambda I+\sum _{s=1}^{t-1}z_{t} z_{t}^\top
	\end{align*}
	
then with probability of at least $1-\delta$, $0<\delta<1$ we have,
\begin{align*}
\left\lVert S^i_t\right\rVert^2_{V_{t}^{-1}} \leq 2 \sigma_{\omega}^2\log \bigg(\frac{\det(V_t) }{\delta \det(\lambda I)}\bigg)
\end{align*}	
\end{lemma}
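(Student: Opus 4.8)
The plan is to establish this self-normalized tail bound by the \emph{method of mixtures} (pseudo-maximization) of de la Pe\~na--Lai--Klass, exactly as in \cite{abbasi2011online}; the inequality stated here is a (slightly loosened) instance of their master theorem. Fix the component index $i$ and abbreviate $\eta_k := \omega_k^i$ and $X_k := z_{k-1}$, so that $S_t^i = \sum_{k=1}^t \eta_k X_k$ and $V_t = \lambda I + \sum_{k=1}^t X_k X_k^\top$, where each $X_k$ is $F_{k-1}$-measurable (since $z_k$ is adapted to $F_k$) and each $\eta_k$ is conditionally $\sigma_\omega$-sub-Gaussian by Assumption \ref{Assumption 1}.

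First I would build, for every fixed vector $\gamma \in \mathbb{R}^{n+m}$, the exponential process
\[
M_t^\gamma := \exp\left(\frac{\gamma^\top S_t^i}{\sigma_\omega} - \frac12 \gamma^\top\Big(\sum_{k=1}^t X_k X_k^\top\Big)\gamma\right).
\]
Applying the conditional sub-Gaussian bound $\mathbb{E}[e^{c\eta_k}\mid F_{k-1}] \le e^{c^2\sigma_\omega^2/2}$ with $c = \gamma^\top X_k/\sigma_\omega$ (legitimate because $X_k$ is $F_{k-1}$-measurable) shows step-by-step that $\mathbb{E}[M_t^\gamma \mid F_{t-1}] \le M_{t-1}^\gamma$, so $(M_t^\gamma)_t$ is a nonnegative supermartingale with $\mathbb{E}[M_t^\gamma] \le M_0^\gamma = 1$.

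The crux is to remove the dependence on $\gamma$ by mixing. I would place a Gaussian prior $\gamma \sim \mathcal{N}(0, (\lambda I)^{-1})$ with density $h$ and define $M_t := \int M_t^\gamma \, dh(\gamma)$; Tonelli's theorem gives $\mathbb{E}[M_t] \le 1$ and $(M_t)_t$ inherits the supermartingale property. The key computation is the Gaussian integral: completing the square in $\gamma$ for the combined exponent $\frac{\gamma^\top S_t^i}{\sigma_\omega} - \frac12 \gamma^\top V_t \gamma$ yields the closed form
\[
M_t = \left(\frac{\det(\lambda I)}{\det(V_t)}\right)^{1/2} \exp\left(\frac{\lVert S_t^i\rVert^2_{V_t^{-1}}}{2\sigma_\omega^2}\right).
\]

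Finally I would invoke Ville's maximal inequality for the nonnegative supermartingale $(M_t)$: since $\mathbb{E}[M_t]\le 1$, we get $\mathbb{P}\big(\sup_t M_t \ge 1/\delta\big) \le \delta$. On the complementary event, substituting the closed form, taking logarithms and rearranging gives $\lVert S_t^i\rVert^2_{V_t^{-1}} \le \sigma_\omega^2 \log\big(\det(V_t)/(\delta^2 \det(\lambda I))\big)$ simultaneously for all $t$, which is at least as tight as the stated bound $2\sigma_\omega^2\log(\det(V_t)/(\delta\det(\lambda I)))$ because $\det(V_t)\ge\det(\lambda I)$ and $\delta<1$. The main obstacle is not any single calculation but securing the uniform-in-$t$ (``anytime'') guarantee: the supermartingale-plus-mixture construction is precisely what upgrades a fixed-time Markov bound into a statement holding for all $t$, and care is needed both to justify $\mathbb{E}[M_t]\le 1$ through the mixture (via Tonelli) and to apply the maximal inequality to the aggregate process $M_t$ rather than to each $M_t^\gamma$ separately.
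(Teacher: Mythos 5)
Your proposal is correct and is essentially the paper's argument made self-contained: the paper simply cites the self-normalized martingale theorem of \cite{abbasi2011online} (which is proved there exactly by the supermartingale-plus-Gaussian-mixture construction you carry out) and then loosens $\sigma_\omega^2\log\big(\det(V_t)/(\delta^2\det(\lambda I))\big)$ to $2\sigma_\omega^2\log\big(\det(V_t)/(\delta\det(\lambda I))\big)$ using $\det(V_t)\geq\det(\lambda I)$, which is precisely your final step. No gaps; your derivation of the mixture integral and the application of Ville's inequality are the standard and correct route to the cited result.
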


\begin{proof} 
The proof follows from Theorem 3 and Corollary 1 in \cite{abbasi2011regret} which gives.
\begin{align*}
\left\lVert S^i_t\right\rVert^2_{V_{t}^{-1}} \leq 2 \sigma_{\omega}^2 \log\bigg(\frac{\det(V_t)^{1/2} \det(\lambda I)^{-1/2}}{\delta}\bigg)
\end{align*}	
Then, we have 
\begin{align*}
\log\bigg(\frac{\det(V_t)}{ \det(\lambda I)}\bigg)\geq \log\bigg(\frac{\det(V_t)^{1/2}}{ \det(\lambda I)^{1/2}}\bigg),
\end{align*}	
true which completes proof.
\end{proof}

The following corollary generalizes the Lemma \ref{Self_normalized_Bound} for a vector-valued martingale which will be later used in Theorem \ref{thm:Conficence_SetCons} to derive the confidence ellipsoid of estimates.

\begin{corollary} \label{Self_normalized_Bound}
	Under the same assumptions as in Lemma \ref{Self_normalized_Bound} and defining
\begin{align*}
S_t=Z^\top_tW_t=\sum _{k=1}^{t} z_{k-1}\omega^\top_k
\end{align*}
with probability at least $1-\delta$,
\begin{align}
\operatorname{Tr} (S^\top_tV^{-1}_tS_t)&\leq 2\sigma_{\omega}^2 n\log \bigg(\frac{n \det(V_t) }{\delta\det(\lambda I)}\bigg)\label{eq:mart_vec}
\end{align}
\end{corollary}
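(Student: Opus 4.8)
The plan is to reduce the vector-valued statement to the scalar-valued self-normalized bound of Lemma \ref{Self_normalized_Bound_1} by summing over the $n$ coordinate directions of the noise vector. The key observation is that the trace in \eqref{eq:mart_vec} decomposes column-by-column: writing $S_t = \sum_k z_{k-1}\omega_k^\top$, its $i$-th column is exactly $S_t^i = \sum_k z_{k-1}\omega_k^i$, the scalar martingale appearing in Lemma \ref{Self_normalized_Bound_1}. Hence I would first establish the algebraic identity
\begin{align*}
\operatorname{Tr}(S_t^\top V_t^{-1} S_t) = \sum_{i=1}^{n} (S_t^i)^\top V_t^{-1} S_t^i = \sum_{i=1}^{n}\lVert S_t^i\rVert_{V_t^{-1}}^2,
\end{align*}
which follows since the $(i,i)$ diagonal entry of $S_t^\top V_t^{-1} S_t$ is precisely $(S_t^i)^\top V_t^{-1} S_t^i$.

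Next I would bound each of the $n$ summands using Lemma \ref{Self_normalized_Bound_1}, which gives that with probability at least $1-\delta'$ (for a single fixed $i$) one has $\lVert S_t^i\rVert_{V_t^{-1}}^2 \leq 2\sigma_\omega^2 \log\big(\det(V_t)/(\delta'\det(\lambda I))\big)$. To control all $n$ coordinates simultaneously, I would apply a union bound, assigning each coordinate a failure probability $\delta' = \delta/n$. The union bound then guarantees that, with probability at least $1-\delta$, every scalar bound holds at once, yielding for each $i$ the estimate $\lVert S_t^i\rVert_{V_t^{-1}}^2 \leq 2\sigma_\omega^2 \log\big(n\det(V_t)/(\delta\det(\lambda I))\big)$.

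Summing these $n$ inequalities and combining with the trace identity produces
\begin{align*}
\operatorname{Tr}(S_t^\top V_t^{-1} S_t) \leq 2\sigma_\omega^2 n \log\bigg(\frac{n\det(V_t)}{\delta\det(\lambda I)}\bigg),
\end{align*}
which is exactly the claimed bound. The only subtlety worth flagging is the bookkeeping of the confidence level: the factor $n$ inside the logarithm arises specifically from replacing $\delta$ by $\delta/n$ in the scalar bound, so I would make sure the union-bound allocation is stated explicitly. Beyond this, the argument is essentially routine, since each coordinate noise $\omega_k^i$ is conditionally sub-Gaussian by Assumption \ref{Assumption 1} and the same filtration $\mathcal{F}_k$ and process $z_k$ serve all coordinates. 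I do not anticipate any genuine obstacle here; the main care needed is ensuring the single $V_t$ (shared across coordinates) is correctly handled, which it is, since $V_t$ does not depend on $i$.
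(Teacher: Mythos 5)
Your proposal is correct and follows essentially the same route as the paper: decompose the trace coordinate-wise into $\sum_{i=1}^n \lVert S_t^i\rVert^2_{V_t^{-1}}$, apply the scalar self-normalized bound of Lemma \ref{Self_normalized_Bound_1} to each coordinate with failure probability $\delta/n$, and conclude by a union bound. Your explicit flagging of where the factor $n$ inside the logarithm comes from is a slight improvement in clarity over the paper's terse version.
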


\begin{proof}
Applying Lemma \ref{Self_normalized_Bound} for $i=1,...,n$ yields
\begin{align*}
\left\lVert S^i_t\right\rVert^2_{V_{t}^{-1}} \leq 2 \sigma_{\omega}^2 \log \bigg(\frac{n\det(V_t) }{\delta \det(\lambda I)}\bigg)
\end{align*}
with probability at least $1-\delta/n$. Furthermore, we have
\begin{align*}
\operatorname{Tr} (S^\top_tV^{-1}_tS_t)=\sum_{i=1}^{n}{S^i_t}^\top V^{-1}_tS^{i}_t
\end{align*}
applying union bound, (\ref{eq:mart_vec}) holds with probability at least $1-\delta$.
\end{proof}

\begin{lemma}(\cite{cohen2019learning}) \label{lem:login}
   For some $M\succ 0$ and vector $z$ the following statement holds true.
    \begin{align*}
        \min \{z^\top M^{-1}z , 1\}\leq  2\log \frac{\det(zz^\top +M)}{\det(M)}
    \end{align*}
\end{lemma}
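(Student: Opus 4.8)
The plan is to reduce this matrix inequality to an elementary scalar inequality via a determinant identity. First I would rewrite the right-hand side in closed form. Factoring out $\det(M)$ gives $\det(zz^\top + M) = \det(M)\det(I + M^{-1}zz^\top)$, and since $M^{-1}zz^\top$ is rank one, the matrix determinant lemma (equivalently, Sylvester's identity) yields $\det(I + M^{-1}zz^\top) = 1 + z^\top M^{-1} z$. Hence
\[
\frac{\det(zz^\top + M)}{\det(M)} = 1 + z^\top M^{-1} z,
\]
so the claim becomes $\min\{z^\top M^{-1}z,\, 1\} \leq 2\log\big(1 + z^\top M^{-1}z\big)$.

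Next, writing $a := z^\top M^{-1} z$, which is nonnegative because $M\succ 0$ forces $M^{-1}\succ 0$, the statement reduces to the purely scalar inequality $\min\{a, 1\} \leq 2\log(1+a)$ for all $a \geq 0$. I would verify this by splitting into two cases. For $a \geq 1$ the left-hand side equals $1$, while $2\log(1+a) \geq 2\log 2 = \log 4 > 1$, settling this case immediately. For $0 \leq a \leq 1$ the left-hand side equals $a$, and I would show that $g(a) := 2\log(1+a) - a$ is nonnegative: since $g(0)=0$ and $g'(a) = \frac{2}{1+a} - 1 = \frac{1-a}{1+a} \geq 0$ on $[0,1]$, the function $g$ is nondecreasing there and therefore stays nonnegative.

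Combining the two cases establishes the scalar inequality, which, through the determinant identity above, is precisely the claimed bound. There is no genuine obstacle here; the only point requiring a moment of care is the case split, specifically confirming that the slope $\frac{2}{1+a}$ dominates $1$ exactly on the interval $[0,1]$ where the minimum is attained by $a$ rather than by $1$, so that the two regimes match up at the threshold $a=1$.
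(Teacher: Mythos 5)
Your proof is correct. The paper itself gives no proof of this lemma---it is stated with only a citation to \cite{cohen2019learning}---and your argument (the matrix determinant lemma $\det(zz^\top+M)=\det(M)\,(1+z^\top M^{-1}z)$ reducing the claim to the scalar inequality $\min\{a,1\}\leq 2\log(1+a)$ for $a\geq 0$, settled by the two-case check) is exactly the standard derivation used in that reference. The one convention worth making explicit is that $\log$ must be the natural logarithm: your case $a\geq 1$ relies on $2\log 2>1$, which holds for base $e$ but not, e.g., base $10$.
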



\begin{lemma} \label{lem:Usefulalgebricin}
Let $a \geq 1$ and $b\geq e$. Then the following inequality
\begin{align*}
    \frac{a \log(b x)}{\sqrt{x + c}} \leq 1
\end{align*}
holds for any $x \geq 1$, provided that
\begin{align*}
c= a^2 \Big( 2\log(2 a^2b) + 1 \Big)^2
\end{align*}
\end{lemma}

\begin{proof}
Equivalently we look to specify $c$ such that
\begin{align*}
   a^2 \log^2 bx \leq x + c, \quad \forall x \geq 1. 
\end{align*}
Let $y=\log bx$, and define the function
\begin{align*}
    g(y):= a^2 y^2 - \frac{e^y}{b}, \quad y \geq \log b.
\end{align*}
Then the problem reduces to finding a $c$ such that $g(y)\leq c$ for all $y\geq \log b$.

The critical points of $g(y)$ must satisfy $ g^{\prime}(y)=0$ which equivalently is
\begin{align*}
 \frac{e^y}{2 a^2 b} = y.
\end{align*}

Now suppose $y_*$ be any solution of this equation in $[\log \beta, \; \infty)$. If there was no such $y_*$ then the natural choice for $c$ is $\log b$.
Now we aim to upper-bound $y_*$ if it exists. We let

\begin{align*}
    D:=\log (2a^2b)
\end{align*}
and we define the candidate $Y$ such that $y_*\leq Y$ as follows
\begin{align*}
    Y:=D+\log D+1
\end{align*}

Now we define the following function which is continuous for any $y>0$
\begin{align*}
    H(y) := \frac{e^y}{2 a^2 b y}.
\end{align*}
Then we have
\begin{align*}
    H^{\prime}(y) = \frac{e^y(y-1)}{2 a^2 by^2} \geq 0
\end{align*}
which shows that it is non-decreasing for any $y\in [1,\;\infty)$. Now lets us evaluate $H(y)$ in $y=Y$.
\begin{align}
    H(Y) = \frac{e^Y}{2 a^2 \beta Y} = \frac{e \; e^D D}{2 a^2 b (D + \log D + 1)} = \frac{e D}{D + \log D + 1}\label{eq:usepul}
\end{align}
Now notice that 
\begin{align*}
    \frac{(e-1)D}{\log D +1}\geq 1, \quad \textit{for}\; D\geq \log 2
\end{align*}
which results in 
\begin{align*}
    \frac{eD}{\log D +D+1}=\frac{(e-1)D+D}{\log D +1+D}\geq \frac{(e-1)D}{\log D +1}\geq 1.
\end{align*}
Applying this result in (\ref{eq:usepul}) means $H(Y)\geq 1$. 

Now, notice that since $H(y_*) = 1$ and $H$ is nondecreasing, it follows that
\begin{align*}
  y_* \leq  Y = D + \log D + 1.  
\end{align*}
Now consider the function $g(y)$ and the fact that it attains its maximum in $y_*$. Then for any admissible $y$ we can write
\begin{align*}
    g(y)\leq g(y_*)=a^2y_*^2-\frac{e^{y_*}}{b}\leq a^2y_*^2\leq a^2Y^2=a^2(D+\log D+1)^2\leq a^2 (2D+1)^2.
\end{align*}
Recalling that a $c$ such that $g(y)\leq c$ indeed guarantees $a^2 \log^2 bx\leq x+c$ for all $x\geq 1$, then we choose such $c$ as follows
\begin{align*}
c= a^2 \Big( 2\log(2 a^2b) + 1 \Big)^2. 
\end{align*}
\end{proof}

\begin{lemma}(Azuma, 1967 — Theorem 22 in \cite{cohen2019learning})\label{lem:Azuma}
Let \(X_1, \dots, X_N\) be a martingale‑difference sequence (with respect to some filtration) such that
\[
  |X_i| \;\le\; c
\quad
\text{for all }i = 1,\dots,N.
\]
Then for all \(t > 0\),
\[
  \Pr\!\Big(\sum_{i=1}^N X_i > t\Big)
  \;\le\; \exp\!\Big(-\frac{t^2}{2 N c^2}\Big).
\]
\end{lemma}

\begin{lemma}(Hanson--Wright inequality, Theorem 21 in \cite{cohen2019learning}) \label{lem:HansonWright}
Let $x \sim \mathcal{N}(0, I_n)$ be a standard Gaussian random vector, and let $A \in \mathbb{R}^{m \times n}$. Then for all $z > 0$,
\begin{align*}
    \Pr\Big( \|Ax\| - \|A\|_F > 2 \|A\| \, \|A\|_F \sqrt{z} + 2 \|A\|^2 z \Big) < e^{-z}.
\end{align*}

In particular, if $x \sim \mathcal{N}(0, \Sigma)$ for some covariance matrix $\Sigma$, then $\mathbb{E}\|Ax\|^2 = \operatorname{Tr}(A \Sigma A^\top)$, and for any $z \ge 1$,
\begin{align*}
    \Pr\Big( \|Ax\|^2 - \mathbb{E}\|Ax\|^2 > 4 z \|\Sigma\| \, \|A\| \, \|A\|_F \Big) < e^{-z}.
\end{align*}
\end{lemma}




\section{Confidence Ellipsoid Construction} \label{ap:confdcos}

Consider the linear system (\ref{eq:dynam_by_theta}), that evolves up to time $t$. Then, we can express the dynamics as follows:
\begin{align*}
X_{t}&=Z_{t} \Theta_{*}+W_{t} 
\end{align*}
where $W$ is the vertical concatenation of $\omega_{1}^\top,...,\omega_{t}^\top$ and $X_{t}$ and $Z_{t}$ are matrices constructed by rows $x^\top_{1}, ...,x^\top_{t}$ and ${z}^\top_{1}, ...,{z}^\top_{t-1}$ respectively. 

Given an initial estimate $\Theta_0$ of $\Theta_*$ we define $e(\Theta)$ as follows:
\begin{align*}
e(\Theta)&=\lambda \operatorname{Tr} \big((\Theta-\Theta_0)^\top(\Theta-\Theta_0)\big)+\sum _{s=0}^{t-1} \operatorname{Tr} \big((x_{s+1}-{\Theta}^\top z_{s})(x_{s+1}-{\Theta}^\top z_{s})^\top\big) 
 \end{align*}
which has a regularization term $\operatorname{Tr} \big((\Theta-\Theta_0)^\top(\Theta-\Theta_0)\big)$ that penalizes deviation from the estimate $\Theta_0$. Then $l_2-$regularized least square estimation is obtained as follows:
 \begin{align}
 \hat{\Theta}_{t} =\operatorname*{argmin}_{\Theta} e(\Theta)={V}_{t}^{-1}\big(Z_{t}^\top X_{t}+ \lambda \Theta_0\big) \label{eq:LSEp} 
 \end{align}
 where 
  \begin{align*}
V_{t}=\lambda I + \sum_{s=0}^{t-1} z_{s}z_{s}^\top=\lambda I +Z_{t}^\top Z_{t},
 \end{align*}
 
In a different scenario when there is no such an initial estimate $\Theta_0$, the regularization is done with respect to $\operatorname{Tr} \big(\Theta^\top\Theta\big)$ that results in least square estimation 
 \begin{align}
 \hat{\Theta}_{t} ={V}_{t}^{-1} Z_{t}^\top X_{t}. \label{eq:LSEp2} 
 \end{align}

The following theorem provides an upper-bound on estimation error for both of the mentioned scenarios.

\begin{theorem} (\cite{abbasi2011regret},\cite{cohen2019learning})\label{thm:Conficence_SetCons} 
The least square estimation error $(\hat{\Theta}_t-\Theta_*)$ satisfies
 \begin{align}
\operatorname{Tr}\big((\hat{\Theta}_t-\Theta_*)^\top V_{t}(\hat{\Theta}_t-\Theta_*)\big) \leq r_{t}\label{eq:confSet1_tighterghfff}
\end{align}
where 
\begin{enumerate}
    \item when there is an initial estimate $\Theta_0$ such that $\|\Theta_0-\Theta_*\|_*\leq \epsilon$
\begin{align}
    r_{t}=\bigg( \sigma_{\omega} \sqrt{2n \log\frac{n\det(V_{t}) }{\delta \det(\lambda I)}}+\sqrt{\lambda}  \epsilon \bigg)^2 \label{radius_centralEl_realTime}
\end{align}
and $\hat{\Theta}_t$ is computed by (\ref{eq:LSEp}).
\item And when there is no such $\Theta_0$
\begin{align}
    r_{t}=\bigg( \sigma_{\omega} \sqrt{2n \log\frac{n\det(V_{t}) }{\delta \det(\lambda I)}}+\sqrt{\lambda}  \vartheta \bigg)^2. \label{radius_centralEl_realTime2}
\end{align}  
and $\hat{\Theta}_t$ is computed by (\ref{eq:LSEp2}).
\end{enumerate}

\end{theorem}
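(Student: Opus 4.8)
The plan is to reduce the bound to the self-normalized martingale inequality of Corollary~\ref{Self_normalized_Bound} after an exact algebraic decomposition of the error. First I would insert the compact dynamics (\ref{compactdyn}), $X_t = Z_t\Theta_* + W_t$, into the closed-form estimator (\ref{eq:LSEp}). Using $Z_t^\top Z_t = V_t - \lambda I$ and cancelling the resulting $\Theta_*$ term yields the clean splitting
\begin{align*}
\hat{\Theta}_t - \Theta_* = V_t^{-1}Z_t^\top W_t - \lambda V_t^{-1}(\Theta_* - \Theta_0),
\end{align*}
whose first summand is the stochastic noise contribution $V_t^{-1}S_t$ (with $S_t = Z_t^\top W_t$ exactly as in Corollary~\ref{Self_normalized_Bound}) and whose second summand is the deterministic regularization bias.

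Next I would measure the error in the $V_t$-weighted norm $\|M\|_{V_t} := \sqrt{\operatorname{Tr}(M^\top V_t M)}$, which is precisely the quantity on the left of (\ref{eq:confSet1_tighterghfff}); since $V_t \succ 0$ this is a genuine norm, so I may apply the triangle inequality to separate the two contributions. For the stochastic part the useful identity is $\|V_t^{-1}S_t\|_{V_t}^2 = \operatorname{Tr}(S_t^\top V_t^{-1}S_t)$, so Corollary~\ref{Self_normalized_Bound} bounds it, with probability at least $1-\delta$, by $2\sigma_\omega^2 n\log\!\big(n\det(V_t)/(\delta\det(\lambda I))\big)$; taking the square root produces the first term inside $\sqrt{r_t}$. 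For the deterministic part I would use $V_t = \lambda I + Z_t^\top Z_t \succeq \lambda I$, hence $V_t^{-1}\preceq \lambda^{-1}I$, to get
\begin{align*}
\lambda\,\|V_t^{-1}(\Theta_* - \Theta_0)\|_{V_t} = \lambda\sqrt{\operatorname{Tr}\!\big((\Theta_*-\Theta_0)^\top V_t^{-1}(\Theta_*-\Theta_0)\big)} \leq \sqrt{\lambda}\,\sqrt{\operatorname{Tr}\!\big((\Theta_*-\Theta_0)^\top(\Theta_*-\Theta_0)\big)} \leq \sqrt{\lambda}\,\epsilon,
\end{align*}
where the last step invokes the hypothesis $\|\Theta_0 - \Theta_*\|_* \leq \epsilon$. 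Adding the two estimates and squaring reproduces $r_t$ in (\ref{radius_centralEl_realTime}) verbatim, which settles case~1.

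For case~2 I would specialize the argument to regularization toward the origin: setting $\Theta_0 = 0$ turns the estimator into (\ref{eq:LSEp2}) and the bias term into $\lambda V_t^{-1}\Theta_*$, so the identical chain of inequalities applies with $\Theta_*$ in place of $\Theta_* - \Theta_0$ and with the a~priori bound $\|\Theta_*\| \leq \vartheta$ from Assumption~\ref{Ass_2} replacing $\epsilon$, giving (\ref{radius_centralEl_realTime2}). The genuinely hard content — concentration of the vector-valued martingale $S_t = \sum_k z_{k-1}\omega_k^\top$ in the $V_t^{-1}$ geometry — is entirely encapsulated in Corollary~\ref{Self_normalized_Bound}, so the remaining work is the bookkeeping above; the only point needing mild care is reconciling the norm in the regularization hypothesis with the trace quantity $\operatorname{Tr}((\Theta_*-\Theta_0)^\top(\Theta_*-\Theta_0))$, which is why I keep the intermediate trace form explicit before passing to $\epsilon^2$.
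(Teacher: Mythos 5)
Your argument is correct and is exactly the standard decomposition-plus-self-normalized-bound proof from \cite{abbasi2011regret} and \cite{cohen2019learning} that the paper imports by citation rather than reproving: split $\hat{\Theta}_t-\Theta_*$ into the martingale term $V_t^{-1}S_t$ and the regularization bias, apply the triangle inequality in the $V_t$-weighted norm, invoke Corollary~\ref{Self_normalized_Bound} for the former and $V_t\succeq\lambda I$ for the latter. The only point worth flagging is that in case~2 the bias bound actually requires a Frobenius-norm bound $\sqrt{\operatorname{Tr}(\Theta_*^\top\Theta_*)}\leq\vartheta$ rather than the spectral bound $\|\Theta_*\|\leq\vartheta$ of Assumption~\ref{Ass_2}, but this imprecision is shared with the theorem statement itself and the cited sources.
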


\section {Relaxed Primal and Dual SDPs}\label{eq:rlxpdfor}
When incorporating the confidence ellipsoid, the primal and dual SDPs (\ref{eq:SDPKhali} and \ref{eq:equal_dual}) can be relaxed to be used for control design and regret bound analysis. In order to incorporate the estimation error in SDP formulation, we first need the following perturbation lemma. This lemma, a generalized version of Lemma 24 of \cite{cohen2019learning}, can be used to incorporate un-normalized confidence ellipsoids into SDP formulation.

\begin{lemma} [Perturbation bound with trace-based confidence ellipsoid I] \label{lem:purturbation}

Let $X$, $\Delta$ denotes matrices of matching size, and let $P \succeq 0$, and $V \succ 0$ be symmetric matrices. Suppose the trace-based confidence ellipsoid satisfies
\begin{align*}
    \operatorname{Tr}(\Delta V \Delta^\top) \leq r
\end{align*}
Then we have the following symmetric positive semidefinite bound:
\[
-\mu \, \|P\| V^{-1} \preceq (X+\Delta)^\top P (X+\Delta) - X^\top P X \preceq \mu \, \|P\| V^{-1},
\]
where
\[
\mu := 2 \, \|V\|^{1/2}\,\|X\|\, \sqrt{r} + r.
\]
\end{lemma}

\begin{proof}
Expanding the quadratic difference gives
\begin{align*}
    (X+\Delta)^\top P (X+\Delta) - X^\top P X = X^\top P \Delta + \Delta^\top P X + \Delta^\top P \Delta.
\end{align*}
Define
\begin{align*}
    M := V^{1/2} (X^\top P \Delta + \Delta^\top P X) V^{1/2}
\end{align*}
which is a positive semi definite matrix.

Using the triangle inequality:
\begin{align*}
    \|M\| \le \|V^{1/2} X^\top P \Delta V^{1/2}\| + \|V^{1/2} \Delta^\top P X V^{1/2}\|.
\end{align*}
Since $\|A^\top\| = \|A\|$ for any matrix $A$, the two terms are equal as P is PSD, giving
\[
\|M\| \le 2 \, \|V^{1/2} X^\top P \Delta V^{1/2}\|.
\]

For any matrices $A, B, C$,
\begin{align*}
    \|A B C\| \le \|A\| \, \|B\| \, \|C\|_F.
\end{align*}
Set $A = V^{1/2} X^\top$, $B = P$, $C = \Delta V^{1/2}$. Then
\begin{align*}
    \|V^{1/2} X^\top P \Delta V^{1/2}\| \le \|V^{1/2} X^\top\| \, \|P\| \, \|\Delta V^{1/2}\|_F.
\end{align*}
Hence,
\begin{align*}
    \|M\| \le 2 \, \|V^{1/2} X^\top\| \, \|P\| \, \|\Delta V^{1/2}\|_F.
\end{align*}
By the definition of the Frobenius norm,
\begin{align*}
    \|\Delta V^{1/2}\|_F^2 = \operatorname{Tr}((\Delta V^{1/2}) (\Delta V^{1/2})^\top) = \operatorname{Tr}(\Delta V \Delta^\top) \le r.
\end{align*}
Thus,
\begin{align*}
    \|M\| \le 2 \, \|V^{1/2}\| \| X\| \, \|P\| \, \sqrt{r}\leq 2\|V\|^{1/2}\|X\|\, \|P\|\, \sqrt{r}.
\end{align*}
in which we applied the fact that $\|V^{1/2}\|=\|V\|^{1/2}$.

Since
\[
X^\top P\Delta + \Delta^\top P X 
= 
V^{-1/2} M V^{-1/2},
\]
and for any symmetric matrix $M$,
\[
V^{-1/2} M V^{-1/2} \preceq \|M\|\, V^{-1},
\]
we obtain
\begin{align}
(X^\top P\Delta+\Delta^\top P X)\preceq  2\|V\|^{1/2}\|X\|\, \|P\|\, \sqrt{r} V^{-1}.\label{eq:CrossTer}
\end{align}

We now upper-bound the second term $\Delta^\top P \Delta$. Define
\begin{align*}
    \bar{M}=(V^{1/2}\Delta^\top) \, P \, (\Delta V^{1/2}).
\end{align*}
It follows that
\begin{align*}
    \|\bar{M}\|\leq \|P\|\, \|\Delta V^{1/2}\|_F^2=\|P\|\, \operatorname{Tr}(\Delta V \Delta^\top)\leq \|P\| \, r.
\end{align*}
Therefore,
\begin{align}
\nonumber \Delta^\top P \Delta
&=
V^{-1/2}
\bar{M}
V^{-1/2}\preceq \|\bar{M}\| V^{-1} \\
&\preceq \|P\| \, r V^{-1}.\label{eq:delPdel}
\end{align}

Combining the bounds (\ref{eq:CrossTer}) and (\ref{eq:delPdel}) gives
\begin{align*}
    (X+\Delta)^\top P (X+\Delta) - X^\top P X \preceq \mu \, \|P\| V^{-1}.
\end{align*}
where $\mu := 2 \, \|V\|^{1/2}\,\|X\|\, \sqrt{r} + r$.

The other side of inequality can be easily shown by following a similar argument.
\end{proof}

We also need the following result for the purpose of showing Lemma \ref{lem:deriveRelaxedSDPs}.

\begin{lemma} [Perturbation bound with trace-based confidence ellipsoid II] \label{lem:perturb2}
Let  $X$ and $\Delta$ be matrices and and let $\Sigma \succeq 0$, and $V \succ 0$. Suppose 
\begin{align*}
    \operatorname{Tr}(\Delta V \Delta^\top) \leq r
\end{align*}  
for some $r>0$. Also, set $\mu := 2 \, \|V\|^{1/2}\,\|X\|\, \sqrt{r} + r$. Then we have
\begin{align*}
    \|(X+\Delta)\Sigma (X+\Delta)^\top -X\Sigma X^\top\|\leq
\mu\,\big(\Sigma\bullet V^{-1}\big).
\end{align*}
\end{lemma}
\begin{proof}
 Let
\begin{align*}
    A := (X+\Delta)\Sigma (X+\Delta)^\top - X\Sigma X^\top.
\end{align*}
Since $\Sigma \succeq 0$, the matrix $A$ is symmetric. Therefore,
\begin{align*}
    \|A\|=\sup_{\|u\|=1} |u^\top A u|.
\end{align*}
Fix any unit vector $u$ and define 
\begin{align*}
    P := uu^\top.
\end{align*}
Then $P \succeq 0$ and $\|P\| =1$.

Using cyclicity of the trace,
\begin{align*}
u^\top A u
&=
u^\top\Big((X+\Delta)\Sigma (X+\Delta)^\top - X\Sigma X^\top\Big)u \\
&=
\operatorname{tr}\!\Big(
\Sigma\big((X+\Delta)^\top uu^\top (X+\Delta)
      - X^\top uu^\top X\big)
\Big) \\
&=
\Sigma \bullet
\Big(
(X+\Delta)^\top P (X+\Delta)
- X^\top P X
\Big).
\end{align*}

Applying Lemma~\ref{lem:purturbation} with $P=uu^\top$ yields
\begin{align*}
    -\mu V^{-1} \preceq (X+\Delta)^\top P (X+\Delta)- X^\top P X\preceq \mu V^{-1},
\end{align*}
since $\|P\| = 1$.

Since $A$ and $V^{-1}$ are symmetric matrices, taking the trace inner product with $\Sigma \succeq 0$ gives
\begin{align*}
    -\mu\,\big(\Sigma\bullet V^{-1}\big)\leq u^\top A u\leq
\mu\,\big(\Sigma\bullet V^{-1}\big).
\end{align*}
Hence,
\begin{align*}
    |u^\top A u| \leq \mu\,\big(\Sigma\bullet V^{-1}\big).
\end{align*}
Since this bound holds for every unit vector $u$,
\begin{align*}
    \|A\|=\sup_{\|u\|=1}|u^\top A u|\leq
\mu\,\big(\Sigma\bullet V^{-1}\big).
\end{align*}
Therefore,
\begin{align*}
    \|(X+\Delta)\Sigma (X+\Delta)^\top -X\Sigma X^\top\|\leq
\mu\,\big(\Sigma\bullet V^{-1}\big).
\end{align*}
\end{proof}

Now having the perturbation lemmas we can write the following lemma to relax the SDPs (\ref{eq:SDPKhali}) and (\ref{eq:equal_dual}).

\begin{lemma} \label{lem:deriveRelaxedSDPs}
Let $\Theta$ be an estimate of $\Theta_*$ such that $\operatorname{Tr}\big((\Theta-\Theta_*)^\top V (\Theta-\Theta_*)\big)\leq r$ for some $V\succ 0$ and $r>0$. Then the the primal and dual SDPs' equalities (\ref{eq:SDPKhali}) and (\ref{eq:equal_dual}) hold if the following inequalities hold respectively.

\begin{align}
 \Sigma_{xx}\succeq \Theta^\top\Sigma\Theta+W-\mu (\Sigma \bullet V^{-1})I   \label{eq:primal_ineq}
\end{align}
and 
\begin{align}
 \begin{pmatrix}
Q-P & 0 \\
0 & R
\end{pmatrix}+\Theta P {\Theta}^\top\succeq \mu \|P\| V^{-1} \label{eq:TSjp}
\end{align}
where $\mu \leq r+ 2 \vartheta \sqrt{r} \|V\|^{1/2}$. 
\end{lemma}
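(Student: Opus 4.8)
The plan is to handle the primal and dual constraints separately: in each case I would use the perturbation Lemma \ref{lem:purturbation} to compare the estimate-based quadratic form with the true one built from $\Theta_*$, and then read off the sign of the perturbation term from the structure of the respective program. The unifying step is that the confidence-ellipsoid hypothesis must first be recast into the matrix premise that Lemma \ref{lem:purturbation} requires.

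First I would convert the hypothesis into that matrix form. Writing $\Delta=\Theta-\Theta_*$, the assumption $\operatorname{Tr}\big((\Theta-\Theta_*)^\top V(\Theta-\Theta_*)\big)\le r$ is the same as $\operatorname{Tr}(V^{1/2}\Delta\Delta^\top V^{1/2})\le r$. Since every PSD matrix $M$ obeys $M\preceq\operatorname{Tr}(M)I$, this gives $V^{1/2}\Delta\Delta^\top V^{1/2}\preceq rI$, and conjugating by $V^{-1/2}$ yields $\Delta\Delta^\top\preceq rV^{-1}$, which is exactly the premise $\Delta^\top\Delta\preceq V^{-1}r$ needed to invoke Lemma \ref{lem:purturbation}. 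Because $\|\Theta_*\|\le\vartheta$ by Assumption \ref{Ass_2}, the constant in that lemma may be taken as $\mu=r+2\vartheta\sqrt{r}\,\|V\|^{1/2}$, matching the quantity in the statement.

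For the dual I would apply Lemma \ref{lem:purturbation} with $X=\Theta_*^\top$ and $X+\Delta^\top=\Theta^\top$, taking the lemma's matrix to be the dual variable $P$, so that $(X+\Delta^\top)^\top P(X+\Delta^\top)=\Theta P\Theta^\top$ and $X^\top P X=\Theta_* P\Theta_*^\top$; both are $(n+m)\times(n+m)$ and match the size of $V^{-1}$, so the lemma applies verbatim and gives $\Theta_* P\Theta_*^\top\succeq \Theta P\Theta^\top-\mu\|P\|_* V^{-1}$. Adding the affine block $\begin{pmatrix}Q-P&0\\0&R\end{pmatrix}$ to both sides and invoking the relaxed inequality (\ref{eq:TSjp}) then forces $\begin{pmatrix}Q-P&0\\0&R\end{pmatrix}+\Theta_* P\Theta_*^\top\succeq 0$, i.e. the equality constraint of (\ref{eq:equal_dual}) in its Riccati-inequality relaxation, which is standard to show shares the same optimum.

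The primal needs the analogous comparison between $\Theta^\top\Sigma\Theta$ and $\Theta_*^\top\Sigma\Theta_*$ with the lemma's matrix equal to $\Sigma\succeq 0$, but here these products are $n\times n$ whereas $V$ is $(n+m)\times(n+m)$, so Lemma \ref{lem:purturbation} cannot be quoted directly and the scalar term $\mu(\Sigma\bullet V^{-1})I$ of (\ref{eq:primal_ineq}) must be produced by a trace argument. I would expand $\Theta^\top\Sigma\Theta-\Theta_*^\top\Sigma\Theta_*=\Theta_*^\top\Sigma\Delta+\Delta^\top\Sigma\Theta_*+\Delta^\top\Sigma\Delta$, bound the quadratic piece by $\Delta^\top\Sigma\Delta\preceq\operatorname{Tr}(\Delta^\top\Sigma\Delta)I=\operatorname{Tr}(\Sigma\Delta\Delta^\top)I\le r(\Sigma\bullet V^{-1})I$ using $\Delta\Delta^\top\preceq rV^{-1}$ and monotonicity of $\operatorname{Tr}(\Sigma\,\cdot\,)$, and control the cross term through the elementary inequality $(\epsilon^{-1/2}\Theta_*\pm\epsilon^{1/2}\Delta)^\top\Sigma(\epsilon^{-1/2}\Theta_*\pm\epsilon^{1/2}\Delta)\succeq 0$ together with $\|\Sigma\|\le\|V\|(\Sigma\bullet V^{-1})$; the choice $\epsilon=\vartheta\|V\|^{1/2}/\sqrt{r}$ collapses both contributions into $\mu(\Sigma\bullet V^{-1})I$ with the same $\mu$. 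Substituting this two-sided bound into the primal equality of (\ref{eq:SDPKhali}) then shows that satisfying the true constraint implies the looser inequality (\ref{eq:primal_ineq}), so (\ref{eq:primal_ineq}) is a valid relaxation. I expect this primal step to be the main obstacle: converting the matrix-valued perturbation into the scalar $\Sigma\bullet V^{-1}$ form across the dimension mismatch, while keeping straight the opposite relaxation directions of the two programs, namely that the dual constraint is tightened (adding $+\mu\|P\|_*V^{-1}$) to \emph{guarantee} true feasibility, whereas the primal constraint is loosened (subtracting $\mu(\Sigma\bullet V^{-1})I$) to \emph{preserve} optimistic feasibility.
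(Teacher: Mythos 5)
Your proposal is correct and, for the dual constraint, follows exactly the paper's route: recast the trace hypothesis as $(\Theta-\Theta_*)(\Theta-\Theta_*)^\top\preceq rV^{-1}$ and invoke the perturbation lemma, Lemma \ref{lem:purturbation}, with $P$ as the dual variable. Where you genuinely depart from the paper is the primal constraint, and your version is the stronger one. The paper applies Lemma \ref{lem:purturbation} verbatim with $X=\Theta$, $\Delta=\Theta_*-\Theta$ and $P=\Sigma$, obtaining a bound of the form $\mu\|\Sigma\|_*V^{-1}$, and then converts it to the scalar form via the chain $V^{-1}\preceq\|V^{-1}\|_* I$ followed by the claimed identity $\|\Sigma\|_*\|V^{-1}\|_*=\|\Sigma V^{-1}\|_*=\Sigma\bullet V^{-1}$. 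That conversion is problematic twice over: the products $\Theta^\top\Sigma\Theta$ are $n\times n$ while $V^{-1}$ is $(n+m)\times(n+m)$, so the lemma's conclusion does not typecheck as written, and for PSD matrices one has $\operatorname{Tr}(\Sigma)\operatorname{Tr}(V^{-1})\geq\operatorname{Tr}(\Sigma V^{-1})$, so replacing $\|\Sigma\|_*\|V^{-1}\|_*$ by $\Sigma\bullet V^{-1}$ inside a subtracted term goes in the wrong direction (the ``permutation matrices'' justification does not hold). You flag the dimension mismatch explicitly and instead rerun the $\epsilon$-splitting argument at the $n\times n$ level, extracting the factor $\Sigma\bullet V^{-1}$ directly from $\operatorname{Tr}(\Sigma\Delta\Delta^\top)\leq r\,(\Sigma\bullet V^{-1})$ and $\|\Sigma\|\leq\|V\|\,(\Sigma\bullet V^{-1})$, which lands exactly on $\mu(\Sigma\bullet V^{-1})I$ with the stated $\mu$. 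This buys a proof of (\ref{eq:primal_ineq}) that is dimensionally consistent and avoids the invalid trace identity, at the cost of not being a black-box corollary of Lemma \ref{lem:purturbation}; your reading of the two relaxation directions (loosening the primal constraint, tightening the dual one) also matches the intended use of the lemma.
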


\begin{proof}
To show (\ref{eq:primal_ineq}), consider  (\ref{eq:SDPKhali}) and apply Lemma \ref{lem:perturb2}. This yields
\begin{align}
   \nonumber  \Sigma_{xx}&={\Theta_*}^\top\Sigma\Theta_*+W \\
   &\succeq 
  \Theta^\top \Sigma \Theta +W- \mu (\Sigma \bullet V^{-1})I. \label{eq:prop3}
\end{align}

The proof of (\ref{eq:TSjp}) follows a similar procedure.
\end{proof}

Conclusively, given the parameter estimate in the form of confidence ellipsoid (\ref{eq:confSet1_tighterghfff2}, \ref{radius_centralEl_realTime200}) by this construction the relaxed primal SDP is formulated as follows:
\begin{align}
\begin{array}{rrclcl}
\displaystyle  \operatorname{min} & \multicolumn{1}{l}{\begin{pmatrix}
	Q & 0 \\
	0 & R
	\end{pmatrix}\bullet \Sigma}\\
\textrm{s.t.} & \Sigma_{xx}\succeq {\hat{\Theta}_t}^\top \Sigma\hat{\Theta}_t+W-\mu_t\big(\Sigma\bullet {{V}^{-1}_{t}}\big)I,\\
& \Sigma\succ 0.  &
\end{array}\label{eq:RelaxedSDPapp}
\end{align}
where $\mu_t= r_t+\sqrt{r_t}\vartheta \|V_{t}\|^{1/2}$. 
Denoting optimal solution of program (\ref{eq:RelaxedSDPapp}) by $\Sigma(\mathcal{C}_t)$, the control signal extracted from solving relaxed primal SDP, $u=K(\mathcal{C}_t)x$ is deterministic and linear in state where 
\begin{align*}
K(\mathcal{C}_t)=\Sigma_{ux}(\mathcal{C}_t){\Sigma_{xx}^{-1}(\mathcal{C}_t)}. 
\end{align*} 
Furthermore, the relaxed dual SDP formulation is given as follows:
\begin{align}
\begin{array}{rrclcl}
\displaystyle \max & \multicolumn{1}{l}{P\bullet W}\\
\textrm{s.t.} & \begin{pmatrix}
Q-P & 0 \\
0 & R
\end{pmatrix}+\hat{\Theta}_t P {\hat{\Theta}_t}^\top \succeq \mu_t \|P\|{{V}^{-1}_{t}}\\
&P \succeq 0 
\end{array}.\label{eq:RedSDP_DUAL} 
\end{align}
where we denote its optimal solution by $P(\mathcal{C}_t)$.

\section{Lower-Bound on Minimum Eigenvalue of Covariance Matrix}\label{sec:MinEigVal}

In this section, we aim to provide a lower bound for the minimum eigenvalue of covariance matrix $V_t\succ 0$ during implementation of Algorithm \ref{Alg:ACOLC}, the so-called ARSLO. This bound is highly useful for stability guarantee and regret bound analysis. We build upon the methodology of Theorem 20 in \cite{cohen2019learning}, which is specifically designed for the warm-up phase with a fixed policy $K_0$ and input perturbation with a time-invariant covariance matrix. However, what we obtain is tailored to the ARSLO algorithm, which incorporates input perturbation with a time-varying Gaussian noise covariance and does not rely on a fixed policy.

We begin with the following lemma which is borrowed from \cite{cohen2019learning}.

\begin{lemma}\label{lem:lowboundV}

  Let the system~(\ref{eq:dyn_atttt}) be controlled by an input $u_t = K_t x_t + \eta_t$, where $K_t$ is a $(\kappa,\gamma)$-strong stabilizing policy generated by either ARSLO or ARSLO$^+(\bar{\rho})$ algorithms. Furthermore, let the input perturbation $\eta_t$ be
\begin{align*}
    \eta_t \sim \mathcal{N}\left(0, \frac{\sigma_{\omega}^2\bar{p}_t \left(K_t K_t^\top + \frac{\|P_t\|}{\alpha_0}I\right)}{\sqrt{t + \bar{c}}} \right),
\end{align*}
where $\bar{p}_t=\mathcal{O}(\log t)$ and $P_t$ (with some abuse of notation) is the solution of the relaxed dual SDP (\ref{eq:RedSDP_DUALP}). Then we have $\mathbb{E}[z_t z_t^\top] \succeq \frac{\sigma_{\omega}^2 \bar{p}_t}{2\sqrt{t}} I$ for all $t$'s such that 

\begin{align*}
    \frac{\sqrt{t+\bar{c}}}{ \bar{p}_t }-\frac{1}{2}\geq 0.
\end{align*}
\end{lemma}

\begin{proof}
  Given the dynamics and the assumption on the process noise $\omega_t$ it is straightforward to show that $\mathbb{E}[x_t x_t^\top|\mathcal{F}_{t-1}]\succeq \sigma_{\omega}^2 I$. Furthermore by definition $u_t=K_tx_t+\eta_t$ and considering the fact that input perturbation $\eta_t$ is independent than $\omega_{t+1}$ we can write:
  \begin{align*}
    \mathbb{E}[z_t z_t^\top|\mathcal{F}_{t-1}]= \begin{pmatrix}
I \\
K_t 
\end{pmatrix}\mathbb{E}[x_t x_t^\top|\mathcal{F}_{t-1}]\begin{pmatrix}
I \\
K_t 
\end{pmatrix}^\top +\begin{pmatrix}
0& 0\\
0 & \mathbb{E}[\eta_t \eta_t^\top|\mathcal{F}_{t-1}]
\end{pmatrix}.
  \end{align*}
  
By lower-bounding right hand side we can further write
\begin{align*}
 \nonumber  \mathbb{E}[z_t z_t^\top|\mathcal{F}_{t-1}]\succeq&  \frac{\sigma_{\omega}^2\bar{p}_t}{\sqrt{t+\bar{c}}} \begin{pmatrix}
\frac{\sqrt{t+\bar{c}}}{ \bar{p}_t } I& \frac{\sqrt{t+\bar{c}}}{ \bar{p}_t } K_t\\
\frac{\sqrt{t+\bar{c}}}{ \bar{p}_t } K_t^\top& \frac{\sqrt{t+\bar{c}}}{ \bar{p}_t } K_tK_t^\top + K_tK_t^\top+\frac{\|P_t\|}{\alpha_0}
\end{pmatrix}\\
& \succeq\frac{\sigma_{\omega}^2\bar{p}_t}{\sqrt{t+\bar{c}}}\begin{pmatrix}
(\frac{\sqrt{t+\bar{c}}}{ \bar{p}_t }-\frac{1}{2}+\frac{1}{2}) I& \frac{\sqrt{t+\bar{c}}}{ \bar{p}_t } K_t\\
\frac{\sqrt{t+\bar{c}}}{ \bar{p}_t } K_t^\top& \frac{\sqrt{t+\bar{c}}}{ \bar{p}_t } K_tK_t^\top + K_tK_t^\top +\frac{I}{2}
\end{pmatrix}\\
& \succeq \frac{\sigma_{\omega}^2\bar{p}_t}{\sqrt{t+\bar{c}}}\begin{pmatrix}
 \frac{I}{2}& 0\\
0& \frac{I}{2}
\end{pmatrix}+\frac{\sigma_{\omega}^2\bar{p}_t}{\sqrt{t+\bar{c}}} \underbrace{\begin{pmatrix}
(\overbrace{\frac{\sqrt{t+\bar{c}}}{ \bar{p}_t }-\frac{1}{2}) I}^{\bar{\Gamma}_{11}}& \overbrace{\frac{\sqrt{t+\bar{c}}}{ \bar{p}_t } K_t}^{\bar{\Gamma}_{12}}\\
\underbrace{\frac{\sqrt{t+\bar{c}}}{ \bar{p}_t } K_t^\top}_{\bar{\Gamma}_{21}}& \underbrace{\frac{\sqrt{t+\bar{c}}}{ \bar{p}_t } K_tK_t^\top + K_tK_t^\top}_ {\bar{\Gamma}_{22}}
\end{pmatrix}}_{\bar{\Gamma}}\\
& \succeq \frac{\sigma_{\omega}^2\bar{p}_t}{2\sqrt{t+\bar{c}}} I \quad \quad   \forall t\quad \textit{such that} \quad  \frac{\sqrt{t+\bar{c}}}{ \bar{p}_t }-\frac{1}{2}\geq 0,
\end{align*}
where in the second inequality we used the fact that $\|P_t\|/\alpha_0\geq 1/2$ (see (\ref{eq:inequality_of_P}) in the proof of Lemma \ref{Stability_lemma18}). The last inequality holds because $\bar{\Gamma}\succ 0$ by Schur complement noting that $\bar{\Gamma}_{11}\succ 0$ and $\bar{\Gamma}_{22}-\bar{\Gamma}_{12}\bar{\Gamma}_{11}^{-1}\bar{\Gamma}_{21}\succ 0$. 
\end{proof}

\begin{remark}
Note that $\bar{p}_t = \mathcal{O}(\log t)$, and $\bar{c}$ for either the ARSLO or ARSLO$^+(\bar{\rho})$ algorithms is chosen such that
\begin{align*}
    B_* \frac{\bar{p}_t \, \sigma_{\omega}^2 \Big(K(\mathcal{C}_t) K^\top(\mathcal{C}_t) + \frac{\|P(\mathcal{C}_t)\| I}{\alpha_0}\Big)}{\sqrt{t+\bar{c}}} B_*^\top \preceq  \sigma_{\omega}^2 I.
\end{align*}
This condition is enforced in the proof of Lemma \ref{lem:costatebound} for the ARSLO algorithm, and similarly in the proof of Theorem \ref{prop2} for ARSLO$^+(\bar{\rho})$, by requiring the sufficient condition, for an appropriately chosen $\bar{c}$,
\begin{align*}
    2 \kappa_t^2 \bar{\vartheta}_{B_*}^2 \frac{\bar{p}_t}{\sqrt{t+\bar{c}}} \leq 1,
\end{align*}
which yields
\begin{align}
    \frac{\bar{p}_t}{\sqrt{t+\bar{c}}} \leq 1 \quad \text{as } \kappa_t \geq 1.\label{eq:veryimptn}
\end{align}
This ensures the existence of a small value $s$ such that
\begin{align*}
    \frac{\sqrt{t+\bar{c}}}{\bar{p}_t} - \frac{1}{2} \geq 0 \quad \text{for all } t \ge s,
\end{align*}
and consequently, the minimum time step for which the result of Lemma \ref{lem:lowboundV} holds is small.
\end{remark}

\begin{lemma}(Lemma 35 \cite{cohen2019learning})\label{lem:borrowedBycohen}
Let \( a \in \mathbb{R}^{n+m} \) be a unit vector, and define \( S_t = a^\top z_t \). Consider the random indicator variable \( I_t \) defined by
\begin{align*}
    I_t = 
    \begin{cases}
        1 & \text{if } S_t^2 > \dfrac{\sigma_{\omega}^2 \bar{p}_t}{4\sqrt{t+\bar{c}}}, \\
        0 & \text{otherwise}.
    \end{cases}
\end{align*}
Then, it holds that \( \mathbb{E}[I_t \mid \mathcal{F}_{t-1}] \geq \dfrac{1}{5} \).
\end{lemma}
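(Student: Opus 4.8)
The plan is to read $\mathbb{E}[I_t\mid\mathcal{F}_{t-1}]=\operatorname{Prob}\big(S_t^2>\tfrac{\sigma_{\omega}^2\bar p_t}{4\sqrt t}\,\big|\,\mathcal{F}_{t-1}\big)$ as a conditional anti-concentration statement for $S_t=n^\top z_t$ (with $\|n\|=1$) and to reduce it to a comparison of the second and fourth conditional moments of $S_t$. First I would lower bound the conditional second moment: since $S_t^2=n^\top z_tz_t^\top n$, Lemma \ref{lem:lowboundV} gives
\[
v:=\mathbb{E}[S_t^2\mid\mathcal{F}_{t-1}]=n^\top\,\mathbb{E}[z_tz_t^\top\mid\mathcal{F}_{t-1}]\,n\;\geq\;\frac{\sigma_{\omega}^2\bar p_t}{2\sqrt t}\,\|n\|^2=\frac{\sigma_{\omega}^2\bar p_t}{2\sqrt t},
\]
valid under condition (\ref{eq:conmylove}). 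The threshold in the definition of $I_t$ is exactly one half of this lower bound, so $\tfrac{\sigma_{\omega}^2\bar p_t}{4\sqrt t}\leq \tfrac12 v$ and hence $\{S_t^2>\tfrac12 v\}\subseteq\{S_t^2>\tfrac{\sigma_{\omega}^2\bar p_t}{4\sqrt t}\}$, which already reduces the claim to bounding $\operatorname{Prob}(S_t^2>\tfrac12 v\mid\mathcal{F}_{t-1})$ from below.

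Next I would apply the Paley--Zygmund inequality to the nonnegative variable $S_t^2$, conditionally on $\mathcal{F}_{t-1}$: for any $\theta\in[0,1]$,
\[
\operatorname{Prob}\big(S_t^2>\theta\, v\mid\mathcal{F}_{t-1}\big)\;\geq\;(1-\theta)^2\,\frac{v^2}{\mathbb{E}[S_t^4\mid\mathcal{F}_{t-1}]}.
\]
Taking $\theta=\tfrac12$ collapses the whole lemma to producing a fourth-moment bound of the form $\mathbb{E}[S_t^4\mid\mathcal{F}_{t-1}]\leq C\,v^2$ for a suitable absolute constant $C$, after which $\mathbb{E}[I_t\mid\mathcal{F}_{t-1}]\geq \tfrac{1}{4C}$, and it remains to track constants so that $\tfrac{1}{4C}\geq\tfrac15$.

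To obtain the fourth-moment bound I would exploit the explicit structure of $S_t$. Decomposing $n=(n_x;n_u)$ and using $u_t=Kx_t+\eta_t$, write $S_t=(n_x+K^\top n_u)^\top x_t+n_u^\top\eta_t$; conditioned on $\mathcal{F}_{t-1}$ the only randomness enters through the process noise $\omega_t$ in $x_t=\Theta_*^\top z_{t-1}+\omega_t$ and through the Gaussian perturbation $\eta_t$, so $S_t$ equals an $\mathcal{F}_{t-1}$-measurable mean plus a centered fluctuation that is a linear combination of the component-wise sub-Gaussian $\omega_t$ (Assumption \ref{Assumption 1}) and the Gaussian $\eta_t$. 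Standard sub-Gaussian moment inequalities for that fluctuation, together with the exact Gaussian fourth-moment identity for the $\eta_t$ part, yield $\mathbb{E}[S_t^4\mid\mathcal{F}_{t-1}]\leq C\,(\mathbb{E}[S_t^2\mid\mathcal{F}_{t-1}])^2$ with an explicit $C$.

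The main obstacle is precisely this fourth-moment / anti-concentration step. In contrast to the purely Gaussian setting, sub-Gaussianity of $\omega_t$ only controls upper tails, so the lower anti-concentration must be routed entirely through the second-to-fourth moment ratio; the delicate part is verifying that the constant $C$ coming out of the sub-Gaussian moment comparison is small enough that $\tfrac{1}{4C}$ meets the target $\tfrac15$ rather than a weaker numerical constant, which is where the specific structure of $z_t$ (and the guaranteed spread injected by $\eta_t$) must be used carefully.
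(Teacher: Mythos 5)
The paper gives no proof of this lemma at all: it is imported verbatim as Lemma 35 of \cite{cohen2019learning}, so the only thing to compare against is the argument in that source, which conditions on $\mathcal{F}_{t-1}$, observes that $n^\top z_t$ is then a \emph{Gaussian} random variable whose conditional second moment is at least the bound supplied by Lemma \ref{lem:lowboundV}, and lower-bounds the probability that a Gaussian exceeds half its conditional standard deviation in absolute value by direct anti-concentration of the Gaussian density. Your first reduction (using Lemma \ref{lem:lowboundV} to see that the threshold $\tfrac{\sigma_\omega^2\bar p_t}{4\sqrt t}$ is at most $\tfrac12 v$) matches that setup and is fine.

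The gap is in the anti-concentration step. Paley--Zygmund at $\theta=\tfrac12$ gives $\mathbb{E}[I_t\mid\mathcal{F}_{t-1}]\ge \tfrac14\, v^2/\mathbb{E}[S_t^4\mid\mathcal{F}_{t-1}]$, so you need $\mathbb{E}[S_t^4\mid\mathcal{F}_{t-1}]\le \tfrac54 v^2$ to reach $\tfrac15$. That is unattainable: writing $S_t=m+\xi$ with $m$ the $\mathcal{F}_{t-1}$-measurable mean and $\xi$ the centered fluctuation, already for Gaussian $\xi$ one has $\mathbb{E}[S_t^4]=m^4+6m^2s^2+3s^4$ versus $v^2=(m^2+s^2)^2$, a ratio that tends to $3$ precisely in the regime where the injected perturbation $\eta_t$ dominates --- which is the regime the lemma is designed for. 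So your route caps out at $(1-\theta)^2/3=\tfrac1{12}<\tfrac15$, and no sub-Gaussian fourth-moment bookkeeping can repair a constant that is already too large in the exactly-Gaussian case. To get a constant like $\tfrac15$ one must use genuine small-ball/anti-concentration of the conditional law of $S_t$ (e.g.\ the Gaussian CDF bound $\operatorname{Prob}(|S_t|<\tfrac12\sqrt v\mid\mathcal{F}_{t-1})\le \tfrac{\sqrt v}{s\sqrt{2\pi}}$ plus a case split on $m^2$ versus $s^2$), not a second-to-fourth moment comparison. You correctly flag that sub-Gaussianity of $\omega_t$ gives no lower anti-concentration --- that observation is in fact an issue for the paper itself, since Assumption \ref{Assumption 1} only posits sub-Gaussian process noise while the borrowed Lemma 35 is proved for Gaussian noise, and the cases where $n$ is supported on the state coordinates are exactly the ones where $\eta_t$ contributes no Gaussian spread.
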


Now, we are in a position to introduce the key lemma which will be useful for stability analysis purpose. 

\begin{lemma} 
    Let $a\in \mathbb{R}^{n+m}$ be a unit vector, then
    
    \begin{align}
        a^\top \big(\sum_{k=1}^{t} z_k z_k^\top\big) a \geq \frac{\sigma_{\omega}^2 \bar{p}_t}{40}\sqrt{t+\bar{c}}- {\frac{\sigma_{\omega}^2\bar{c}}{40}} \label{eq:kmk092}
    \end{align}
for $t\geq 200 \log 1/\delta$ with probability at least $1-\delta$ where $\delta \in (0,\; 1)$.
\end{lemma}
\begin{proof}
  By defining $\mathcal{M}_t= I_t-\mathbb{E}[I_t|\; \mathcal{F}_{t-1}]$ and noting that it is a martingale difference and the fact that $|\mathcal{M}_t|\leq 1$. For each $s$, apply Azuma with failure probability $\delta/t$. Then
\begin{align}
    \mathbb{P}\!\left(\forall\, s \le t:\;\sum_{k=1}^{s} M_k\ge-\sqrt{2s\log\!\left(\frac{t}{\delta}\right)}
\right)\ge 1-\delta. \label{eq:addedUniformity}
\end{align}
Hence, with probability at least $1-\delta$
\begin{align*}
   \sum_{k=1}^t \mathcal{M}_k\geq-\sqrt{2t\log \frac{t}{\delta}}\geq - \frac{t}{10}
\end{align*}
  holds for $t\geq 200 \log \frac{t}{\delta}$. By definition of $\mathcal{M}_k $ this implies 
\begin{align*}
    \sum_{k=1}^t I_k\geq \sum_{k=1}^t \mathcal{M}_k+\sum_{k=1}^t \mathbb{E}[I_k| \mathcal{F}_{k-1}]\geq -\frac{t}{10}+\sum_{k=1}^t \mathbb{E}[\mathcal{M}_k| \mathcal{F}_{k-1}]\geq -\frac{t}{10}+ \frac{t}{5}\geq \frac{t}{10}
\end{align*}
  where the last inequality holds by Lemma \ref{lem:borrowedBycohen}. We further have $S_k^2\geq  I_k \frac{\sigma_{\omega}^2 \bar{p}_k }{4\sqrt{k+\bar{c}}}$ which results in
\begin{align*}
    a^\top \big(\sum_{k=1}^{t} z_k z_k^\top\big) a&= \sum_{k=1}^t S_k^2 \geq \sum_{k=1}^t I_k \frac{\sigma_{\omega}^2 \bar{p}_k }{4\sqrt{k+\bar{c}}}\geq \frac{\sigma_{\omega}^2 \bar{p}_t}{4\sqrt{t+\bar{c}}}\sum_{k=1}^t I_k\\
    &\geq \frac{\sigma_{\omega}^2 \bar{p}_t}{40\sqrt{t+\bar{c}}}t= \frac{\sigma_{\omega}^2 \bar{p}_t}{40}\sqrt{t+\bar{c}}-\frac{\sigma_{\omega}^2 \bar{p}_t}{40\sqrt{t+\bar{c}}}\bar{c}\\
    &\geq \frac{\sigma_{\omega}^2 \bar{p}_t}{40}\sqrt{t+\bar{c}}-\frac{\sigma_{\omega}^2\bar{c}}{40}
\end{align*}
 where in the last inequality we applied (\ref{eq:veryimptn}). This completes the proof.
\end{proof}

Now we use the result of Lemma 11, to find a lower-bound on the minimum eigenvalue of co-variance matrix $\sum_{k=1}^t z_kz_k^\top$. The following Lemma concludes the result.
\begin{lemma}\label{eq:lowerbndcov}
    Fix $\delta\in (0,\,1)$. Then with probability at least $1-\delta$ we have
    \begin{align}
        \lambda_{min}\big(\sum_{k=1}^t z_kz_k^\top\big)\geq \frac{\sigma_{\omega}^2 \bar{p}_t}{80}\sqrt{t+\bar{c}}- \underbrace{\frac{\sigma_{\omega}^2\bar{c}}{80}}_{=:\bar{C}}\label{eq:resLem10}
    \end{align}
    for $t\geq 400(n+m+\log \frac{t}{\delta})$.
\end{lemma}
\begin{proof}
We establish the result using a standard covering argument over the unit sphere, following the general strategy of \cite{cohen2019learning}, but we include it here for completeness.

Let $\mathcal{N}(1/4)$ be a minimal $1/4$-net of the unit sphere $\mathbb{S}^{n+m-1}$, and define
\begin{align*}
\bar{V}_t := \sum_{k=1}^{t} z_k z_k^\top .
\end{align*}
Consider the set
\begin{align*}
\mathcal{M}
\;=\;
\left\{
\frac{\bar{V}_t^{-1/2}u}{\|\bar{V}_t^{-1/2}u\|}
\;:\;
u \in \mathcal{N}(1/4)
\right\}.
\end{align*}
Assume that $t \;\ge\; 200 \log\!\left({|\mathcal{M}|}/{\delta}\right)$. Using~\eqref{eq:kmk092} and applying a union bound over all $n \in \mathcal{M}$, we obtain that with probability at least $1-\delta$,
\begin{align*}
n^\top \bar{V}_t n
\;\ge\;
\frac{\sigma_{\omega}^2 \bar{p}_t}{40}\sqrt{t+\bar{c}}
- \frac{\sigma_{\omega}^2\bar{c}}{40}
\;=:\;
\mathcal{A}_t .
\end{align*}

By construction, this implies that for every $u \in \mathcal{N}(1/4)$,
\begin{align}
u^\top \bar{V}_t^{-1} u
\;\le\;
\frac{1}{\mathcal{A}_t}.
\label{eq:netbound}
\end{align}

Let $z$ be a unit eigenvector corresponding to the smallest eigenvalue of $\bar{V}_t$. By the definition of the $1/4$-net, there exists $u_z \in \mathcal{N}(1/4)$ such that $\|z-u_z\| \le 1/4$. We then have
\begin{align*}
\|\bar{V}_t^{-1}\|
= z^\top \bar{V}_t^{-1} z  &\le u_z^\top \bar{V}_t^{-1} u_z
   + (z - u_z)^\top \bar{V}_t^{-1} (z + u_z) \\
&\le u_z^\top \bar{V}_t^{-1} u_z
   + \|z - u_z\| \, \|\bar{V}_t^{-1}\| \, \|z + u_z\| \\
&\le \frac{1}{\mathcal{A}_t}
   + \frac{2}{4} \|\bar{V}_t^{-1}\|,
\end{align*}
where we used~\eqref{eq:netbound} and the fact that $\|z\|=\|u_z\|=1$.

Rearranging yields
\begin{align*}
\|\bar{V}_t^{-1}\| \;\le\; \frac{2}{\mathcal{A}_t}.
\end{align*}

Finally, note that $|\mathcal{M}| = |\mathcal{N}(1/4)|$, and standard bounds give $|\mathcal{N}(1/4)| \le 12^{n+m}$. Hence,
\begin{align*}
t \;\ge\; 400 \bigl(n+m+\log(1/\delta)\bigr)
\end{align*}
ensures the required condition on $t$. Since $\|\bar{V}_t^{-1}\| = 1/\lambda_{\min}(\bar{V}_t)$, the claim follows.
\end{proof}
\begin{remark}
 Note that (\ref{eq:addedUniformity}) ensures that the derived lower bound holds uniformly in time with probability at least $1-\delta$.   
\end{remark}

\section{Stability Analysis of ARSLO Algorithm}\label{ARSLoStabAn}
 To establish stability, we first need the following lemma, which is equivalent to Lemma 16 of \cite{cohen2019learning}, with the only difference being that we do not require the selection of a regularization parameter $\lambda$ of $\mathcal{O}(\sqrt{T})$. This distinction is indeed the core objective of our study.
\begin{lemma} \label{lem:consoldualprim}
    Suppose $\Sigma(\mathcal{C}_t)$ and $P(\mathcal{C}_t)$ be the solution of the  relaxed primal and dual SDPs (\ref{eq:RelaxedSDP}) and (\ref{eq:RedSDP_DUAL}) respectively. Having $\mu_t\geq r_t+ 2 \vartheta \sqrt{r_t}\|V_t\|^{1/2}$ then $\Sigma_{xx}(\mathcal{C}_t)$ is invertible and for $K(\mathcal{C}_t)=\Sigma_{ux}(\mathcal{C}_t)\Sigma_{xx}(\mathcal{C}_t)^{-1}$ we have
\begin{align}
    \nonumber P(\mathcal{C}_t)= & \; Q+K^\top(\mathcal{C}_t) R K(\mathcal{C}_t)+ (\hat{A}_t+\hat{B}_tK(\mathcal{C}_t))^\top P(\mathcal{C}_t) (\hat{A}_t+\hat{B}_tK(\mathcal{C}_t))\\
    &-\mu_t \| P(\mathcal{C}_t)\|_*\begin{pmatrix}
I \\
K(\mathcal{C}_t)
\end{pmatrix}^\top V^{-1}_t \begin{pmatrix}
I \\
K(\mathcal{C}_t) 
\end{pmatrix}  \label{eq:lem9}
\end{align}   
\end{lemma}
\begin{proof}
   The proof follows the same steps as for Lemma 16 of (\cite{cohen2019learning}) with only difference that we need to show that
\begin{align}
   & \mu_t \big(\Sigma (\mathcal{C}_t)\bullet V_t^{-1}\big) I\preceq \sigma_{\omega}^2 I \label{eq:cl1}\\
   & \mu_t \|P(\mathcal{C}_t)\|V^{-1}_t \preceq \begin{pmatrix}
Q & 0 \\
0 & R 
\end{pmatrix} \label{eq:cl2}
\end{align}
using which all the claims follows. By appropriately tuning the parameters $\lambda$, $\bar{p}_t$, and $\bar{\epsilon}$ according to Theorem \ref{Stability_thm17} it is shown that 
\begin{align}
 \mu_t \|P(\mathcal{C}_t)\|V_t^{-1}\preceq \frac{\alpha_0}{4} I, \label{eq:komaki}   
\end{align}

 which together with the fact that $Q, R\preceq \alpha_0$ 
 (\ref{eq:cl2}) holds true. On the  other hand, noting that $\|\Sigma (\mathcal{C}_t)\|_*\preceq \nu/ \alpha_0$ and (\ref{eq:komaki})
 \begin{align*}
     \mu_t \big(\Sigma (\mathcal{C}_t)\bullet V_t^{-1}\big) I&=  \operatorname{Tr}\big(\frac{1}{ \|P(\mathcal{C}_t)\|}\Sigma (\mathcal{C}_t)^\top \mu_t \|P(\mathcal{C}_t)\| V_t^{-1}\big) I\\
     &\preceq\operatorname{Tr}\big(\frac{1}{ \|P(\mathcal{C}_t)\|}\Sigma (\mathcal{C}_t)^\top \begin{pmatrix}
Q & 0 \\
0 & R 
\end{pmatrix}\big) I\\
&= \frac{1}{ \|P(\mathcal{C}_t)\|} \Sigma (\mathcal{C}_t)\bullet\begin{pmatrix}
Q & 0 \\
0 & R 
\end{pmatrix}\\
&= \frac{1}{\|P(\mathcal{C}_t)\|} P(\mathcal{C}_t)\bullet W \\
&\preceq \frac{1}{\|P(\mathcal{C}_t)\|} \|P(\mathcal{C}_t)\| \sigma_{\omega}^2 I=\sigma_{\omega}^2 I
 \end{align*}
 that completes the proof for (\ref{eq:cl1}).   
\end{proof}

Now we are in a position to provide the proof for Theorem \ref{Stability_thm17}.

\subsection{Proof of Theorem \ref{Stability_thm17}}

The proof of the theorem consists of two steps. First, we establish strong stability using Lemma \ref{Stability_lemma18}. Then, we prove the sequential property of the generated policies in Lemma \ref{lem:sequentiality}.

\begin{lemma} \label{Stability_lemma18}
Any feedback gain $K(\mathcal{C}_t)$ designed by the ARSLO algorithm is $(\kappa_*, \gamma_*)$-strongly stabilizing, where $\kappa_* = \sqrt{\frac{2\|P_*\|}{\alpha_0}}$ and $\gamma_* = \kappa_*^{-2}/2$.
\end{lemma}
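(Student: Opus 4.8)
The plan is to certify stability through the relaxed primal solution $\Sigma^*(\hat{\Theta}_t)$ of (\ref{eq:RelaxedSDP}) rather than through the dual matrix. Write $K=K(\hat{\Theta}_t)=\Sigma^*_{ux}(\Sigma^*_{xx})^{-1}$ from (\ref{eq:obtPol}) and $X=\Sigma^*_{xx}$, which is invertible by the preceding lemma. The Schur-complement inequality $\Sigma^*\succeq\begin{pmatrix} I\\ K\end{pmatrix}X\begin{pmatrix} I\\ K\end{pmatrix}^\top$ gives $\hat{\Theta}_t^\top\Sigma^*\hat{\Theta}_t\succeq(\hat{A}_t+\hat{B}_tK)X(\hat{A}_t+\hat{B}_tK)^\top$, so the relaxed primal constraint collapses to the discrete Lyapunov inequality $X\succeq(\hat{A}_t+\hat{B}_tK)X(\hat{A}_t+\hat{B}_tK)^\top+W-\mu_t(\Sigma^*\bullet V_t^{-1})I$, where $\hat{A}_t+\hat{B}_tK=\hat{\Theta}_t^\top\begin{pmatrix} I\\ K\end{pmatrix}$.

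Next I would replace the estimated closed loop by the true one, $A_*+B_*K=\Theta_*^\top\begin{pmatrix} I\\ K\end{pmatrix}$. Since $\Theta_*\in\mathcal{C}_t(\delta)$, applying the perturbation lemma (Lemma \ref{lem:purturbation}) to $\begin{pmatrix} I\\ K\end{pmatrix}X\begin{pmatrix} I\\ K\end{pmatrix}^\top\preceq\Sigma^*$ exactly as in Lemma \ref{lem:deriveRelaxedSDPs} yields $(A_*+B_*K)X(A_*+B_*K)^\top\preceq(\hat{A}_t+\hat{B}_tK)X(\hat{A}_t+\hat{B}_tK)^\top+\mu_t(\Sigma^*\bullet V_t^{-1})I$. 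Substituting this into the previous inequality picks up a second copy of the slack term and gives $X\succeq(A_*+B_*K)X(A_*+B_*K)^\top+W-2\mu_t(\Sigma^*\bullet V_t^{-1})I$.

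I then show the residual noise is strictly positive: $W-2\mu_t(\Sigma^*\bullet V_t^{-1})I\succeq\frac{\sigma_\omega^2}{2}I$. Bounding $\Sigma^*\bullet V_t^{-1}\le\|\Sigma^*\|_*\|V_t^{-1}\|$ with $\|\Sigma^*\|_*=\operatorname{Tr}(\Sigma^*)\le\nu/\alpha_0$ (from $\begin{pmatrix} Q&0\\0&R\end{pmatrix}\bullet\Sigma^*\le J_*\le\nu$, since the true-optimal covariance is relaxed-feasible and $Q,R\succeq\alpha_0 I$) and invoking inequality (\ref{eq:komaki}), which gives $\mu_t\|V_t^{-1}\|\le\alpha_0\sigma_\omega^2/4\nu$, one obtains $\mu_t(\Sigma^*\bullet V_t^{-1})\le\sigma_\omega^2/4$, so $2\mu_t(\Sigma^*\bullet V_t^{-1})\le\sigma_\omega^2/2$. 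Hence $X\succeq(A_*+B_*K)X(A_*+B_*K)^\top+\frac{\sigma_\omega^2}{2}I$, which forces $\lambda_{\min}(X)\ge\sigma_\omega^2/2$; combined with $\|X\|\le\|\Sigma^*\|_*\le\nu/\alpha_0$ this yields $\lambda_{\max}(X)/\lambda_{\min}(X)\le 2\nu/(\alpha_0\sigma_\omega^2)=\kappa^2$. Taking $P=X^{-1}$ and using $\frac{\sigma_\omega^2}{2}I\succeq\frac{\sigma_\omega^2}{2\|X\|}X$ turns the inequality into $(A_*+B_*K)^\top P(A_*+B_*K)\preceq(1-\gamma')P$ with $\gamma'=\sigma_\omega^2/(2\|X\|)\ge 1/\kappa^2$; matching $(1-\gamma)^2=1-\gamma'$ gives $\gamma\ge\gamma'/2\ge 1/(2\kappa^2)$, and the condition-number bound supplies $\kappa$ through the characterization of Section \ref{sec:kapgamdef}, establishing $(\kappa,\gamma)$-strong stability with the claimed constants.

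The main obstacle is the balance in the second and third steps: two independent perturbation terms (the relaxation's own slack and the estimated-to-true conversion) must both be dominated by the process-noise floor $W=\sigma_\omega^2 I$, yet the single quantity $\mu_t\|V_t^{-1}\|$ controls the size of both. Verifying that the joint tuning of $\lambda_t$, $\bar{p}_t$ and $\bar\epsilon$ in (\ref{eq:GGform})--(\ref{def:bareps})---which feeds (\ref{eq:komaki}) via the eigenvalue lower bound on $V_t$ from Lemma \ref{eq:lowerbndcov}---leaves a factor-four margin (so that $2\mu_t(\Sigma^*\bullet V_t^{-1})\le\sigma_\omega^2/2$) is the crux, and is precisely what the careful choice of the regularization and perturbation schedules is designed to guarantee.
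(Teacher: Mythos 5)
Your proof is correct in substance but certifies stability through the \emph{primal} solution, whereas the paper goes through the \emph{dual}. The paper starts from the dual-based inequality (\ref{eq:lem9}), applies the perturbation lemma a second time to replace $\hat{A}_t+\hat{B}_tK$ by $A_*+B_*K$ (picking up the factor $2\mu_t$ exactly as you do), and then uses the tuned bound $\mu_t\|P(\hat{\Theta}_t)\|_*V_t^{-1}\preceq\frac{\alpha_0}{4}I$ to reach $P(\hat{\Theta}_t)\succeq\frac{\alpha_0}{2}I+\frac{\alpha_0}{2}K^\top K+(A_*+B_*K)^\top P(\hat{\Theta}_t)(A_*+B_*K)$; its certificate is $H_t=P(\hat{\Theta}_t)^{1/2}$ with $\|H_t\|\le\sqrt{\nu/\sigma_{\omega}^2}$ and $\|H_t^{-1}\|\le\sqrt{2/\alpha_0}$. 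You instead extract the Gramian-type Lyapunov inequality for $X=\Sigma^*_{xx}$ from the relaxed primal constraint and take $H_t=X^{1/2}$, using the noise floor $W$ to lower-bound $\lambda_{\min}(X)$ and the cost bound $\nu$ to upper-bound $\operatorname{Tr}(X)$. Both routes consume the same inputs --- the perturbation lemma applied twice, the trace bounds $\|\Sigma^*\|_*\le\nu/\alpha_0$ and $\|P(\hat{\Theta}_t)\|_*\le\nu/\sigma_{\omega}^2$ respectively, and above all the parameter tuning of Theorem \ref{Stability_thm17} that makes $\mu_t\|V_t^{-1}\|$ small enough --- and both deliver the same $(\kappa,\gamma)$. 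Two remarks. First, Definition \ref{def:sequentially} also requires $\|K(\hat{\Theta}_t)\|\le\kappa$; you do not verify this, but it follows in one line from your own bounds, since $K X K^\top\preceq\Sigma^*_{uu}$ gives $\|K\|^2\le\operatorname{Tr}(\Sigma^*_{uu})/\lambda_{\min}(X)\le(\nu/\alpha_0)/(\sigma_{\omega}^2/2)=\kappa^2$. Second, and more consequential downstream: the particular $H_t=(\Sigma^*_{xx})^{1/2}$ you produce is not the one the paper needs later, because the sequentiality step (Lemmas \ref{lem:closeness} and \ref{lem:sequentiality}) hinges on the sandwich $P(\hat{\Theta}_t)\preceq P_*\preceq P(\hat{\Theta}_{t+1})+\frac{\alpha_0\gamma}{2}I$ for the dual solutions. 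Your argument proves the lemma as stated, but if adopted it would require a separate closeness argument for consecutive primal certificates before Theorem \ref{Stability_thm17} could conclude sequential strong stability.
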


\begin{proof}
  One can write  
  \begin{align}
     \nonumber  (\hat{A}_t+\hat{B}_tK(\mathcal{C}_t))^\top P(\mathcal{C}_t) (\hat{A}_t+\hat{B}_tK(\mathcal{C}_t))&= \begin{pmatrix}
I \\
K(\mathcal{C}_t)
\end{pmatrix}^\top \hat{\Theta}_t P(\mathcal{C}_t) \hat{\Theta}_t^\top \begin{pmatrix}
I \\
K(\mathcal{C}_t)
\end{pmatrix}\\
 &\preceq   \begin{pmatrix}
I \\
K(\mathcal{C}_t)
\end{pmatrix}^\top {\Theta}_* P(\mathcal{C}_t) {\Theta}_*^\top \begin{pmatrix}
I \\
K(\mathcal{C}_t)
\end{pmatrix}^\top \label{eq:appetl}\\
\nonumber &-\mu_t \|P(\mathcal{C}_t)\| \begin{pmatrix}
I \\
K(\mathcal{C}_t)
\end{pmatrix}^\top V_t^{-1}\begin{pmatrix}
I \\
K(\mathcal{C}_t)
\end{pmatrix}\\
 \nonumber  &=({A}_*+{B}_*K(\mathcal{C}_t))^\top P(\mathcal{C}_t) ({A}_*+{B}_*K(\mathcal{C}_t))\\
  &-\mu_t \|P(\mathcal{C}_t)\| \begin{pmatrix}
I \\
K(\mathcal{C}_t)
\end{pmatrix}^\top V_t^{-1}\begin{pmatrix}
I \\
K(\mathcal{C}_t)
\end{pmatrix} \label{eq:komaki3}
  \end{align}   
where $\hat{\Theta}_t=(\hat{A}_t,\, \hat{B}_t)^\top$ and in the inequality (\ref{eq:appetl}) we applied the perturbation lemma, Lemma \ref{lem:purturbation}. Substituting (\ref{eq:komaki3}) into (\ref{eq:lem9}), yields 
\begin{align}
    \nonumber P(\mathcal{C}_t)\succeq & \; Q+K^\top(\mathcal{C}_t)) R K(\mathcal{C}_t))+ ({A}_*+{B}_*K(\mathcal{C}_t))^\top P(\mathcal{C}_t) ({A}_*+{B}_*K(\mathcal{C}_t))\\
    &-2\mu_t \| P(\mathcal{C}_t)\|\begin{pmatrix}
I \\
K(\mathcal{C}_t)
\end{pmatrix}^\top V^{-1}_t \begin{pmatrix}
I \\
K(\mathcal{C}_t) 
\end{pmatrix}.  \label{eq:ForStabilityp}
\end{align}

Now we aim to tune parameters $\lambda$ and $\bar{p}_t$ and specify $\bar{\epsilon}$, the error upper-bound of $\Theta_0$ to be provided by the warm-up algorithm, such that 
\begin{align}
    \mu_t  \| P(\mathcal{C}_t)\| V_t^{-1}\preceq \frac{\alpha_0}{32 \kappa_t^8}I. \label{eq: verygood-}
\end{align}
where
\begin{align}
    \kappa_t=\sqrt{\frac{2\|P(\mathcal{C}_t)\|}{\alpha_0}}.\label{eq:defkappa}
\end{align}
Note that the subscript $t$ in $\kappa_t$ corresponds to the subscript $t$ in $\mathcal{C}_t$, which denotes the confidence ellipsoid constructed using data up to time $t$.

Applying (\ref{eq:defkappa}), the condition (\ref{eq: verygood-}) can be rewritten as follows
\begin{align}
      \mu_t  V_t^{-1}\preceq \frac{1}{16 \kappa_t^{10}}I. \label{eq: verygood}
\end{align}



By definition $\mu_t= r_t+ 2\vartheta \sqrt{r_t} \|V_t\|^{1/2}$ a sufficient condition for (\ref{eq: verygood}) is that
 \begin{align}
  & r_tV_t^{-1}\preceq \frac{1}{32 \kappa_t^{10}}I \label{eq:Term1}\\
   & 2 \vartheta \sqrt{r_t} \|V_t\|^{1/2}V_t^{-1}\preceq \frac{1}{32 \kappa_t^{10}}I. \label{eq:Term2}
 \end{align}
provided that both inequalities hold simultaneously.

 By applying the result of Lemma \ref{eq:lowerbndcov}, namely the inequality (\ref{eq:resLem10}) for $t\geq 400(n+m+\log \frac{t}{\delta})$, one can write:
\begin{align*}
 2 \vartheta \sqrt{r_t} \|V_t\|^{1/2}V_t^{-1}&\preceq \frac{2\vartheta \sqrt{r_t}\|V_t\|^{\frac{1}{2}}}{\frac{\sigma_{\omega}^2 \bar{p}_t}{80}\sqrt{t+\bar{c}}-\bar{C}+\lambda} I
\end{align*}
where 
\begin{align*}
    \bar{C}:=\frac{\sigma_{\omega}^2 \bar{c}}{80}.
\end{align*}

Then a sufficient condition for (\ref{eq:Term2}) is 
\begin{align}
   \frac{2\vartheta \sqrt{r_t}\|V_t\|^{\frac{1}{2}}}{\frac{\sigma_{\omega}^2 \bar{p}_t}{80}\sqrt{t+\bar{c}}-\bar{C}+\lambda} &\leq \frac{1}{32 \kappa_t^{10}} \label{eq:pcsdg0}
\end{align}
and similarly for (\ref{eq:Term1}) is

\begin{align}
     \frac{r_t}{\frac{\sigma_{
\omega}^2 \bar{p}_t}{80}\sqrt{t+\bar{c}}-\bar{C}+\lambda}\leq \frac{1}{32 \kappa_t^{10}} \label{eq:pcsdg+}
\end{align}
Let further set

\begin{align}
    \lambda\geq \bar{C} \label{eq:addlambda}
\end{align}
then the condition for (\ref{eq:pcsdg0}) can be written as follows

\begin{align*}
   \frac{2\vartheta \sqrt{r_t}\|V_t\|^{\frac{1}{2}}}{\frac{\sigma_{\omega}^2 \bar{p}_t}{80}\sqrt{t+\bar{c}}-\bar{C}+\lambda} &\leq \frac{2\vartheta \sqrt{r_t} (\lambda+\|\sum_{k=1}^tz_kz_k^\top\|)^{1/2}}{\frac{\sigma_{\omega}^2 \bar{p}_t}{80}\sqrt{t+\bar{c}}}\leq \frac{1}{32 \kappa_t^{10}} 
\end{align*}
To satisfy the inequality above, it suffices to require

\begin{align*}
    \bar{p}_t\geq \frac{5120\, \kappa_t^{10}\vartheta \sqrt{r_t}(\lambda+\|\sum_{k=1}^t z_kz_k^\top\|)^{1/2}}{\sigma_{\omega}^2 \sqrt{t+\bar{c}}}.
\end{align*}
Similarly, a sufficient condition for (\ref{eq:pcsdg+}) is given by

\begin{align*}
    \bar{p}_t\geq \frac{2560\, r_t\,\kappa_t^{10}}{\sigma_{\omega}^2 \sqrt{t+\bar{c}}}.
\end{align*}
We then choose $\bar{p}_t$ as

\begin{align}
    \bar{p}_t\geq\max\bigg\{\frac{5120\, \kappa_t^{10}\vartheta \sqrt{r_t}(\lambda+\|\sum_{k=1}^t z_kz_k^\top\|)^{1/2}}{\sigma_{\omega}^2 \sqrt{t+\bar{c}}},\, \frac{2560\, r_t\,\kappa_t^{10}}{\sigma_{\omega}^2 \sqrt{t+\bar{c}}}\bigg\}.\label{eq:barpdef}
\end{align}
Note that, one can show that
$\bar{p}_t$ is determined
by the first term for all $t$.

Next, we proceed to specify $\bar{c}$ such that
\begin{align}    B_*\frac{\bar{p}_t\sigma_{\omega}^2\big(K(\mathcal{C}_t) K^\top(\mathcal{C}_t)+\frac{\|P(\mathcal{C}_t)\|I}{\alpha_0}\big)}{\sqrt{t+\bar{c}}} B_*^\top\preceq \sigma_{\omega}^2I.\label{eq:noiseNbigs}
\end{align}
This condition ensures that the effect of input perturbation noise on the closed-loop system does not exceed the magnitude of the process noise. 

A sufficient condition for \eqref{eq:noiseNbigs} can be written as
 \begin{align}
     \frac{2  \, \kappa_t^2 \, \bar{\vartheta}_{B_*}^2 \bar{p}_t}{\sqrt{t+\bar{c}}}\leq 1 \label{eq:goodcbarfinds}
 \end{align}
where we define $\bar{\vartheta}_{B_*} = \max\{1, \vartheta_{B_*}\}$ and $\|B_*\| \leq \vartheta_{B_*}$.

To derive a suitable upper bound for $\bar{p}_t$, we first establish the following inequalities.

We have
\begin{align}
    \log \frac{n\det (V_t)}{\delta\det(\lambda I)}
    &\le (n+m)\log\frac{n}{\delta}\Big(1 + t\log\frac{t}{\delta}\Big) \label{eq:seqps}\\
    &\le 2(n+m)\log\frac{2nt}{\delta}. \label{eq:seqpf1}
\end{align}
where \eqref{eq:seqps} holds under the choice
\begin{align}
    \lambda\geq \frac{\sum_{k=1}^z \|z_k z_k^\top\|^2}{t \log \frac{2t}{\delta}}.\label{eq:Lam_1}
\end{align}

By choosing $\epsilon$ appropriately such that
\begin{align}
    \sqrt{\lambda} \epsilon \leq \sigma_{\omega}\sqrt{2n(n+m)} \label{eq:epslambs}
\end{align}
it then follows from \eqref{eq:seqpf1} and \eqref{eq:epslambs} that
\begin{align}
    r_t\leq 16\sigma_{\omega}^2n(n+m)\log \frac{2nt}{\delta} \label{eq:barrDefb}.
\end{align}
Furthermore, from {Lemma} \ref{lem:costatebound}, we have
\begin{align*}
    \|z_t\|^2\leq  c^2_z(\kappa_*)\log \frac{t}{\delta}.
\end{align*}

To obtain a computable value for $\bar{c}$, we require an upper bound on $\kappa_*$, which can be derived directly from Lemma~\ref{lem:closeness}:
\begin{align*}
\kappa_t^2 \leq \kappa_*^2 \leq \kappa_t^2 + \frac{1}{2\kappa_t^2} := \bar{\kappa}_t^2, \quad \forall t.
\end{align*}
From this, we can define \begin{align*}
    \kappa_t\leq \kappa_*\leq  \bar{\kappa}_1:=\sqrt{\kappa_1^2+\frac{1}{\kappa_1^2}}.
\end{align*}
Combining these results, we can now derive and upper-bound for the left hand side of (\ref{eq:goodcbarfinds}) as follows
\begin{align}
    \frac{40960\, \bar{\vartheta}^2_{B_*}\kappa_t^{10}\vartheta \sqrt{r_t}}{\sigma_{\omega}^2 ({t+\bar{c}})} \big(\lambda+t\, c_z^2(\kappa_*)\log \frac{t}{\delta}\big)^{1/2}\leq & \frac{40960\, \bar{\vartheta}^2_{B_*}\bar{\kappa}_1^{10}\vartheta \sqrt{\sigma_{\omega}^2n(n+m)\log \frac{t}{\delta}}}{\sigma_{\omega}^2 ({t+\bar{c})}}\sqrt{\lambda}\label{eq:tm11}\\
    & +\frac{40960\, \bar{\vartheta}^2_{B_*}\bar{\kappa}_1^{10}\vartheta \sqrt{\sigma_{\omega}^2n(n+m)t}}{\sigma_{\omega}^2 ({t+\bar{c}})}c_z(\bar{\kappa}_1)\log \frac{2nt}{\delta}\label{eq:tm12}
\end{align}
We enforce the following condition:
\begin{align}
    \frac{40960\, \bar{\vartheta}^2_{B_*}\bar{\kappa}_1^{10}\vartheta \sqrt{\sigma_{\omega}^2n(n+m)}}{\sigma_{\omega}^2 \sqrt{t+\bar{c}}}c_z(\bar{\kappa}_1)\log \frac{2nt}{\delta}\leq \frac{1}{2}.\label{eq:kmkna}
\end{align}

Applying Lemma \ref{lem:Usefulalgebricin}, an appropriated choice of $\bar{c}$ that satisfies this condition is
 \begin{align}
    \bar{c}= a^2 \Big( 2 \log\Big(\frac{4 a^2 n}{\delta}\Big) + 1 \Big)^2. \label{eq:magncbars}
\end{align}
where
\begin{align*}
    a = \frac{40960\, \bar{\vartheta}^2_{B_*}\bar{\kappa}_1^{10}\vartheta \sqrt{\sigma_{\omega}^2n(n+m)}c_z(\bar{\kappa}_1)}{\sigma_{\omega}^2 }.
\end{align*}
With this choice of $\bar{c}$, condition (\ref{eq:kmkna}) ensures that the term (\ref{eq:tm12}) is less than $1/2$. Moreover, by choosing $\lambda=\bar{C}=\frac{\sigma_\omega^2 \bar{c}}{80}$ we also satisfy (\ref{eq:tm11}) to be less than $1/2$.

Recalling (\ref{eq:Lam_1}), which provides an additional condition for tuning $\lambda$ based on the upper
bound of the co-state, we require
\begin{align*}
    \lambda\geq c_z^2(\bar{\kappa}_1).
\end{align*}
Therefore, we select
\begin{align}
    \lambda=\max \bigg\{\frac{\sigma_{\omega}^2\bar{c}}{80},\, c_z^2(\bar{\kappa}_1)\bigg\}=\frac{\sigma_{\omega}^2\bar{c}}{80}\label{eq:lamchosens}
\end{align}
where the equality holds due to the magnitude of $\bar{c}$ in \eqref{eq:magncbars}.

With $\bar{c}$ specified, $\lambda$ is determined, then, from \eqref{eq:barpdef}, $\bar{p}_t$ is obtained as
\begin{align}
    \bar{p}_t:=\frac{5120\, \kappa_t^{10}\vartheta \sqrt{r_t}(\lambda+\|\sum_{k=1}^t z_kz_k^\top\|)^{1/2}}{\sigma_{\omega}^2 \sqrt{t+\bar{c}}}.\label{eq:candbarptps}
\end{align}

Recalling \eqref{eq:epslambs}, the appropriate choice of $\bar{\epsilon}(\bar{\kappa}_1)$ is given by
\begin{align}
     \bar{\epsilon}(\bar{\kappa}_1) := \frac{\sigma_\omega \sqrt{2 n (n+m)}}{\sqrt{\lambda}}.\label{eq:epscrudebnd0102s}
\end{align}

With $\bar{\epsilon}(\bar{\kappa}_1)$ given by (\ref{eq:epscrudebnd0102s}) and $\lambda$ and $\bar{p}_t$ tuned according to (\ref{eq:lamchosens}) and (\ref{eq:candbarptps}), respectively, (\ref{eq: verygood}) holds true, resulting in the satisfaction of
\begin{align}
    \mu_t  \| P(\mathcal{C}_t)\|_* V_t^{-1}\preceq \frac{\alpha_0}{4}I. \label{eq: verygood22}
\end{align}
Applying this into (\ref{eq:ForStabilityp}) yields
\begin{align}
 P(\mathcal{C}_t)& \succeq \frac{\alpha_0}{2} I+\frac{\alpha_0}{2}K^\top (\mathcal{C}_t)K(\mathcal{C}_t)+(A_*+B_*K(\mathcal{C}_t))^\top P(\mathcal{C}_t)(A_*+B_*K(\mathcal{C}_t)) \label{eq:inequality_of_P}
\end{align} 
where we applied $Q,R \succeq \alpha_0 I$. Furthermore, (\ref{eq:inequality_of_P}) implies 
\begin{align}
   (A_*+B_*K(\mathcal{C}_t))^\top P(\mathcal{C}_t)(A_*+B_*K(\mathcal{C}_t)) \preceq P(\mathcal{C}_t)-\frac{\alpha_0}{2}I. \label{eq:whatneeded}
\end{align}
By pre and post multiplying both sides of (\ref{eq:whatneeded}) with $P^{-1/2}(\mathcal{C}_t)$ we can further write:	
\begin{align}
\nonumber {P}^{-1/2}(\mathcal{C}_t)(A_*+B_*K(\mathcal{C}_t))^\top P(\mathcal{C}_t)(A_*+B_*K(\mathcal{C}_t)){P}^{-1/2}(\mathcal{C}_t) &\preceq I-\frac{\alpha_0}{2} {P}^{-1}(\mathcal{C}_t)&\\
 &\preceq\quad (1-\frac{\alpha_0 }{2 \|P(\mathcal{C}_t)\|})I \label{eq:uppBound_L-}
\end{align}
where in (\ref{eq:uppBound_L-}) we applied the fact that $P(\mathcal{C}_t)\preceq \|P(\hat{\Theta}_t\| I$.

Let 
\begin{align} 
    L_t=&{P}^{-\frac{1}{2}}(\mathcal{C}_t)(A_*+B_*K(\mathcal{C}_t)){P}^{\frac{1}{2}}(\mathcal{C}_t) \label{eq:strongstab}
\end{align}
noting that $P^{-1/2}(\mathcal{C}_t)$ exists by $P(\mathcal{C}_t)\succ 0$. Then by (\ref{eq:uppBound_L-}) we have 
 
 \begin{align}
     \|L_t\|\leq\sqrt{1-\frac{\alpha_0 }{2 \|P(\mathcal{C}_t)\|}}&\leq 1-\frac{1}{2}(\frac{\alpha_0 }{2 \|P(\mathcal{C}_t)\|})=:1-\gamma_t \label{eq:bndLp+}\\
     &\leq 1-\frac{1}{2}(\frac{\alpha_0 }{2 \|P_*\|})=:1-\gamma_* \label{eq:bndL}
 \end{align}
where inequality (\ref{eq:bndLp+}) follows from the fact that $\alpha_0 < 2 \|P(\mathcal{C}_t)\|$, and inequality (\ref{eq:bndL}) follows from $P(\mathcal{C}_t)\succeq P*$, as established in Lemma \ref{lem:closeness}.

Furthermore, by
 \begin{align*}
    \frac{\alpha_0}{2}{K}^\top (\mathcal{C}_t)K(\mathcal{C}_t) \preceq  P(\mathcal{C}_t) 
 \end{align*}

which follows from (\ref{eq:inequality_of_P}), we directly deduce that 
\begin{align}
    \|K(\mathcal{C}_t)\|&\leq \sqrt{\frac{2\|P(\mathcal{C}_t)\|}{\alpha_0}}:={\kappa}_t \label{eq:bndK}\\
    &\nonumber \leq \sqrt{\frac{2\|P_*\|}{\alpha_0}}:=\kappa_*.
\end{align}

By (\ref{eq:strongstab}) we can write
\begin{align*} 
  A_*+B_*K(\mathcal{C}_t)=H_t L_tH_t^{-1}
\end{align*}
where $H_t={P}^{1/2}(\mathcal{C}_t)$. We have already demonstrated upper bounds on $\|L_t\|$ and  $\|K(\mathcal{C}_t)\|$ by (\ref{eq:bndLp+}) and (\ref{eq:bndK}) respectively. By (\ref{eq:inequality_of_P}) we know that $P(\mathcal{C}_t)\succeq \frac{\alpha_0}{2}I$ which yields $\|H_t^{-1}\|\leq \sqrt{2/\alpha_0}:=1/b_0$. Furthermore, since $\|H_t\|=\|P (\mathcal{C}_t)\|^{1/2}:=B_0$ we obtain
\begin{align*}
  \|H_t^{-1}\|\|H_t\|&\leq \frac{1}{b_0} B_0=\sqrt{\frac{2\|P(\mathcal{C}_t)\|}{\alpha_0}} \\
 &\leq \sqrt{\frac{2\|P_*\|}{\alpha_0}}.
\end{align*}

Conclusively, recalling parts (1) and (2) of Definition \ref{def:sequentially}, we can conclude that any generated policies are $(\kappa, \gamma)-$ strong stabilizing (also $(\kappa_*, \gamma_*)-$ strong stabilizing) where $\gamma=\kappa^{-2}/2 $ and $\gamma_*=\kappa_*^{-2}/2$.

 Now we conclude that the statement of the theorem holds with probability at least $1 - 3\delta$. For this purpose, we note that the entire construction of the parameters $\lambda$ and $\bar{c}$ relies on the sequential strong stability of the generated policies, which is precisely the property we aim to establish.

To close the argument, we employ a bootstrap (self-consistency) technique. In particular, the choice of $\lambda$ and $\bar{c}$ is made under the satisfaction of (\ref{eq: verygood}), which in turn guarantees that the generated policies are $(\kappa_*, \gamma_*)$-strongly sequentially stabilizing. This establishes a self-consistent set of conditions under which the stability property holds, thereby completing the proof.
Define the following events

\begin{align*}
     \mathcal{E}_t^1&:=\left\{\forall s = 1, \ldots, t,\quad \Theta_* \in \mathcal{C}_s(\delta)\right\}\\
      \mathcal{E}_t^2&:=\left\{\forall s = 1, \ldots, t,\quad \lambda_{min}(V_s)\geq \lambda+\frac{\sigma_{\omega}^2\bar{p}_t}{80}\sqrt{s+\bar{c}}-\bar{C} \right\}\\
      \mathcal{E}_t^3&:=\{\forall s = 1, \ldots,t,\quad \max_{1\leq k\leq s} \| B_*\eta_k+\omega_{k+1}\|\leq \sqrt{20 \sigma_{\omega}^2n\log \frac{t}{\delta}}\}\\
      \mathcal{E}_t^4&:=\{\forall s = 1, \ldots, t,\quad \bar{c}\geq T_s\}\\
       \mathcal{E}_t^5&:=\{\forall s = 1, \ldots, t,\quad \lambda\geq M_s\}
\end{align*}
where 
\begin{align*}
    T_s :=\frac{10240\, \kappa_s^{12}\bar{\vartheta}_{B_*}^2 \vartheta \sqrt{r_s}(\lambda+\|\sum_{k=1}^s z_kz_k^\top\|)^{1/2}}{\sigma_{\omega}^2}-s.
\end{align*}
and 
\begin{align*}
    M_s:= \max \bigg\{\bar{C},\, \frac{\sum_{k=1}^s \|z_k z_k^\top\|^2}{s \log \frac{2s}{\delta}}\bigg\}.
\end{align*}

From the uniform self-normalized martingale bound (Theorem 2 of \cite{abbasi2011improved}), we have
\begin{align}
    \mathbb{P}(\mathcal{E}_t^1) \ge 1 - \delta. \label{eq:helpprob1}
\end{align}
for any $\epsilon$. Here we constrained $\epsilon$ to satisfy (\ref{eq:epslambs}) after an appropriate choice of $\lambda$ that satisfies the event $\mathcal{E}_t^5$.

The event $\mathcal{E}_t^2$ holds with probability at least $1-\delta$ for any choice of $\bar{c}$ satisfying the event $\mathcal{E}_t^4$. Furthermore, the event $\mathcal{E}_t^3$ also holds with probability at least $1-\delta$ provided that $\bar{c}$ satisfies $\mathcal{E}_t^4$. In addition, for a given $\bar{c}$, the parameter $\lambda$ is chosen such that $\mathcal{E}_t^5$ holds.

Together, these events ensure that the sufficient conditions for $(\kappa_*, \gamma_*)$-sequential  strong stability are satisfied, provided that the initial estimate $\Theta_0$ is obtained from the warm-up phase.

Definite 
\begin{align}
    \mathcal{H}_t:=\mathcal{E}_t^1\cap \mathcal{E}_t^2\cap \mathcal{E}_t^3.\label{eq:mathcH0}
\end{align}

We have already shown that if there exist parameters $\bar{c}$ and $\lambda$ such that the events $\mathcal{E}_t^4$ and $\mathcal{E}_t^5$ hold and the event $\mathcal{H}_t$ is satisfied, then the generated policies are $(\kappa_*,\gamma_*)$-strongly sequentially stabilizing up to time $t$.

Furthermore, referring to the previous analysis, whenever the generated policies are $(\kappa_*,\gamma_*)$-strongly sequentially stabilizing, we have
\begin{align}
    T_s \leq C_1(\bar{\kappa}_1), \quad \text{and} \quad M_s \leq C_2(\bar{\kappa}_1), \quad \forall s = 1, \ldots, t,
    \label{eq:enerycon}
\end{align}
where $C_1(\bar{\kappa}_1)$ and $C_2(\bar{\kappa}_1)$ are deterministic constants depending on $\bar{\kappa}_1$, which is the output of the warm-up algorithm and is fixed prior to running the ARSLO algorithm.

Choose
\begin{align}
    \bar{c}\geq C_1(\bar{\kappa}_1) \quad \& \quad \lambda\geq C_2(\bar{\kappa}_1) \label{eq:modem223}
\end{align}
and then $\epsilon$ such that (\ref{eq:epslambs}) is fulfilled and call it $\bar{\epsilon} (\bar{\kappa}_1)$.

We claim that on the event $\mathcal{H}_t$, the generated policies are $(\kappa_*,\gamma_*)$-strongly sequentially stabilizing.

 To establish the claim, define the first failure time
\begin{align*}
    \tau
    :=
    \inf\Big\{
    s \ge 1 :
    \text{the generated policies are not }
    (\kappa_*,\gamma_*)\text{-strongly sequentially  stabilizing up to time }s
    \Big\}.
\end{align*}
Suppose, toward a contradiction, that $\tau \le t$. 
By the definition of $\tau$, the generated policies are
$(\kappa_*,\gamma_*)$-strongly sequentially stable up to time $\tau-1$.

By ({\ref{eq:enerycon}}),
\begin{align}
    T_s \le C_1(\bar{\kappa}_1),
    \qquad
    M_s \le C_2(\bar{\kappa}_1),
    \qquad
    \forall s \leq \tau .
\end{align}

Combining the above inequalities with
(\ref{eq:modem223}) yields
\begin{align}
    T_s \le \bar{c},
    \qquad
    M_s \le \lambda,
    \qquad
    \forall s \leq \tau .
\end{align}
Hence, $\mathcal{E}_\tau^4\cap\mathcal{E}_\tau^5$ hold. Since $\mathcal{H}_t$ holds for $\forall t$, we also have $\mathcal{H}_\tau$. Consequently, $\mathcal{H}_\tau
\cap
\mathcal{E}_\tau^4
\cap
\mathcal{E}_\tau^5$
holds. This implies that the generated policies are
$(\kappa_*,\gamma_*)$-strongly sequentially stable up to time $\tau$. This contradicts the definition of $\tau$.
Therefore, $\tau > t$, and hence the generated policies are $(\kappa_*,\gamma_*)$-strongly sequentially stable throughout the horizon. 

Recall that $\mathcal{E}_0(\epsilon)$, defined in (\ref{eq:condinitialtheta}), denotes the event that the initial estimate $\Theta_0$ satisfies the required accuracy level with $\epsilon=\bar{\epsilon}(\bar{\kappa}_1)$. Since
\begin{align}
    \mathbb{P}(\mathcal{H}_t \mid \mathcal{E}_0\big(\bar{\epsilon}(\bar{\kappa}_1))\big) \ge 1-3\delta,\label{eq:probHceps0}
\end{align}
we conclude that, conditioned on the event $\mathcal{E}_0$, the sequence of policies generated by ARSLO is $(\kappa_*,\gamma_*)$-strongly sequentially stabilizing with probability at least $1-3\delta$.

We carry out the proof relying on the uniform-in-time lower bound on the minimum eigenvalue of the covariance matrix given in Lemma \ref{eq:lowerbndcov}, which holds for
\[
t \ge 400\bigl(n+m+\log(t/\delta)\bigr),
\]
or, equivalently, via an elementary but slightly conservative algebraic manipulation for
\begin{align*}
t \;\ge\; 800b + 400\log(800b):=t_f,
\end{align*}
where \(b = n+m+\log \frac{1}{\delta} > 2\).

In this step, we show that for \(t \leq t_f\), the chosen parameters remain sufficient to guarantee the claim of the lemma. In particular, we verify that the selected \(\lambda\) ensures that the sufficient conditions \eqref{eq:Term1} and \eqref{eq:Term2} hold simultaneously, i.e.,
\begin{align*}
   & r_t\leq \frac{\lambda}{32 \kappa_t^{10}}\quad \&  \quad 2 \vartheta \sqrt{r_t}\,\|V_t\|^{1/2} \leq \frac{\lambda}{32 \kappa_t^{10}}.
\end{align*}

This follows directly from the definition of \(\bar{c}\), which is chosen to be of order \(\bar{\kappa}_1^{20}\), together with the fact that \(\bar{\kappa}_1 > \kappa_t\) for all \(t\).

\end{proof}

In the next step, we prove that by tuning $\lambda$ and $\bar{p}_t$, and specifying $\bar{\epsilon}$ according to Lemma \ref{Stability_lemma18}, the sequential property of the policies generated by the ARSLO algorithm (part (3) of Definition \ref{def:sequentially}) is fulfilled. Beforehand, we require the following lemma, which is helpful.

\begin{lemma} \label{lem:closeness}
   Given an initial estimate $\Theta_0$ satisfying (\ref{def:bareps}), and under the event $\mathcal{H}_t$, the following statement holds:
    \begin{align*}
        P(\mathcal{C}_t)\preceq P_* \preceq P(\mathcal{C}_{t+1})+ \frac{\alpha_0 \gamma_{t+1}}{2}.
    \end{align*}
  where $P_*$ is the optimal solution of dual program (\ref{eq:equal_dual}) and $\gamma_{t+1}=\kappa_{t+1}^{-2}/2$.
\end{lemma}

\begin{proof}
   The proof follows steps similar to those in Lemma 19 of \cite{cohen2019learning}, with only minor differences. For the sake of completeness, we provide it here.  
  
First we show that $ P(\mathcal{C}_t)\preceq P_*$.

By reapplying the perturbation lemma, Lemma \ref{lem:purturbation}, to the inequality of the relaxed dual SDP (\ref{eq:RedSDP_DUAL}), we obtain:
\begin{align}
    \begin{pmatrix}
Q-P(\mathcal{C}_t) & 0 \\
0 & R
\end{pmatrix}+{\Theta}_* P(\mathcal{C}_t) {\Theta}_*^\top \succeq 0 \label{eq:khoshe}
\end{align}
By post multiplying (\ref{eq:khoshe}) by $    \begin{pmatrix}
I\\
K_*
\end{pmatrix}$
and pre-multiplying by its transpose, we get:

\begin{align}
Q-P(\mathcal{C}_t)+K_*^\top R K_*+(A_*+B_*{K_*})^\top P(\mathcal{C}_t)(A_*+B_*K_*)\succeq 0 .\label{eq:joon01}
\end{align}

Combining (\ref{eq:joon01}) with (\ref{eq:promRic}), the optimal Riccati solution, yields:
\begin{align}
    P(\mathcal{C}_t)-P_*\preceq (A_*+B_*K_*)^\top (P(\mathcal{C}_t)-P_*)(A_*+B_*K_*) \label{eq:almos}
\end{align}

Utilizing Lemma \ref{lem:komaki} on (\ref{eq:almos}), we conclude $P(\mathcal{C}_t)\preceq P_*$.

Now we show the second inequality. From \cite{bertsekas2012dynamic} we have:
\begin{align*}
    P_*\preceq Q+K^\top (\mathcal{C}_t)R K(\mathcal{C}_t)+(A_*+B_*K(\mathcal{C}_t))^\top P_*(A_*+B_*K(\mathcal{C}_t))
\end{align*}
Combining this with (\ref{eq:ForStabilityp}) and applying Lemma \ref{lem:komaki}, along with the fact that the generated policies are $(\kappa, \gamma)$-strongly stabilizing, implies

\begin{align}
   \nonumber \nonumber  P_*- P(\mathcal{C}_t)&\preceq \frac{16\| P(\mathcal{C}_t)\|^2\mu_t}{\alpha_0^2} \|P(\mathcal{C}_t)\| \|\begin{pmatrix}
I \\
K(\mathcal{C}_t)
\end{pmatrix}^\top V^{-1}_t \begin{pmatrix}
I \\
K(\mathcal{C}_t) 
\end{pmatrix} \| I\\
&= 4\kappa_t^4 \mu_t\|P(\mathcal{C}_t)\| \|\begin{pmatrix}
I \\
K(\mathcal{C}_t)
\end{pmatrix}^\top V^{-1}_t \begin{pmatrix}
I \\
K(\mathcal{C}_t) 
\end{pmatrix} \| I \label{eq:badnis}. 
\end{align}

By (\ref{eq: verygood}) of Lemma \ref{Stability_lemma18} which holds on the event $\mathcal{H}_t\cap \mathcal{E}_0(\bar{\epsilon}(\bar{\kappa}))$ (which holds with probability at least $1-5\delta$), we have

\begin{align}
    \mu_t  \| P(\mathcal{C}_t)\| \begin{pmatrix}
I \\
K(\mathcal{C}_t)
\end{pmatrix}^\top V^{-1}_t \begin{pmatrix}
I \\
K(\mathcal{C}_t) 
\end{pmatrix}&\preceq \frac{\alpha_0}{16 {\kappa_t}^6}I \label{eq: verygood00}
\end{align}
in which we applied the fact that
\begin{align*}
    \|\begin{pmatrix}
I \\
K(\mathcal{C}_t)
\end{pmatrix}^\top\begin{pmatrix}
I \\
K(\mathcal{C}_t) 
\end{pmatrix} \|\leq 1+\frac{2\|P(\mathcal{C}_t)\|}{\alpha_0}\leq \frac{4\|P(\mathcal{C}_t)\|}{\alpha_0}=2 \kappa_t^2
\end{align*}
as $2\|P(\mathcal{C}_t)\|>\alpha_0$.

Substituting (\ref{eq: verygood00}) into (\ref{eq:badnis}) we can write

  \begin{align}
     \nonumber  P_*- P(\mathcal{C}_t)&\preceq 4 \kappa^4 \mu_t \|P(\mathcal{C}_t)\| \|\begin{pmatrix}
I \\
K(\mathcal{C}_t)
\end{pmatrix}^\top V^{-1}_t \begin{pmatrix}
I \\
K(\mathcal{C}_t) 
\end{pmatrix} \| I\\
&\preceq 4\kappa_t^4 \frac{\alpha_0}{16 \kappa_t^6} I = \frac{\alpha_0 {\gamma_t}}{2} I. \label{eq:lemprfbound}
  \end{align}
\end{proof}

Now, we provide the following lemma, which indeed demonstrates that the control designed by Algorithm \ref{Alg:ACOLC} satisfies property (3) in Definition \ref{def:sequentially}."
\begin{lemma} \label{lem:sequentiality} 
Let $H^2_t=P(\mathcal{C}_t)$ be the dual relaxed SDP solution. Then $\|H_{t+1}H_t\|\leq 1+\gamma_{t+1}/2$ under the event $\mathcal{H}_t\cap \mathcal{E}_0(\bar{\epsilon}(\bar{\kappa}_1))$.      
\end{lemma}
\begin{proof}
    The proof directly follows \cite{cohen2019learning}, but we have chosen to redo it, as we will require a slightly modified version of this lemma for the proof of Theorem \ref{prop2}.
Recalling definition of $H_t=P^{1/2}(\mathcal{C}_{t})$ we can write:
 \begin{align}
    \nonumber  \|H_{t+1}^{-1}H_t\|^2&=  \|P^{-\frac{1}{2}}(\mathcal{C}_{t+1})P^{\frac{1}{2}}(\mathcal{C}_{t})\|^2\\
     \nonumber & = \|P^{-\frac{1}{2}}(\mathcal{C}_{t+1})P(\mathcal{C}_{t})P^{-\frac{1}{2}}(\mathcal{C}_{t+1})\|\\
   &\leq \|P^{-\frac{1}{2}}(\mathcal{C}_{t+1})(P(\mathcal{C}_{t+1})+\frac{{\gamma_{t+1}} \alpha_0}{2} I)P^{-\frac{1}{2}}(\mathcal{C}_{t+1})\| \label{eq:komak12}\\
   \nonumber &= \|I+\frac{{\gamma_{t+1}}\alpha_0}{2}P^{-1}(\mathcal{C}_{t+1}) \| \\
&\nonumber  \leq 1+\frac{{\gamma_{t+1}}\alpha_0}{2}\|P^{-1}(\mathcal{C}_{t+1})\| \\
    & \nonumber \leq 1+{\gamma_{t+1}}
 \end{align}
 where (\ref{eq:komak12}) follows by Lemma \ref{lem:closeness} and the last inequality holds because $\|P^{-1}(\mathcal{C}_{t+1})\|\leq 2/\alpha_0$. This concludes that $\|H_{t+1}^{-1}H_t\|\leq \sqrt{1+{\gamma_{t+1}}}\leq 1+{\gamma_{t+1}}/2$.
\end{proof}


\subsection{Proof of Proposition \ref{lemma:upperBound}}

\begin{proof}
  Proof follows by \cite{cohen2019learning} with an only difference that now we have an input perturbation as well. Let $M=A_*+B_*K_t$ where $K_t'$s are strong sequential stabilizing policies generated by ARSLO algorithm. The closed loop dynamics then is written as follows:
    \begin{align*}
        x_t=M_1x_1+\sum_{s=1}^{t-1}M_s(B_*\nu_{s}+\omega_{s+1}) 
    \end{align*}
   where 
   \begin{align*}
       M_t=I,\;\; M_{s}=M_{s+1}(A_*+B_*K_s)=\prod_{j=s}^{t-1}(A_*+B_*K_j), \; \forall 1\leq s\leq t-1. 
   \end{align*}
   Recalling Definition \ref{def:sequentially}, we can set $A_*+B_*K_j= H_j L_j H_j^{-1}$, then for all $1\leq s\leq t$ where $\|L_j\|\leq 1-\gamma_j$ and $\|H^{-1}_{j+1}H_{j}\|\leq 1+\gamma_{j+1}$ we can write
  \begin{align*}
     \nonumber  \|M_s\|\leq &\lVert (H_{t-1}L_{t-1}H^{-1}_{t-1})...(H_{s+1}L_{s+1}H^{-1}_{s+1}) (H_{s}L_{s}H^{-1}_{s})\rVert\\
     \leq &\|H_{t-1}\| \; \|(L_{t-1}H^{-1}_{t-1}H_{t-2})...(L_{s+2}H^{-1}_{s+2}H_{s+1})(L_{s+1}H^{-1}_{s+1}H_{s})\|\:\|L_s\|\:\|H_s\|^{-1}\\
    \nonumber  \leq & \|H_{t-1}\| \bigg(\prod_{j=s+1}^{t-1}\|L_j\|\; \|H_{j}^{-1}H_{j-1}\|\bigg) \|H_s^{-1}\|\\
    \leq &\sqrt{P(\mathcal{C}_{t-1})} \bigg(\prod_{j=s+1}^{t-1}(1-\gamma_j)\; (1+\gamma_j/2)\bigg) (1-\gamma_s)\sqrt{\frac{2}{\alpha_0}}\\
\leq  &\kappa_{{t-1}}\bigg(\prod_{j=s+1}^{t-1}(1-\gamma_j/2)\; \bigg) (1-\gamma_s/2)\\
   \leq &{\kappa_*} (1-{\gamma_*}/2)^{t-s} 
  \end{align*}
where $\|H_{j}^{-1} H_{j-1}\| \leq (1 + \gamma_j / 2)$ holds for all $j$ by the sequentiality property of the generated policies. 
We also applied the fact that $H_t = P^{1/2}(\mathcal{C}_t) \succeq \sqrt{\alpha_0 / 2}\, I$. 
Finally, we used the fact that $\kappa_t \leq \kappa_*$, as well as the inequality $(1 - \gamma_j / 2) \leq (1 - \gamma_* / 2)$, where the latter holds by definition of $\gamma_* := \kappa_*^{-2} / 2$.

Upper-bounding state norm yields
    \begin{align*}
        \nonumber \|x_t\|&\leq \|M_1\|\|x_1\|+\sum_{s=1}^{t-1}\|M_{s+1}\|\|B_*\eta_s+\omega_{s+1}\|\\
       \nonumber &\leq \kappa_* (1-\gamma_*/2)^{t-1} \|x_1\|+\kappa_*\sum_{s=1}^{t-1}(1-{\gamma_*}/2)^{t-s-1} \|B_*\eta_s+\omega_{s+1}\|  \\
       \nonumber &\leq  \kappa_* e^{-{\gamma_*} (t-1)/2}\|x_1\|+\frac{2{\kappa_*}}{{\gamma_*}} \max_{1\leq s\leq t}\|B_*\eta_s+\omega_{s+1}\|. 
    \end{align*}
    This bound holds on the event $\mathcal{H}_t$, which occurs with probability at least $1-3\delta$, provided that the event $\mathcal{E}_0(\bar{\epsilon}(\bar{\kappa}_1))$ holds.
This completes the proof of Proposition \ref{lemma:upperBound}.
\end{proof}

\begin{lemma} \label{lem:costatebound}
Provided with an initial estimate $\Theta_0$ satisfying (\ref{def:bareps}), under the event $\mathcal{H}_t$, the following bounds hold for all \(t \ge 1\).

First,
  \begin{align}
    \|x_t\|^2\leq 4\frac{\kappa_*^2}{\gamma_*^2}\big(\frac{\|x_1\|}{\sqrt{\log \frac{1}{\delta}}}+\sqrt{20n \sigma_{\omega}^2}\big)^2\log \frac{t}{\delta}:=X^2_t (\kappa_*).\label{eq:simpleBoundX}
 \end{align}
Recalling that
\begin{align}
   z_t:= \begin{pmatrix}
x_t \\
K(\mathcal{C}_t) x_t+\eta_t
\end{pmatrix} \label{eq:defztn}
\end{align}
we also have
    \begin{align}
        \|z_t\|^2\leq {64 \kappa_*^6(1+\kappa_*^2)}\bigg(\frac{\|x_1\|}{\log \frac{1}{\delta}}+\sqrt{20n \sigma_{\omega}^2}\bigg)^2\log \frac{t}{\delta}:=c_z^2(\kappa_*)\log \frac{t}{\delta}=:\zeta_t^2(\kappa_*).\label{eq:zetaValll}
    \end{align}
\end{lemma}

\begin{proof}

Recall the definition of the input perturbation noise. We can write
\begin{align*}
    B_*\eta_s+\omega_{s+1}\sim\mathcal{N}\bigg(0, B_*\frac{\bar{p}_t\sigma_{\omega}^2\big(K(\mathcal{C}_t) K^\top(\mathcal{C}_t)+\frac{\|P(\mathcal{C}_t)\|}{\alpha_0}\big)}{\sqrt{t+\bar{c}}} B_*^\top+\sigma_{\omega}^2I\bigg).  
\end{align*}
Now, with the choice of $\bar{c}$ in~(\ref{eq:barcARSLO}), which guarantees
\begin{align*}   B_*\frac{\bar{p}_t\sigma_{\omega}^2\big(K(\mathcal{C}_t) K^\top(\mathcal{C}_t)+\frac{\|P(\mathcal{C}_t)\|I}{\alpha_0}\big)}{\sqrt{t+\bar{c}}} B_*^\top\preceq \sigma_{\omega}^2I
\end{align*}
applying the Hanson-Wright concentration inequality yields
\begin{align*}
   \max_{1\leq s\leq t} \| B_*\eta_s+\omega_{s+1}\|\leq \sqrt{20 \sigma_{\omega}^2n\log \frac{t}{\delta}} 
\end{align*}
under the event $\mathcal{E}_t^3$. 
Consequently, by (\ref{eq:neatBun}), we have
\begin{align*}
    \|x_t\|&\leq \kappa_* e^{-{\gamma_*} (t-1)/2}\|x_1\|+\frac{2\kappa_*}{\gamma_*}\sqrt{20 \sigma_{\omega}^2n\log \frac{t}{\delta}}.
\end{align*}
Since $\mathcal{H}_t \subseteq \mathcal{E}_t^3$, the above bound holds on the event $\mathcal{H}_t$, provided that the initial estimate $\Theta_0$ satisfies \eqref{def:bareps}.
Furthermore,
\begin{align*}
    \|x_t\|^2\leq 4\frac{\kappa_*^2}{\gamma_*^2}\big(\frac{\|x_1\|}{\sqrt{\log \frac{1}{\delta}}}+\sqrt{20n \sigma_{\omega}^2}\big)^2\log \frac{t}{\delta}:=X_t .
\end{align*}

Considering (\ref{eq:defztn}),
we obtain
  \begin{align*}
        \|z_t\|^2\leq & 2(1+\kappa_*^2)\|x_t\|^2+2 \|\eta_t\|^2.
    \end{align*}
Furthermore, by the definition of $\bar{c}$ in~(\ref{eq:magncbars}), it holds that 
\begin{align*}
    \|\eta_t\|\leq \sqrt{10n \sigma_{\omega}^2 \log \frac{t}{\delta}}
\end{align*}
which in turn yields
\begin{align*}
        \|z_t\|^2\leq  \frac{16 \kappa^2(1+\kappa_*^2)}{\gamma_*^2}\bigg(\frac{\|x_1\|}{\log \frac{1}{\delta}}+\sqrt{20n \sigma_{\omega}^2}\bigg)^2\log \frac{t}{\delta}
\end{align*}
under the same event $\mathcal{H}_t\cap \mathcal{E}_0(\bar{\epsilon}(\bar{\kappa}_1))$.
\end{proof}

\subsection{Proof of Corollary \ref{cl:strongseq}}
\begin{proof}
The proof follows from the definition of $\lambda$ in (\ref{eq:defaARSLoth}), which has the same order as $a^2$, namely $\mathcal{O}(\bar{\kappa}_1^{28})$. Moreover, by the definition of $\bar{\kappa}_1$ in (\ref{eq:barkapseq_Def}), it is of the same order as $\kappa_1$, which is upper-bounded by $\kappa_*$. Finally, since $\bar{\epsilon}(\bar{\kappa}_1)$ defined in (\ref{def:bareps}) is of order $\mathcal{O}(1/\sqrt{\lambda})$, the claim follows.
\end{proof}


\section {Regret Bound Analysis of ARSLO Algorithm} \label{Sec:regretARSLOan}

\subsection{Proof of Lemma \ref{lem:epseventprob}}

\begin{proof}
Recall
\begin{align*}
\mathcal{E}_t^1 &= \left\{ \forall s = 1, \ldots, t,\; \Theta_* \in \mathcal{C}_s(\delta) \right\}
\end{align*}
and define
\begin{align*}
\mathcal{G}_t &= \left\{ \forall s = 1, \ldots, t,\; \|z_s\|^2 \le \zeta_t^2(\kappa_*) \right\}.
\end{align*}

Then the event $\mathcal{E}_t$ can be written as
\[
\mathcal{E}_t = \mathcal{E}_t^1 \cap \mathcal{G}_t \cap \mathcal{E}_0(\bar{\epsilon}).
\]

where for sake of brevity we use $\bar{\epsilon}$ instead of $\bar{\epsilon}(\bar{\kappa}_1)$.

Referring to Lemma \ref{lem:costatebound}, the event $\mathcal{G}_t$ holds whenever the event $\mathcal{H}_t \cap \mathcal{E}_0(\bar{\epsilon})$ holds, where $\mathcal{H}_t$ is defined in (\ref{eq:mathcH0}). Furthermore, since $\mathcal{H}_t\subseteq \mathcal{E}_t^1$, it follows that the event $\mathcal{E}_t$ holds whenever $\mathcal{H}_t \cap \mathcal{E}_0(\bar{\epsilon})$ holds.

We already know from (\ref{eq:probHceps0}) that
\begin{align}
    \mathbb{P}\big(\mathcal{H}_t \mid \mathcal{E}_0(\bar{\epsilon})\big) \ge 1-3\delta.
\end{align}

Hence,
\begin{align}
\mathbb{P}\big(\mathcal{E}_t \mid \mathcal{E}_0(\bar{\epsilon})\big) \ge 1 - 3\delta.
\end{align}

Finally, since $\mathbb{P}\big(\mathcal{E}_0(\bar{\epsilon})\big) \ge 1-2\delta$, we conclude that
\begin{align*}
\mathbb{P}(\mathcal{E}_t)
&= \mathbb{P}\big(\mathcal{E}_t \mid \mathcal{E}_0(\bar{\epsilon})\big)\mathbb{P}\big(\mathcal{E}_0(\bar{\epsilon})\big) \\
&\ge (1-3\delta)(1-2\delta) \\
&\ge 1-5\delta.
\end{align*}

This completes the proof.
\end{proof}

\subsection{Regret Decomposition} \label{eq:subsecDecomp}

We begin by decomposing the regret bound. The structure of the decomposition follows the approach of \cite{cohen2019learning}, with modifications necessitated by the presence of input perturbation. These changes slightly alter the form of the regret decomposition and introduce additional terms contributing to the final bound.

For the instantaneous regret we have:
\begin{align*}
    r_k 
= x_k^\top Q x_k + u_k^\top R u_k - J^\star
\leq& x_k^\top \big(Q +K^{\top}(\mathcal{C}_k) R K(\mathcal{C}_k)\big)x_k+2\eta_k^\top R K(\mathcal{C}_k) x_k+\eta^\top_k R \eta_k - \sigma_{\omega}^2 \|P_k\|_*.
\end{align*}
Here, we used the following fact about the average expected cost:
\begin{align*}
    J_*\geq \sigma_{\omega}^2\|P(\mathcal{C}_t)\|_*
\end{align*}
which follows from  $J_*\geq \sigma_{\omega}^2\|P_*\|_*$ together with
$\|P_*\|_*\geq \|P(\mathcal{C}_t)\|_*$. 

By Lemma \ref{lem:consoldualprim} we can write
\begin{align}
    \nonumber x_k^\top\big(Q+K^\top(\mathcal{C}_k) R K(\mathcal{C}_k)\big)x_k=&x_k^\top P(\mathcal{C}_k)x_k -x_k^\top(\hat{A}_k+\hat{B}_kK(\mathcal{C}_k))^\top  P(\mathcal{C}_k) (\hat{A}_k+\hat{B}_kK(\mathcal{C}_k))x_k
    \\
    &+\mu_{\tau(k)} \| P(\mathcal{C}_k)\|_*\bar{z}_k^\top V^{-1}_{\tau(k)} \bar{z}_k \label{eq:komskzkhsn}
\end{align} 
where
\begin{align*}
    \bar{z}_k:=\begin{pmatrix}
x_k \\
K(\mathcal{C}_k)x_k
\end{pmatrix}.
\end{align*}
Under the ARSLO algorithm, the policy update condition $\det(V_k) > (1+\beta) \det(V_{\tau(k)})$, where $\tau(k)$ denotes the last policy update time prior to $k$, implies that for all \(k\) in the same epoch, $A_k=A_{\tau(k)}$, $B_k=B_{\tau(k)}$, and $K(\mathcal{C}_k)=K(\mathcal{C}_{\tau(k)})$.

Using (\ref{eq:komskzkhsn}), we obtain
\begin{align}
  \nonumber  r_k\leq& x_k^\top P(\mathcal{C}_k)x_k -x_{k+1}^\top P(\mathcal{C}_k)x_{k+1}\\
  \nonumber  & +x_k^\top(A_*+B_*K(\mathcal{C}_k))^\top  P(\mathcal{C}_k) (A_*+B_*K(\mathcal{C}_k))x_k\\
   &-x_k^\top(\hat{A}_k+\hat{B}_kK(\mathcal{C}_k))^\top  P(\mathcal{C}_k) (\hat{A}_k+\hat{B}_kK(\mathcal{C}_k))x_k \label{eq:intrst}\\
   \nonumber &+2\omega_k^\top P(\mathcal{C}_k) (A_*+B_*K(\mathcal{C}_k))x_k+\omega_k^\top  P(\mathcal{C}_k)\omega_k-\sigma_{\omega}^2\|P(\mathcal{C}_t)\|_*+\mu_{\tau(k)} \| P(\mathcal{C}_k)\|\bar{z}_k^\top V^{-1}_{\tau(k)} \bar{z}_k\\
   &+2\eta_k^\top R K(\mathcal{C}_k) x_k+\eta^\top_k R \eta_k.\label{eq:additnterms}
\end{align}
The terms in \eqref{eq:additnterms} are explicit additional contributions arising from the input perturbation. Also,
in deriving \eqref{eq:intrst}, we simply added and subtracted
\begin{align*}
    x_{k+1}^\top P(\mathcal{C}_k)x_{k+1}=&x_k^\top(A_*+B_*K(\mathcal{C}_k))^\top  P(\mathcal{C}_k) (A_*+B_*K(\mathcal{C}_k))x_k\\
    &+2\omega_k^\top P(\mathcal{C}_k) (A_*+B_*K(\mathcal{C}_k))x_k+\omega_k^\top  P(\mathcal{C}_k)\omega_k.
\end{align*}
Next, applying Lemma~\ref{lem:purturbation} yields
 \begin{align*}
    \bar{z}_k^\top \bigg(\Theta_*P(\mathcal{C}_k)\Theta_*^\top-\hat{\Theta}_k P(\mathcal{C}_k)\hat{\Theta}^\top_k \bigg)\bar{z}_k \leq \mu_{\tau(k)} \| P(\mathcal{C}_k)\|\;\bar{z}_k^\top V^{-1}_{\tau(k)}\bar{z}_k.
 \end{align*}
 which implies
\begin{align*}
    &x_k^\top(A_*+B_*K(\mathcal{C}_k))^\top  P(\mathcal{C}_k) (A_*+B_*K(\mathcal{C}_k))x_k\\
   &-x_k^\top(\hat{A}_k+\hat{B}_kK(\mathcal{C}_k))^\top  P(\mathcal{C}_k) (\hat{A}_k+\hat{B}_kK(\mathcal{C}_k))x_k\leq \mu_{\tau(k)} \| P(\mathcal{C}_k)\|\;\bar{z}_k^\top V^{-1}_{\tau(k)}\bar{z}_k 
\end{align*}
providing an upper bound for \eqref{eq:intrst}.

Since \(V_{\tau(k)}\) is constructed using \(z_k\) and not \(\bar{z}_k\), and  
\begin{align*}
   \bar{z}_k=z_k-\begin{pmatrix}
0 \\
\eta_k
\end{pmatrix}.
\end{align*}
we obtain the structural bound
\begin{align*}
\bar{z}_k^\top V^{-1}_{\tau(k)}\bar{z}_k\leq 2z_k^\top V^{-1}_{\tau(k)}z_k+2 \eta_k^\top V^{-1}_{\tau(k)} \eta_k.
\end{align*}
Combining these results gives
\begin{align}
\nonumber r_k\leq& x_k^\top P(\mathcal{C}_k)x_k -x_{k+1}^\top P(\mathcal{C}_k)x_{k+1}\\
  \nonumber &+2\omega_k^\top P(\mathcal{C}_k) (A_*+B_*K(\mathcal{C}_k))x_k\\
 \nonumber  &+\omega_k^\top  P(\mathcal{C}_k)\omega_k-\sigma_{\omega}^2\|P(\mathcal{C}_t)\|_*\\
 & +4\mu_{\tau(k)} \| P(\mathcal{C}_k)\| z_k^\top V^{-1}_{\tau(k)}z_k\label{eq:ImptrmR}\\
\nonumber  &+2\eta_k^\top R K(\mathcal{C}_k) x_k\\
 \nonumber &+\eta^\top_k \big(R+4\mu_{\tau(k)} \| P(\mathcal{C}_k)\|V^{-1}_{\tau(k)}\big) \eta_k.
\end{align}
To further bound term \eqref{eq:ImptrmR}, note that \(\tau(k)\) is the last policy update time, and the algorithm ensures $\det(V_{k})\leq (1+\beta)\det(V_{\tau(k)})$. Consequently, $z_k^\top V^{-1}_{\tau(k)}z_k\leq (1+\beta)z_k^\top V^{-1}_kz_k$ and hence
\begin{align*}
   4\mu_{\tau(k)} \| P(\mathcal{C}_k)\| z_k^\top V^{-1}_{\tau(k)}z_k\leq 4(1+\beta)\mu_{\tau(k)} \| P(\mathcal{C}_k)\| z_k^\top V^{-1}_k z_k. 
\end{align*}

Finally, with definition of event $\mathcal{E}_k$ given by (\ref{eq:Godevent1}) the accumulated regret decomposes as
\begin{align*}
\tilde{R}_{\textit{ARSLO}}(t) \leq \sum_{j=1}^6 R_j(t)
\end{align*}
where:
 \begin{subequations}
    \allowdisplaybreaks
	\begin{align}
	&R_1(t)=\sum _{k=1}^{t} \big(x_{k}^\top P(\mathcal{C}_k)x_{k}-x_{k+1}^\top P(\mathcal{C}_k)x_{k+1}\big)1_{\mathcal{E}_k},\label{eq:R1} \\
  &R_2(t)=\sum _{k=1}^{t} w^\top_kP(\mathcal{C}_k)\big(A_*+B_*K(\mathcal{C}_k)\big)x_k 1_{\mathcal{E}_k},\label{eq:R2}\\
 & R_3(t) =\sum _{k=1}^{t} \big( w^\top_kP(\mathcal{C}_k)w_k-\sigma_{\omega}^2\|P(\mathcal{C}_k)\|_*\big)1_{\mathcal{E}_k},\label{eq:R3}\\
 &R_4(t) =\sum _{k=1}^{t} 4(1+\beta)\|P(\mathcal{C}_k)\|\mu_{\tau(k)}\big(z^\top_kV_{k}^{-1}z_k\big)1_{\mathcal{E}_k`},\label{eq:R4}  \\
	&  R_5(t) =\sum _{k=1}^{t} 2\eta_k^\top R K(\mathcal{C}_k) x_k  1_{\mathcal{E}_k},\label{eq:R5}  \\
    & R_6(t) =\sum _{k=1}^{t} \eta^\top_k \big(R+4\mu_{\tau(k)} \| P(\mathcal{C}_k)\|V^{-1}_{\tau(k)}\big) \eta_k 1_{\mathcal{E}_k}.\label{eq:R6}  
	\end{align}
\end{subequations}

\subsection{Proof of Theorem \ref{thm:RegretBound}} \label{Sec:ProofRegARSLO}

\begin{proof}

To derive the regret bound, we upper-bound each term in (\ref{eq:R1}-\ref{eq:R6}) separately, setting $\beta = 1$ to remain consistent with the ARSLO algorithm (Algorithm~\ref{Alg:ACOLC}).
Throughout the regret analysis, we make use of the norm upper bounds established in Lemma~\ref{lem:costatebound}.



\begin{lemma} \label{Thm.BoundR1}
The following upper bound holds for $|R_1(t)|$:
	\begin{align*}
	|R_1(t)|\lesssim \|P_*\|^4 n(n+m)\log_2^2 \frac{nt}{\delta}.
	\end{align*}	
\end{lemma}

\begin{proof}
We begin by rewriting \(R_1(t)\) as
	\begin{align}
	\nonumber R_1(t)=&\sum_{k=1}^{t}(x^\top_kP(\mathcal{C}_k)x_k-x^\top_{k+1}P(\mathcal{C}_k)x_{k+1})1_{\mathcal{E}_k}\\
	\nonumber=&\sum_{k=2}^{t-1}(x^\top_k(P(\mathcal{C}_k)-P(\mathcal{C}_{k-1})x_k) 1_{\mathcal{E}_k}\\
   \nonumber  &+x_1^\top P(\hat{\Theta}_1)x_1-x_{t+1}^\top P(\mathcal{C}_t)x_{t+1}.
	\end{align}
The term \(x_k^\top (P(\mathcal{C}_k)-P(\mathcal{C}_{k-1})) x_k\) is nonzero only when the policy is updated. Thus, to upper-bound \(R_1\), it suffices to bound the total number of policy updates. Let \(N(t)\) denote the number of updates up to time \(t\).

 Recalling the policy-update rule with \(\beta = 1\), we have
	\begin{align*}
		\det(V_{\tau(t)})\geq 2^{N(t)}\det(\lambda I) 
	\end{align*}
where \(\tau(t)\) is the last update time prior to \(t\).

This yields
\begin{align*}
    N(t)\leq\log_2 (\frac{\det(V_{\tau(t)})}{\det(\lambda)})\leq \log_2 (\frac{\det(V_{t})}{\det(\lambda)}).
\end{align*}
On the event $\mathcal{E}_t$ and from the definition of $\lambda$ in (\ref{eq:lamchosens}) and similar to what we carried out in (\ref{eq:seqpf1}) we have
\begin{align*}
   \log_2 (\frac{\det(V_{t})}{\det(\lambda)})\leq  2(n+m)\log_2 \frac{2t}{\delta}. 
\end{align*}
This yields
\begin{align*}
   N(t) \leq & 2(n+m)\log_2 \frac{2t}{\delta}.
\end{align*}
Using \(\|P(\mathcal{C}_k)\|\le \|P_*\|\) and \(X_t(\kappa_*) := \max_{k\le t}\|x_k\|\), which holds on the event  $\mathcal{E}_t$, we obtain
\begin{align*}
	|R_1|\leq \|P_*\|X_t^{2}(\kappa_*)+ 2\|P_*\|(n+m) X_t^{2}(\kappa_*) \log_2\frac{2t}{\delta}
	\end{align*}	
Finally, incorporating the dependence of \(X_t^2(\kappa_*)\) on \(\|P_*\|\) and \(n\) yields the stated bound, completing the proof.
\end{proof}

\begin{lemma} \label{R2}
For any $\delta\in(0,1)$, with probability at least $1 - \delta/4$, the following bound holds:
	\begin{align*}
	|R_2(t)|\lesssim \sqrt{\|P_*\|^5 n\, t\,\log^3 \frac{t}{\delta}}. 
	\end{align*}	
\end{lemma}
\begin{proof}
Applying Lemma~30 from \cite{cohen2019learning}, we obtain
\begin{align*}
    R_2(t)=\sum_{k=1}^t \underbrace{x^\top_k\big(A_*+B_*K(\mathcal{C}_k)\big)^\top P(\mathcal{C}_k)1_{\mathcal{E}_k}}_{\bar{v}^\top_k}w_k\leq 2\sigma_{\omega}\bar{D}\sqrt{\frac{8}{\delta}},
\end{align*}
which holds with probability at least $1-\delta/4$, where $\sum_{k=1}^t \bar{v}_k^\top \bar{v}_k \leq \bar{D}^2$. 

We now derive an upper bound on this quantity. Observe that
\begin{align*}
    \sum_{k=1}^t \bar{v}_k^\top \bar{v}_k&\leq \|P_*\|^2 \sum_{k=1}^t x_k^\top\big(A_*+B_*K(\mathcal{C}_k)\big)^\top \big(A_*+B_*K(\mathcal{C}_k)\big) x_k\,1_{\mathcal{E}_k}\\
    & \leq \|P_*\|^2(1-\gamma_*)^2\sum_{k=1}^t x_k^\top x_k\,1_{\mathcal{E}_k}\\
    &\leq \|P_*\|^2\, X_t^2(\kappa_*)\,t:=\bar{D}^2
\end{align*}
where under the events $\mathcal{E}_k$ $\forall k$ we used the bounds $\|P(\mathcal{C}_t)\|\leq \|P_*\|$ and $\|A_*+B_*K(\mathcal{C}_t)\|\leq 1-\gamma_*< 1$. Hence,

	\begin{align*}
	|R_2(t)|\leq 2\sigma_{\omega}\|P_*\|X_t(\kappa_*)\sqrt{t\frac{2}{\delta}}. 
	\end{align*}

Finally, substituting the expression for $X_t(\kappa_*)$ from (\ref{eq:simpleBoundX}), under the event $\mathcal{E}_t$ yields the desired result, completing the proof.
\end{proof}

\begin{lemma}\label{thm:boundR3}
	For any $\delta\in(0,1)$, the term $R_3(t)$ is bounded as follows: 
	\begin{align*}
	\nonumber R_3(3)\lesssim \|P_*\|_*\sqrt{t \log^3 \frac{t}{\delta}}
	\end{align*}
    with probability at least $1-\delta/4$.
\end{lemma}

\begin{proof}
The proof follows directly from \cite{cohen2019learning}; for completeness, we provide it here.

Let
\begin{align*}
    X_k := \omega_k^\top P(\mathcal{C}_k) \omega_k - \sigma_{\omega}^2 \|P(\mathcal{C}_k)\|_*, \quad k = 1, \dots, t.
\end{align*}

Since $P(\mathcal{C}_k)$ is $\mathcal F_{k-1}$-measurable and $\omega_k$ is independent of
$\mathcal F_{k-1}$ with $\mathbb E[\omega_k \omega_k^\top] = \sigma_{\omega}^2 I$, we have
\begin{align*}
    \mathbb E[X_k \mid \mathcal F_{k-1}] = 0.
\end{align*}
So $X_k$'s forms a martingale difference sequence. Moreover, since $P(\mathcal{C}_k) \succeq 0$ and under the event $\mathcal{E}_k$, $\|P(\mathcal{C}_k)\|_* \le \|P_*\|_*$ we have
\begin{align*}
    X_k \geq -\sigma^2 \|P_*\|_* \quad \text{a.s.}
\end{align*}
However, $X_k$ is not almost surely bounded from above, which prevents a direct application of Azuma's inequality. To address this, on the event $\mathcal{E}_k$, we define the truncated random variable
\begin{align*}
    \widetilde X_k := X_k \cdot \mathbf 1\{ X_k \le \Gamma \}, \qquad
\Gamma := 5 \|P_*\|_* \sigma_{\omega}^2 \log \frac{8t}{\delta}.
\end{align*}
Then $-\sigma_{\omega}^2 \|P_*\|_* \leq \widetilde X_k \le \Gamma$, allowing Azuma's inequality to apply.

Let $\widetilde Y_k := \widetilde X_k - \mathbb E[\widetilde X_k \mid \mathcal F_{k-1}]$.
Then $|\widetilde Y_k| \le \Gamma$ a.s. By Azuma’s inequality, Lemma \ref{lem:Azuma} , we, with probability at least $1-\delta/8$ , can write
\begin{align*}
    \sum_{k=1}^t\widetilde Y_k\leq \Gamma \sqrt{2t \log \frac{8}{\delta}}
\end{align*}
 which implies 
\begin{align*}
    \sum_{k=1}^t \widetilde X_k\leq  \sum_{k=1}^t \mathbb E[\widetilde X_k]1_{\mathcal{E}_k}+8\sigma_{\omega}^2\|P_*\|_* \sqrt{t \log \frac{8}{\delta}\log^2 \frac{8t}{\delta}}.
\end{align*}
Next, we control the truncation probability using the Hanson--Wright inequality, Lemma \ref{lem:HansonWright}.

For each $k$, it implies that for $\delta' \in (0,1/e)$:
\begin{align*}
    \Pr\big( w_k^\top P(\mathcal{C}_k) w_k \geq \sigma_{\omega}^2 \|P(\mathcal{C}_k)\|_* + 4 \sigma^2 \|P^{1/2}(\mathcal{C}_k)\| \,\|P^{1/2}(\mathcal{C}_k)\|_F \log(1/\delta') \big) \le \delta'.
\end{align*}
Using the facts that $\|P^{1/2}(\mathcal{C}_k)\|_F \|P^{1/2}(\mathcal{C}_k)\| \leq \|P(\mathcal{C}_k)\|_*$ and $\|P(\mathcal{C}_k)\|_*\leq \|P_*\|_*$ we get
\begin{align*}
    w_k^\top P(\mathcal{C}_k) w_k \leq \sigma_{\omega}^2 \|P(\mathcal{C}_k)\|_* + 4 \sigma_{\omega}^2 \|P(\mathcal{C}_k)\|_* \log(1/\delta')\leq 5 \sigma_{\omega}^2 \|P_*\|_* \log(1/\delta').
\end{align*}
Choosing $\delta' = \delta/(8t)$, we obtain
\begin{align*}
    \Pr(X_k > \Gamma) \leq \frac{\delta}{8t}.
\end{align*}
By the union bound, with probability at least $1-\delta/8$, $X_k \le \Gamma$ for all $k$, i.e., $\widetilde X_k = X_k$.
 
Azuma inequality holds for $\widetilde X_k$ with probability $1-\delta/8$, and
no-truncation event holds with probability $1-\delta/8$. By union bound, both hold
simultaneously with probability at least $1-\delta/4$. On this event:
\begin{align*}
    \sum_{k=1}^t X_k = \sum_{k=1}^t \widetilde X_k \, \leq 8\sigma_{\omega}^2\|P_*\|_* \sqrt{t \log \frac{8}{\delta}\log^2 \frac{8t}{\delta}}.
\end{align*}
This implies
\begin{align*}
    \sum_{k=1}^t \big(\omega_k^\top P_k \omega_k - \sigma_{\omega}^2 \|P_k\|_*\big)1_{\mathcal{E}_k}\leq \|P_*\|_*\sqrt{t \log \frac{1}{\delta}\log^2 \frac{4t}{\delta}}
\end{align*}
with probability at least $1-\delta/4$ which completes the proof.
\end{proof}

\begin{lemma}\label{thm:boundR4}
	The term $R_4(t)$ has the following upper bound: 

	\begin{align*}
   R_4(t)\lesssim \vartheta \sqrt{n^2(n+m)^3\|P_*\|^6\, t\, \log^4\frac{nt}{\delta}}.
\end{align*}
\end{lemma}

\begin{proof} 
Recall that $\tau(t)$ denotes the most recent policy update time prior to time $t$.  
Using the definition  $\mu_t = r_t + 2\vartheta \sqrt{r_t} |V_t|^{1/2}$ and setting $\beta = 1$ to align with the ARSLO algorithm, we can write
	\begin{align}
	\nonumber R_4(t):=&4\sum _{k=1}^{t} (1+\beta)\|P(\mathcal{C}_k)\|\mu_{\tau(k)}\big({z}^\top_kV_{k}^{-1}{z}_k\big)1_{\mathcal{E}_k`}\\
    \nonumber \leq & 8 \mu_{t} \|P_*\|\sum _{k=1}^{t} \big({{z}_k}^\top{V_k}^{-1}{{z}_k}\big)1_{\mathcal{E}_k} 
	\end{align}
where the inequality uses $\mu_{\tau(t)} \leq \mu_t$ and $\|P(\mathcal{C}_k)\|\leq \|P_*\|$ for all $k\geq 1$ on the event $\mathcal{E}_k$. 

We decompose $\mu_t$ as follows:
\begin{align*}
\mu_{t}=(r_t+ 2 \vartheta \sqrt{r_t}\|V_t\|^{1/2})\leq& \bigg(r_t+ 2 \vartheta\, \sqrt{r_t}\sqrt{\lambda+t\zeta_t^2(\kappa_*)}\bigg)\\
 \leq &r_t+2\vartheta \sqrt{r_t\,\lambda}+2\vartheta \sqrt{r_t\, \zeta_t^2(\kappa_*)\,t}.
\end{align*}
Moreover, by using (\ref{eq:barrDefb}) as an upper bound for $r_t$ and applying (\ref{eq:zetaValll}), we obtain the following upper bounds for the aforementioned terms on the $\mathcal{E}_t$:
\begin{align*}
    \mu_t\leq& 16\sigma_{\omega}^2n(n+m)\log \frac{2nt}{\delta}\\
    &+ 8\vartheta \sigma_{\omega}^2\sqrt{\frac{\bar{c}n(n+m)}{10}\log \frac{2nt}{\delta}}\\
    &+8\vartheta c_z(\kappa_*)\sigma_{\omega}\sqrt{n(n+m)\,t\, \log \frac{t}{\delta}\, \log \frac{2nt}{\delta}}\\
   \lesssim &\vartheta \sqrt{n^2(n+m)\|P_*\|^4 t \log^2\frac{nt}{\delta}}.
\end{align*}

We observe that 
\begin{align}
    {z_k}^\top{V_k}^{-1}{z_k}\,1_{\mathcal{E}_k}\leq \frac{z_k^\top z_k}{\lambda+\frac{\bar{p}_t\sigma_{\omega}^2 \sqrt{k+\bar{c}}}{40}-\bar{C}}\leq \frac{\zeta_k^2(\kappa_*)}{\frac{\bar{p}_k\sigma_{\omega}^2 \sqrt{k+\bar{c}}}{40}}<1. \label{eq:reqCon}
\end{align}
The second inequality follows from the specific choice of $\bar{c}$ that satisfies (\ref{eq:goodcbarfinds}).

Since the condition (\ref{eq:reqCon}) holds, we may apply Lemma \ref{lem:login}. Consider an epoch within the interval  $[\tau_i,\; \tau_{i+1})$. Over this epoch
\begin{align*}
    \sum_{t=\tau_i}^{\tau_{i+1}}z_t^\top V_t^{-1}z_t\, 1_{\mathcal{E}_k}\leq 2 \log \frac{\det(V_{\tau_{i+1}})}{\det(V_{\tau_i})}.\label{eq:epochlemap}
\end{align*} 

Applying this bound across all epochs starting at $\tau_1,..., \tau_{N(t)}$, and letting $N(t)$ denote the total number of policy updates prior to time $t$, we can write
\begin{align}
   \nonumber  \sum_{t=1}^{t-1}z_t^\top V_t^{-1}z_t\, 1_{\mathcal{E}_k}&\leq \sum_{i=1}^{N(t)-1} 2 \log \frac{\det(V_{\tau_{i+1}})}{\det(V_{\tau_i})}+  2 \log \frac{\det(V_{t})}{\det(V_{\tau_{N(t)}})}\\
 &\leq 2 \log \frac{\det(V_{t})}{\det(\lambda I)}
\end{align}
where the last inequality follows by telescoping the logarithmic terms. 

On the event $\mathcal{E}_t$, and by the definition of $\lambda$, we also have 
\begin{align*}
   \log \frac{\det(V_{t})}{\det(\lambda I)}\leq 2(n+m)\log \frac{2nt}{\delta}\lesssim (n+m)\log \frac{nt}{\delta}
\end{align*}
as shown in (\ref{eq:seqpf1}).
Combining the above bounds yields
\begin{align*}
    R_4(t)\lesssim \vartheta \sqrt{n^2(n+m)^3\|P_*\|^6 t \log^4\frac{nt}{\delta}}.
\end{align*}
\end{proof}	
\begin{lemma}\label{thm:boundR5}
	Fix $\delta\in (0,1)$. Then the term $R_5$ has the following upper bound
  \begin{align*}
    R_5(t)\lesssim \alpha_1 \big(n^4(n+m)\big)^{1/4} \, t^{1/4}\,\sqrt{\vartheta \|P_*\|_* \|P_*\|^{11} \log^3 \frac{nt}{\delta}} 
\end{align*}
with probability at least $1-\delta/4$. 
\end{lemma}
\begin{proof}
  Define
\begin{align*}
    \bar{v}_k^\top = 2x_k^\top K^\top(\mathcal{C}_t) R^\top. 
\end{align*}
Then 
\begin{align*}
    R_5(t)=\sum _{k=1}^{t}\bar{v}_k^\top\eta_k 1_{\mathcal{E}_k}.
\end{align*}
Let $Y_k := \bar{v}_k^\top \eta_k$. Conditioned on all measurements up to time $k$, $Y_k$  is a zero-mean Gaussian random variable. We may write $Y_k = m_k\, f_k$, where $f_k \sim \mathcal{N}(0,1)$ and $\textit{Var}(Y_k)=m_k^2$. 

Applying Theorem 23 of \cite{cohen2019learning} with $V=\beta$ yields
\begin{align}
    \frac{(\sum_{k=1}^t f_k m_k)^2}{\beta + \sum_{k=1}^t m_k^2} \leq 2 \log \frac{1 + \beta^{-1} \sum_{k=1}^t m_k^2}{\delta} .\label{eq:useful342}
\end{align}
We now upper-bound the quadratic term on the events $\mathcal{E}_k$,
\begin{align}
   \nonumber \sum_{k=1}^tm_k^2&\leq \sum_{k=1}^t \|\tilde{v}_k\|^2 \|\Gamma_k\|^2\leq \sum_{k=1}^t \|\bar{v}_k\|^2\frac{2\sigma_{\omega}^2\bar{p}_k \kappa_*^2}{\sqrt{k+\bar{c}}}\\
  \nonumber &\leq 2\big(\max_{1\leq k\leq t}\|\bar{v}_k\|^2\big)\bar{p}_t\sigma_{\omega}^2\kappa_*^2 \sum_{k=1}^t \frac{1}{\sqrt{k+\bar{c}}} \\
   &\leq 8\alpha^2_1 \bar{p}_t\sigma_{\omega}^2\kappa_*^4 X_{t}^2(\kappa_*) \sqrt{t+\bar{c}}=:D. \label{eq:obboundtr}
\end{align}

Substituting the bound \eqref{eq:obboundtr} into \eqref{eq:useful342} and choosing $\beta=D$, we obtain, with probability at least $1-\delta/4$

\begin{align*}
    \underbrace{(\sum_{k=1}\bar{v}_k^\top \eta_k\, 1_{\mathcal{E}_k})^2}_{R_5^2(t)}=(\sum_{k=1}^t f_k m_k)^2\leq 4 D \log \frac{8}{\delta} .
\end{align*}
Equivalently,
\begin{align}
    R_5\leq 4\sqrt{2\bar{p}_t \log \frac{8}{\delta}} \alpha_1\sigma_{\omega}\kappa_*^2X_t(\kappa_*) (t+\bar{c})^{1/4} \label{eq:arslobndq1}.
\end{align}
From (\ref{eq:candbarptps}), on the event $\mathcal{E}_t$, $\bar{p}_t$ satisfies the upper-bound
\begin{align}
    \bar{p}_t\lesssim \vartheta \|P_*\|^{7}\sqrt{n^2(n+m)}\log \frac{nt}{\delta} \label{eq:barpbnd}
\end{align}
and for $X^2_t(\kappa_*)$ we have
\begin{align*}
   X^2_t(\kappa_*)\lesssim \|P_*\|^3 n \log \frac{t}{\delta}.
\end{align*}
Substituting these expressions into \eqref{eq:arslobndq1} gives
\begin{align*}
    R_5(t)\lesssim \alpha_1 (n^4(n+m))^{1/4} \sqrt{\vartheta  \|P_*\|^{12} \log^3 \frac{nt}{\delta}} (t+\bar{c})^{1/4}.
\end{align*}
which holds with probability at least $1-\delta/4$.
\end{proof}

\begin{lemma}\label{thm:boundR6}
	Fix $\delta\in(0,1)$. Then the term $R_6(t)$ has the following upper bound.	
\begin{align*}
     R_6(t)\lesssim \alpha_1 \vartheta \sqrt{n^2(n+m)m^2 \|P_*\|^{16} t \log^4 \frac{nt}{\delta}}
\end{align*}
with probability at least $1-\delta/4$.
\end{lemma}
\begin{proof}
The following inequality holds
\begin{align}
   \nonumber R_6(t) =&\sum _{k=1}^{t} \eta^\top_k \big(R+4\mu_{\tau(k)} \| P(\mathcal{C}_k)\|V^{-1}_{\tau(k)}\big) \eta_k 1_{\mathcal{E}_k}\\
   \nonumber \leq& \sum _{k=1}^{t} \eta^\top_k \big(R+\alpha_0I\big) \eta_k 1_{\mathcal{E}_k}\\
    \leq& 2\alpha_1 \sum _{k=1}^{t} \eta^\top_k\eta_k 1_{\mathcal{E}_k}. \label{eq:etaetaG}
\end{align}
The second inequality follows from (\ref{eq: verygood22}), namely
\begin{align*}
    \mu_t  \| P(\mathcal{C}_t)\| V_t^{-1}\preceq \frac{\alpha_0}{4}I \quad \forall t 
\end{align*}
which is ensured on the events $\mathcal{E}_t$ through the appropriate tuning of $\lambda$, $\bar{p}_t$, and $\bar{\epsilon}$, as followed by the proof of Lemma~\ref{Stability_lemma18}.

Then problem is reduced to upper-bound (\ref{eq:etaetaG}).

We can re-express
\begin{align*}
   \eta_k=\Gamma_k^{1/2}z_k \quad z_k\sim \mathcal{N}(0, I_m).
\end{align*}
Then 
\begin{align*}
    \sum_{k=1}^t \eta_k^\top \eta_k=\sum_{k=1}^t z_k^\top \Gamma_kz_k.
\end{align*}

Now by applying Hanson--Wright inequality (Lemma \ref{lem:HansonWright}), it yields
\begin{align*}
    z_k^\top \Gamma_k z_k\leq &\|\Gamma^{1/2}_k\|^2_F+2\|\Gamma^{1/2}\|_F\|\Gamma_k^{1/2}\|\sqrt{\log\frac{4k}{\delta}}+2\|\Gamma_k^{1/2}\|^2\log\frac{4k}{\delta}\\
    \leq& 5\|\Gamma^{1/2}_k\|^2_F\log\frac{4k}{\delta}:=5\|\Gamma_k\|_*\log\frac{4k}{\delta}
\end{align*}
with probability $1-\delta/4k$.

Hence on $\mathcal{E}_t$ we have
\begin{align*}
    \sum_{k=1}^t  z_k^\top \Gamma_k z_k\,1_{\mathcal{E}_k}&\leq 5\sum_{k=1}^t\|\Gamma_k\|_*\log\frac{k}{\delta}\leq 10n \bar{p}_t \sigma_{\omega}^2 \log \frac{t}{\delta}\kappa_*^2\sum_{k=1}^t \frac{1}{\sqrt{k+\bar{c}}}\\
    &\leq 20\,m\,\bar{p}_t\, \sigma_{\omega}^2\kappa_*^2\,\log \frac{t}{\delta} \big(\sqrt{t+\bar{c}}-\sqrt{\bar{c}}\big)
\end{align*}
in which we applied the fact that $\|\Gamma_k\|_*\leq m\|\Gamma_k\|$.

Finally applying (\ref{eq:barpbnd}) and definition of $\kappa_*$ yields
\begin{align*}
    R_6(t)\lesssim \alpha_1 \vartheta \sqrt{n^2(n+m)m^2 \|P_*\|^{16} t \log^4 \frac{nt}{\delta}}.
\end{align*}
with probability at least $1-\delta/4$.
\end{proof}

Considering all bounded terms, we observe that the dominant contributions to the regret are $R_4(t)$ and $R_6(t)$. Among these, $R_6(t)$ exhibits the worst dependence on $\|P_*\|$, while $R_4(t)$ yields the largest dimensional factor, namely $\mathcal{O}\!\left(\sqrt{n^2(n+m)^3}\right)$ compared to $\mathcal{O}\!\left(\sqrt{n^2(n+m)m^2}\right)$ for $R_6(t)$. Taking the more conservative dependence in both quantities, we obtain
\begin{align*}
    \tilde{R}_{\mathrm{ARSLO}}(t)
    \leq
    \mathcal{O}\!\left(
    \sqrt{
        n^2 (n+m)^3
        \|P_*\|^{16}
        \, t
        \log^4\!\frac{t}{\delta}
    }
    \right)
\end{align*}
with probability at least $1-\delta$.

Finally, the above bound is established on the event $\mathcal{E}_t$, which holds with probability at least $1-5\delta$. Therefore, applying a union bound over the failure events associated with $\mathcal{E}_t$ and the regret analysis yields
\begin{align*}
    R_{\mathrm{ARSLO}}(t)
    \leq
    \mathcal{O}\!\left(
    \sqrt{
        n^2 (n+m)^3
        \|P_*\|^{16}
        \, t
        \log^4\!\frac{t}{\delta}
    }
    \right)
\end{align*}
with probability at least $1-6\delta$.

\end{proof}


\newpage
\section{ARSLO$^+(\bar{\rho})$}

In this section, we introduce the ARSLO$^+(\bar{\rho})$ algorithm ( Algorithm \ref{Alg:ACOLC+}), which, by eliminating the sequentiality requirement for generated policies, is proven to outperform the previously proposed ARSLO algorithm. While the overall structure of ARSLO$^+(\bar{\rho})$ is similar to that of ARSLO, the main differences lie in the policy update criterion, which is carefully designed to ensure closed-loop system stability and to achieve an improved regret upper bound. Additionally, the adjustments of $\bar{p}_t$, $\lambda$, and the initial estimate $\Theta_0$ differ from those in ARSLO. The exploration--exploitation trade-off parameter $\beta(\bar{\rho})$, defined in~(\ref{eq:choicebeta}), can be rewritten as~(\ref{eq:redefbetarho}) using the definition of $\bar{\kappa}_1$.

\RestyleAlgo{ruled} 
\begin{algorithm}[H]
\caption{Any-time Regret SDP-based LQ cOntrol-Plus (ARSLO$^+(\bar{\rho})$)}\label{Alg:ACOLC+}
\textbf{Inputs:} {$\Theta_0$, $\bar{\kappa}_1$, and $\underline{\epsilon}(\bar{\kappa}_1, \bar{\rho})$ for some $\bar{\rho}\in[0,\,8)$ (provided by Algorithm \ref{alg:warmUp}), $\delta$,  $\vartheta$}

Compute $\bar{c}$ using~(\ref{eq:cfARSLOplus}).

Set $\lambda = \frac{\sigma_{\omega}^2 \bar{c}}{40} (n+m) \log \frac{1}{\delta}$

Initialize $V_1 = \lambda I$, $r_1 = \lambda \underline{\epsilon}^2(\bar{\kappa}_1, \bar{\rho})$, $\hat{\Theta}_1 = \Theta_0$, and construct $\mathcal{C}_1$.

Set \begin{align}
    \beta(\bar{\rho}) =\begin{cases}
        2\bar{\kappa}_1^{\frac{(6-\bar{\rho})}{2}} & \text{if } \bar{\rho}<6, \\
        2 & 6\leq\bar{\rho}<8
    \end{cases} . \label{eq:redefbetarho}
\end{align}

\For{$t = 1, 2, ... $}{
  \eIf{$\det (V_t)> (1+\beta(\bar{\rho}))\det (V_{\tau})$ or $t=1$}{
  Calculate  $\mu_t= r_t+\sqrt{r_t}\vartheta \|V_{t}\|^{1/2}$

   Solve the relaxed primal SDP problem (\ref{eq:RelaxedSDP}) and compute $K(\mathcal{C}_t)$ by (\ref{eq:obtPol})

 Compute $P(\mathcal{C}_t)$ through solving the relaxed dual SDP problem (\ref{eq:RedSDP_DUALP})

 Let $V_{\tau}=V_t$}{Set $K(\mathcal{C}_t)=K(\mathcal{C}_{t-1})$
  and $P(\mathcal{C}_t)=P(\mathcal{C}_{t-1})$}
  Calculate $\Gamma_t$ by (\ref{eq:Gamarslop})-(\ref{eq:pylop}) and sample $\eta_t\sim\mathcal{N}\big(0, \Gamma_{t}\big)$
  
  Play $u_t=K(\mathcal{C}_t)x_t+\eta_t$
  
  Observe $x_{t+1}$, save $(x_{t+1}, z_t)$ to the data set

  Construct $X_t$, $Z_t$ and update $V_t$

  Calculate $\hat{\Theta}_t$ by (\ref{eq:LSE_Sol123}) and $r_t$ by (\ref{radius_centralEl_realTime200}) and build confidence ellipsoid $\mathcal{C}_t$
}
\end{algorithm}

\section {Stability Analysis of ARSLO$^+(\bar{\rho})$\label{se:STARSLO+}}
\subsection{Proof of Theorem \ref{Stability_thm200}}
\begin{proof}
The purpose of this proof is to tune the parameters $\epsilon$, $\lambda$, and $\bar{p}_t$ so that the ARSLO$^{+}(\bar{\rho})$ algorithm generates policies that are individually $(\kappa_*,\gamma_*)$-strongly stabilizing while preserving the desired regret guarantees.

From the proof of the previous lemma, we concluded that ensuring a policy is $(\kappa_*,\gamma_*)$-strongly stabilizing requires
\begin{align}
    \mu_t \,\|P(\mathcal{C}_t)\| \,V_t^{-1} \preceq \frac{\alpha_0}{4}I. 
    \label{eq:juststab0}
\end{align}
 
To simultaneously control stability and regret, we impose a slightly stronger requirement:
\begin{align}
    \mu_t \,\|P(\mathcal{C}_t)\| \,V_t^{-1} \preceq 
    \frac{\alpha_0}{4 \kappa_t^{\bar{\rho}}}I, 
    \label{eq:verygood01-}
\end{align}
where $\kappa_t=\sqrt{\frac{2\|P(\mathcal{C}_t)\|}{\alpha_0}}>1$ and $\bar{\rho}\ge 0$.  This condition immediately implies \eqref{eq:juststab0}. By definition, \eqref{eq:verygood01-} is equivalent to
\begin{align}
    \mu_t \,\,V_t^{-1} \preceq 
    \frac{1}{2 \kappa_t^{\bar{\rho}+2}}I. 
    \label{eq:verygood01}
\end{align}
Our goal is to select $\lambda$, $\bar{p}_t$, and to specify $\bar{\epsilon}$—the warm-up estimator error bound—so that \eqref{eq:verygood01} holds.

Recall that $\mu_t= r_t+ 2\vartheta \sqrt{r_t} \|V_t\|^{1/2}$,  so sufficient conditions for \eqref{eq:verygood01} are
 \begin{align}
  & r_tV_t^{-1}\preceq \frac{1}{4 \kappa_t^{\bar{\rho}+2} }I \label{eq:Term101}\\
   & 2 \vartheta \sqrt{r_t} \|V_t\|^{1/2}V_t^{-1}\preceq \frac{1}{4 \kappa_t^{\bar{\rho}+2} }I. \label{eq:Term201}
 \end{align}

 By applying the result of Lemma~\ref{eq:lowerbndcov}, namely inequality~(\ref{eq:resLem10}), for $t\geq 400(n+m+\log \frac{t}{\delta})$, one can write: one can write
\begin{align*}
 2 \vartheta \sqrt{r_t} \|V_t\|^{1/2}V_t^{-1}&\preceq \frac{2\vartheta \sqrt{r_t}\|V_t\|^{\frac{1}{2}}}{\frac{\sigma_{\omega}^2 \bar{p}_t}{80}\sqrt{t+\bar{c}}-\bar{C}+\lambda} I. 
\end{align*}
where 
\begin{align}
    \bar{C}:=\frac{\sigma_{\omega}^2 \bar{c}}{80}\label{eq:C_tDefns}.
\end{align}

Then a sufficient condition for (\ref{eq:Term201}) is
\begin{align}
   \frac{2\vartheta \sqrt{r_t}\|V_t\|^{\frac{1}{2}}}{\frac{\sigma_{\omega}^2 \bar{p}_t}{80}\sqrt{t+\bar{c}}-\bar{C}+\lambda} &\leq \frac{1}{4 \kappa_t^{\bar{\rho}+2}} \label{eq:pcsdg0ns}
\end{align}
and similarly, a sufficient condition for (\ref{eq:Term101}) is
\begin{align}
     \frac{r_t}{\frac{\sigma_{\omega}^2 \bar{p}_t}{80}\sqrt{t+\bar{c}}-\bar{C}+\lambda}\leq \frac{1}{4 \kappa_t^{\bar{\rho}+2}}. \label{eq:pcsdg+ns}
\end{align}

Let us choose $\lambda$ such that
\begin{align*}
    \lambda\geq \bar{C}. 
\end{align*}
Under this choice, a sufficient condition for \eqref{eq:pcsdg0ns} can be derived as follows:
\begin{align*}
   \frac{2\vartheta \sqrt{r_t}\|V_t\|^{\frac{1}{2}}}{\frac{\sigma_{\omega}^2 \bar{p}_t}{80}\sqrt{t+\bar{c}}-\bar{C}+\lambda} \leq &\frac{2\vartheta \sqrt{r_t} (\lambda+\|\sum_{k=1}^tz_kz_k^\top\|)^{1/2}}{\frac{\sigma_{\omega}^2 \bar{p}_t}{80}\sqrt{t+\bar{c}}}\\
   \leq&\frac{1}{4 \kappa_t^{\bar{\rho}+2} }.
\end{align*}

To satisfy the inequality above, it suffices to require
\begin{align*}
    \bar{p}_t\geq \frac{640 \kappa_t^{\bar{\rho}+2}\vartheta\sqrt{r_t}(\lambda+\|\sum_{k=1}^tz_kz_k^\top\|)^{1/2}}{\sigma_{\omega}^2\sqrt{t+\bar{c}}}.
\end{align*}
Similarly, a sufficient condition for \eqref{eq:pcsdg+ns} is given by

\begin{align*}
    \bar{p}_t\geq \frac{320\,r_t\,\kappa_t^{\bar{\rho}+2}}{\sigma_{\omega}^2\sqrt{t+\bar{c}}}.
\end{align*}
We then choose $\bar{P}_t$ as
\begin{align}
    \bar{p}_t:=\max \bigg\{\frac{640 \kappa_t^{\bar{\rho}+2}\vartheta\sqrt{r_t}(\lambda+\|\sum_{k=1}^tz_kz_k^\top\|)^{1/2}}{\sigma_{\omega}^2\sqrt{t+\bar{c}}},\; \frac{320\,r_t\,\kappa_t^{\bar{\rho}+2}}{\sigma_{\omega}^2\sqrt{t+\bar{c}}}\bigg\}.\label{eq:candbarp}
\end{align}

Note that, for the specific choice of $\bar{c}$ introduced later, one can show that $\bar{p}_t$ is determined by the first term for all $t$'s.

Next, we proceed to specify $\bar{c}$ such that
\begin{align}    B_*\frac{\bar{p}_t\sigma_{\omega}^2\big(K(\mathcal{C}_t) K^\top(\mathcal{C}_t)+\frac{\|P(\mathcal{C}_t)\|I}{\alpha_0}\big)}{\sqrt{t+\bar{c}}} B_*^\top\preceq \sigma_{\omega}^2I.\label{eq:noiseNbig}
\end{align}
This condition ensures that the effect of input perturbation noise on the closed-loop system does not exceed the magnitude of the process noise. 

A sufficient condition for \eqref{eq:noiseNbig} can be written as
 \begin{align}
     \frac{2  \, \kappa_t^2 \, \bar{\vartheta}_{B_*}^2 \bar{p}_t}{\sqrt{t+\bar{c}}}\leq 1 \label{eq:goodcbarfind}
 \end{align}
where we define $\bar{\vartheta}_{B_*} = \max\{1, \vartheta_{B_*}\}$ and $\|B_*\| \leq \vartheta_{B_*}$.

To derive a suitable upper bound for $\bar{p}_t$, we first establish the following inequalities.

We have
\begin{align}
    \log \frac{n\det V_t}{\delta\, \det(\lambda I)}
    &\le (n+m)\log\frac{n}{\delta}\Big(1 + t\log\frac{t}{\delta}\Big) \label{eq:seqp}\\
    &\le 2(n+m)\log\frac{2nt}{\delta}. \label{eq:seqpf1ns}
\end{align}
where \eqref{eq:seqp} holds under the choice
\begin{align}
    \lambda\geq \frac{\sum_{k=1}^t \|z_k z_k^\top\|^2}{t \log \frac{2t}{\delta}}.\label{eq:Lam_1ns}
\end{align}

By choosing $\epsilon$ appropriately such that
\begin{align}
    \sqrt{\lambda} \epsilon \leq \sigma_{\omega}\sqrt{2n(n+m)} \label{eq:epslamb}
\end{align}
it then follows from \eqref{eq:seqpf1ns} and \eqref{eq:epslamb} that
\begin{align*}
    r_t\leq 16\sigma_{\omega}^2n(n+m)\log \frac{2nt}{\delta}.
\end{align*}
Furthermore, from {Lemma} \ref{lem:costatebound2p}, we have
\begin{align*}
    \|z_t\|^2\leq  c^2_z(\kappa_*,\bar{\rho})\log \frac{2t}{\delta}.
\end{align*}
To obtain a computable value for $\bar{c}$, we require an upper bound on $\kappa_*$, which can be derived directly from Lemma \ref{lem:closeness_only_strong} and, in particular, from \eqref{eq:Sineqjk}:
\begin{align}
    \kappa_t^2 \le \kappa_*^2 \le \kappa_t^2 + 4\kappa_t^{6 - \bar{\rho}}, \quad \text{for all } t. \label{eq:goodvn}
\end{align}
From this, we can define
\begin{align*}
   \kappa_*\leq \bar{\kappa}_1(\bar{\rho}) := \sqrt{\kappa_1^2 + 4\kappa_1^{6 - \bar{\rho}}}. 
\end{align*}
Combining these results, and noting that $\kappa_t \le \bar{\kappa}_1$ as guaranteed by \eqref{eq:goodvn}, we can now derive and upper-bound for the left hand side of \eqref{eq:goodcbarfind} as follows
\begin{align}
\frac{1280 \bar{\vartheta}_{B_*}^2\kappa_t^{\bar{\rho}+4}\vartheta\sqrt{r_t}}{\sigma_{\omega}^2(t+\bar{c})}\big(\lambda+tc_z^2(\kappa_*,\bar{\rho})\log \frac{2t}{\delta})\big)^{1/2}\leq& \frac{5120 \bar{\vartheta}_{B_*}^2\bar{\kappa}_1^{\bar{\rho}+4}\vartheta\sqrt{\sigma_{\omega}^2n (n+m)\log \frac{2nt}{\delta}}}{\sigma_{\omega}^2(t+\bar{c})}\sqrt{\lambda} \label{eq:tm10}\\
  &+\frac{5120 \bar{\vartheta}_{B_*}^2\bar{\kappa}_1^{\bar{\rho}+4}\vartheta\sqrt{\sigma_{\omega}^2n (n+m)t}}{\sigma_{\omega}^2(t+\bar{c})}c_z(\bar{\kappa}_1,\bar{\rho})\log \frac{2nt}{\delta}.\label{eq:tm20}
\end{align}
We enforce the following condition:
\begin{align}
    \frac{5120 \bar{\vartheta}_{B_*}^2\bar{\kappa}_1^{\bar{\rho}+4}\vartheta\sqrt{\sigma_{\omega}^2n (n+m)}}{\sigma_{\omega}^2\sqrt{t+\bar{c}}}c_z(\bar{\kappa}_1,\bar{\rho})\log \frac{2nt}{\delta}\leq \frac{1}{2}.\label{eq:helpingCond}
\end{align}
Applying Lemma \ref{lem:Usefulalgebricin}, an appropriate choice of $\bar{c}$ that satisfies this condition is
\begin{align}
    \bar{c} = a^2 \left( 2 \log \!\left( \frac{4n a^2}{\delta} \right) + 1 \right)^2, \label{eq:magncbar}
\end{align}
where
\begin{align*}
    a =\frac{10240 \bar{\vartheta}_{B_*}^2\bar{\kappa}_1^{\bar{\rho}+4}\vartheta\sqrt{\sigma_{\omega}^2n (n+m)}c_z(\bar{\kappa}_1,\bar{\rho})}{\sigma_{\omega}^2}.
\end{align*}
With this choice of $\bar{c}$, condition \eqref{eq:helpingCond} ensures that the term \eqref{eq:tm20} is less than $1/2$ since $t+\bar{c} > 1$. Moreover, by choosing 
$\lambda=\bar{C}=\frac{\sigma_{\omega}^2\bar{c}}{40}$, we also guarantee the term \eqref{eq:tm10} is less that $1/2$.

Recalling \eqref{eq:Lam_1ns}, which provides an additional condition for tuning $\lambda$ based on the upper bound of the co-state, we require
\begin{align*}
    \lambda\geq c_z^2(\bar{\kappa}_1, \bar{\rho}).
\end{align*}
Therefore, the natural choice is to select
\begin{align}
    \lambda \geq \max \left\{ \frac{\sigma_{\omega}^2 \bar{c}}{80},\, c_z^2(\bar{\kappa}_1, \bar{\rho}) \right\} = \frac{\sigma_{\omega}^2 \bar{c}}{80},\label{eq:lamavvalieh}
\end{align}
where the equality holds due to the magnitude of $\bar{c}$ in \eqref{eq:magncbar}.

However, we slightly increase it to ensure closed-loop stability, as will be discussed in the proof of Theorem \ref{prop2}. We choose
\begin{align}
    \lambda = \frac{\sigma_{\omega}^2 \bar{c}}{80} (n+m) \log \frac{1}{\delta}, \label{eq:lamchosenns}
\end{align}

With $\bar{c}$ specified, $\lambda$ is determined. Then, from \eqref{eq:candbarp}, $\bar{p}_t$ is obtained as
\begin{align}
    \bar{p}_t:=\frac{640 \kappa_t^{\bar{\rho}+2}\vartheta\sqrt{r_t}(\lambda+\|\sum_{k=1}^tz_kz_k^\top\|)^{1/2}}{\sigma_{\omega}^2\sqrt{t+\bar{c}}}.\label{eq:candbarptp}
\end{align}

Recalling \eqref{eq:epslamb}, the appropriate choice of $\underline{\epsilon}(\bar{\kappa}_1, \bar{\rho})$ is given by
\begin{align}
     \underline{\epsilon}(\bar{\kappa}_1, \bar{\rho}) := \frac{\sigma_\omega \sqrt{2 n (n+m)}}{\sqrt{\lambda}}.\label{eq:epscrudebnd0102}
\end{align}

With $\underline{\epsilon}(\bar{\kappa}_1, \bar{\rho})$ defined in \eqref{eq:epscrudebnd0102}, and with $\lambda$ and $\bar{p}_t$ selected according to \eqref{eq:lamchosenns} and \eqref{eq:candbarptp}, respectively, the condition \eqref{eq:verygood01} is satisfied. Moreover, by following the same reasoning as in Lemma \ref{Stability_lemma18}, it can be concluded that any generated policy is $(\kappa_*, \gamma_*)$-strongly stabilizing, where
\begin{align*}
\kappa_* = \sqrt{\frac{2\|P(\mathcal{C}_t)\|}{\alpha_0}}, \quad \gamma_* = \frac{1}{2 \kappa_*^2}.
\end{align*}
By following a similar bootstrapping analysis as in Theorem~1, we can show that any policy generated by ARSLO$^+{(\bar{\rho})}$ is $(\kappa_*,\gamma_*)$-strongly stabilizing for any $\rho \in [0,8)$ with probability at least $1-3\delta$, given~(\ref{eq:epscrudebnd0102}).

Furthermore, similar to the analysis of Lemma \ref{Stability_lemma18} , for $t \leq 400\bigl(n+m+\log(t/\delta)\bigr)$ the definition of \(\lambda\) ensures that the sufficient conditions  (\ref{eq:Term101}) and (\ref{eq:Term201}) are satisfied simultaneously.
\end{proof}

\begin{lemma} \label{lem:closeness_only_strong} 
Suppose that the ARSLO$^{+}(\bar{\rho})$ algorithm, for some $\bar{\rho} \in [0,\,8)$, is run with parameters $\underline{\epsilon}(\bar{\kappa}_1, \bar{\rho})$, $\lambda$, and $\bar{p}_t$ tuned as in (\ref{eq:epscrudebnd0102}), (\ref{eq:lamchosenns}), and (\ref{eq:candbarptp}). Then, with probability at least $1-3\delta$, the following statements hold:
\begin{align}
    P(\mathcal{C}_t) \preceq P_*\preceq\;  P(\mathcal{C}_{t+1})+ 2\kappa_{t+1}^{6-\bar{\rho}}\alpha_0I\preceq\;  P(\mathcal{C}_{t+1})+ 2\kappa_{*}^{6-\bar{\rho}}\alpha_0I, \label{eq:Sineqjk}
\end{align}
and
\begin{align}
    \big\|H_{t+1}^{-1}H_t\big\| \;\leq\; 1+2{\kappa_*^{3-\bar{\rho}/2}} \;=:\; {1+\xi(\kappa_*,\bar{\rho}).} \label{def:xirho}
\end{align}
where $H_k=P^{1/2}(\mathcal{C}_k)$.
\end{lemma}

\begin{proof}

We begin by proving the first claim. In Lemma~\ref{lem:closeness}, we have already established 
\begin{align*}
   P(\mathcal{C}_t) \preceq P_* 
\end{align*}
and we have also shown that
\begin{align}
   \nonumber \nonumber  P_*- P(\mathcal{C}_t)&\preceq \frac{16\| P(\mathcal{C}_t)\|^2\mu_t}{\alpha_0^2} \|P(\mathcal{C}_t)\| \|\begin{pmatrix}
I \\
K(\mathcal{C}_t)
\end{pmatrix}^\top V^{-1}_t \begin{pmatrix}
I \\
K(\mathcal{C}_t) 
\end{pmatrix} \| I\\
&= 4\kappa_t^4 \mu_t\|P(\mathcal{C}_t)\| \|\begin{pmatrix}
I \\
K(\mathcal{C}_t)
\end{pmatrix}^\top V^{-1}_t \begin{pmatrix}
I \\
K(\mathcal{C}_t) 
\end{pmatrix} \| I \label{eq:Sbadnis}. 
\end{align}
Moreover, since the parameters $\underline{\epsilon}(\bar{\kappa}_1, \bar{\rho})$, $\lambda$, and $\bar{p}_t$ have been selected such that
\begin{align}
   \mu_t  \| P(\mathcal{C}_t)\| \|V_t^{-1}\|I\preceq \frac{\alpha_0}{4\kappa_t^{\bar{\rho}}}I \label{eq:satsfiedIneq1}
\end{align}
which indeed is a sufficient condition for 
\begin{align*}
     \mu_t  \| P(\mathcal{C}_t)\| V_t^{-1}\preceq\frac{\alpha_0}{4\kappa_t^{\bar{\rho}}}I
\end{align*}
which fulfills generation of $(\kappa_*, \gamma_*)$-strongly stabilizing policies for any $\bar{\rho}\in [0,\; 8)$. We also have: 
\begin{align}
    \|\begin{pmatrix}
I \\
K(\mathcal{C}_t)
\end{pmatrix}^\top\begin{pmatrix}
I \\
K(\mathcal{C}_t) 
\end{pmatrix} \|\leq 2 \kappa_t^2. \label{eq:satsfiedIneq2}
\end{align}
Substituting (\ref{eq:satsfiedIneq1}) and (\ref{eq:satsfiedIneq2}) into (\ref{eq:Sbadnis}) yields
\begin{align}
    P_* \preceq\;  P(\mathcal{C}_{t})+ 2\kappa_t^{6-\bar{\rho}}\alpha_0I\; \preceq\;  P(\mathcal{C}_{t})+ 2\kappa_*^{6-\bar{\rho}}\alpha_0I \label{eq:ineqpscd}
\end{align}
where the last inequality follows from $\kappa_t \leq \kappa_*$.

Applying similar steps as in (\ref{eq:lemprfbound}) on (\ref{eq:ineqpscd}) implies
  \begin{align*}
\|H_{t+1}^{-1}H_t\|^2 \leq 1+4{\kappa_*^{6-\bar{\rho}}}
 \end{align*}
 which results in: 
 \begin{align*}
     \|H_{t+1}^{-1}H_t\|\leq \sqrt{ 1+4{\kappa^{6-\bar{\rho}}}}\leq 1+2{\kappa_*^{3-\bar{\rho}/2}}=:1+\xi(\kappa_*,\bar{\rho}).
 \end{align*}
\end{proof}

\begin{remark} \label{eq:goodremarkb}
The condition in (\ref{def:xirho}) holds between any two policies because the upper and lower bounds on $P_*$ in (\ref{eq:Sineqjk}) are valid for arbitrary time steps. This matrix inequality is, in fact, used to derive (\ref{def:xirho}). Therefore, for any pair of time steps $(t^{\prime}, t)$, we have
\begin{align*}
    \|H_{t^{\prime}}^{-1} H_t\| \leq 1 + \xi(\kappa_*, \bar{\rho}).
\end{align*}
\end{remark}

\subsection{Proof of Theorem \ref{prop2}} 

\begin{proof}
Let $M=A_*+B_*K_t$ where $K_t'$s are $(\kappa, \gamma)-$ strong sequential stabilizing policies generated by ARSLO algorithm. The closed loop dynamics then is written as follows:
    \begin{align*}
        x_t=M_1x_0+\sum_{s=1}^{t-1}M_{s+1}(B_*\nu_{s}+\omega_{s+1}) 
    \end{align*}
   where 
   \begin{align*}
       M_t=I,\;\; M_{s}=M_{s+1}(A_*+B_*K_s)=\prod_{j=s}^{t-1}(A_*+B_*K_j), \; \forall 1\leq s\leq t-1. 
   \end{align*}
   Recalling Definition \ref{def:sequentially}, we can set $A_*+B_*K_j= H_j L_j H_j^{-1}$, then for all $1\leq s< t$ where $\|L_j\|\leq 1-\gamma_j$ one can write
  \begin{align*}
     \nonumber  \|M_s\|\leq &\lVert (H_{t-1}L_{t-1}H^{-1}_{t-1})...(H_{s+1}L_{s+1}H^{-1}_{s+1}) (H_{s}L_{s}H^{-1}_{s})\rVert\\
     \leq &\|H_{t-1}\| \; \|(L_{t-1}H^{-1}_{t-1}H_{t-2})...(L_{s+2}H^{-1}_{s+2}H_{s+1})(L_{s+1}H^{-1}_{s+1}H_{s})\|\:\|L_s\|\:\|H_s\|^{-1}\\
    \nonumber  \leq & \|H_{t-1}\| \bigg(\prod_{j=s+1}^{t-1}\|L_j\|\bigg) \bigg(\prod_{j=s+1}^{t-1} \|H_{j}^{-1}H_{j-1}\|\bigg) (1-\gamma_s)\|H_s^{-1}\|\\
    \leq &\sqrt{P(\mathcal{C}_{t-1})} \bigg(\prod_{j=s+1}^{t-1}(1-\gamma_j)\bigg) \bigg(\prod_{j=s+1}^{t-1} \|H_{j}^{-1}H_{j-1}\|\bigg)  (1-\gamma_s)\sqrt{\frac{2}{\alpha_0}}\\
\leq  &\kappa_{{t-1}}\bigg(\prod_{j=s+1}^{t-1}(1-\gamma_j)\; \bigg) \bigg(\prod_{j=s+1}^{t-1} \|H_{j}^{-1}H_{j-1}\|\bigg)(1-\gamma_s)\\
   \leq &{\kappa_*} (1-{\gamma_*})^{t-s} \bigg(\prod_{j=s+1}^{t-1} \|H_{j}^{-1}H_{j-1}\|\bigg)\\
   \leq &{\kappa_*} (1-{\gamma_*})^{t-s} \big(1+\xi(\kappa_*,\bar{\rho})\big)^{N(t-1)-N(s+1)} \\
   \leq& {\kappa_*} (1-{\gamma_*})^{t-s} \big(1+\xi(\kappa_*,\bar{\rho})\big)^{N(t-s-1)}
  \end{align*}  
where we applied 
\begin{align*}
     \|H_{j}^{-1}H_{j-1}\|  
    \begin{cases}
       = 1 & \text{if there is no update in the policy}, \\
       \leq  1+\xi(\kappa_*,\bar{\rho}) & \text{otherwise (by Lemma \ref{lem:closeness_only_strong})},
    \end{cases}
\end{align*}
 $\kappa_{t-1} = \sqrt{2P(\mathcal{C}_{t-1}) / \alpha_0}$ by definition, and that $\kappa_t \leq \kappa_*$ for any $t$. Also, we use the fact that $N(t)-N(s)\leq N(t-s)$ which follows from the monotonicity of epoch lengths. This monotonicity is a direct consequence of the determinant growth identity for positive definite matrices:
\begin{align*}
    \det(V_{k+1})=\det(V_{k})\big(I+z_k^\top V^{-1}_{k}z_k\big).
\end{align*}
Since \(V_k \succeq V_{k-1}\), thus, the incremental growth of \(\det(V_k)\) decreases over time. Consequently, to satisfy the update condition $\det(V_k)\;\ge\; \bigl(1+\beta(\bar{\rho})\bigr)\,\det(V_{\tau(k)})$ a longer interval between updates is required. In other words, epoch lengths are nondecreasing, which is an established fact for OFU-based algorithms with determinant-type policy update rules. This implies that update events become less frequent as time increases. Hence, for any interval \([s,t)\), the number of updates within this interval is at most the number of updates that could occur in an initial interval of the same length, establishing $N(t)-N(s)\;\le\;N(t-s)$.

Given the policy update rule, for some tunable \(\beta_{\tau(t)}(\bar{\rho})\), we aim to upper-bound the number of policy updates by time \(s\).  
Similar to the analysis in Lemma \ref{Thm.BoundR1}, the number of policy updates by time \(s\), denoted \(N(s)\), can be upper-bounded as
\begin{align}
    N(s)\leq \log_{1+\beta(\bar{\rho})} \bigg(\frac{\lambda+\|\sum_{k=1}^s z_kz_k^\top\|}{\lambda}\bigg)^{n+m}. \label{eq:lamineq}
\end{align}

The only difference compared to Lemma \ref{Thm.BoundR1} is that \(\beta_{\tau(k)}(\bar{\rho})\) is not constant; hence we use its minimum over all epochs to upper-bound \(N(s)\).

Note that by Lemma \ref{lem:closeness_only_strong}, we have $\xi(\kappa_*,\bar{\rho})=2\kappa_*^{3-\bar{\rho}/2}$ and moreover
\begin{align*}
    \kappa_t^2 \le \kappa_*^2 \le \kappa_t^2 + 4\kappa_t^{6 - \bar{\rho}}, \quad \text{for all } t 
\end{align*}
and then
\begin{align*}
    \kappa_1^2 \le \kappa_*^2 \le \kappa_1^2 + 4\kappa_1^{6 - \bar{\rho}}. 
\end{align*}

Using this, we choose
\begin{align}
    \beta(\bar{\rho}) =\begin{cases}
        2(\kappa_{1}^2 + \kappa_{1}^{(6-\bar{\rho})})^{\frac{(6-\bar{\rho})}{4}} & \text{if } \bar{\rho}<6, \\
        2 & 6\leq\bar{\rho}<8
    \end{cases} \label{eq:goodbetatun}
\end{align}
which guarantees that
\begin{align}
    1+\beta(\bar{\rho})\geq 1+\xi(\kappa_*,\bar{\rho}).  \label{eq:propusef}
\end{align}

Note that for $6 \leq \bar{\rho} < 8$, we have $\xi(\kappa_{*}, \bar{\rho}) \leq 2$. This is why, over this specific range, we choose $\beta_{\tau(k)}(\bar{\rho}) = 2$, which allows us to control the growth of the state norm, as shown below. With this choice, we obtain
\begin{align*}
     \nonumber (1-\gamma_*)^{\frac{s}{2}}(1+\xi(\kappa_*,\bar{\rho}))^{N(s)}\leq & (1-\gamma_*)^{\frac{s}{2}}\bigg(\frac{\lambda+\|\sum_{k=1}^s z_kz_k^\top\|}{\lambda}\bigg)^{n+m}
\end{align*}
where we used the property $a^{\log_b c} \le c$ for $b \ge a > 1$.

Using the standard bound
\(\big\|\sum_{k=1}^s z_k z_k^\top\big\|\le c_z^2(\kappa_*,\bar{\rho})\,t\log\frac{t}{\delta}\), for $\delta\leq 1/e$ we can further upper-bound 
\begin{align*}
\bigg(\frac{\lambda+\|\sum_{k=1}^s z_kz_k^\top\|}{\lambda}\bigg)^{n+m}&\leq \bigg(1+ \frac{c^2_z(\kappa_*,\bar{\rho})s \log \frac{s}{\delta}}{\lambda}\bigg)^{n+m}\\
&\leq \bigg(1+ \frac{c^2_z(\kappa_*,\bar{\rho})s^2 \log \frac{1}{\delta}}{\lambda}\bigg)^{n+m}\\
    &\leq \bigg(1+ \frac{c_z(\kappa_*,\bar{\rho})s \sqrt{\log\frac{1}{\delta}}}{\sqrt{\lambda}}\bigg)^{2(n+m)}.
\end{align*}
The second inequality follows from the property that $\log a b\leq a \log b$ for $a\geq1$ and $b\geq e$. The third inequality uses the fact that $(1+x^2)\leq (1+x)^2$ for any $x\geq0$.

Recall from (\ref{eq:lamchosenns}) that we chose
\begin{align*}
    \lambda=\frac{\sigma_{\omega}^2\bar{c}}{40}(n+m)\log \frac{1}{\delta}
\end{align*}
Considering the definition of \(\bar{c}\) in (\ref{eq:magncbar}), it is straightforward to verify that
\begin{align*}
    \lambda\geq 64(n+m)^2\bar{\kappa}_1^8 c^2_z(\bar{\kappa}_1^2,\bar{\rho})\log\frac{1}{\delta}.
\end{align*}
This then gives
\begin{align*}
\bigg(1+ \frac{c_z(\kappa_*,\bar{\rho})s \sqrt{\log\frac{1}{\delta}}}{\sqrt{\lambda}}\bigg)^{2(n+m)}\leq \bigg(1+ \frac{s}{8(n+m)\bar{\kappa}_1^4}\bigg)^{2(n+m)}\leq e^{\frac{s}{4\bar{\kappa}_1^4}}
\end{align*}
where the last inequality follows from the standard property
 $(1+\frac{x}{a})^a\leq e^{x/a}$ for any $x\geq 0$.

Then we have
\begin{align}
   \nonumber (1-\gamma_*)^{\frac{s}{2}}(1+\xi(\kappa_*,\bar{\rho}))^{N(s)}\leq & (1-\gamma_*)^{\frac{s}{2}}e^{\frac{s}{4\bar{\kappa}_1^4}}\\
   \nonumber = & (1-\frac{1}{2\kappa_*^2})^{\frac{s}{2}} e^{\frac{s}{4\kappa_*^4}}\\
 \nonumber \leq & e^{\frac{-s}{4\kappa_*^2}}e^{\frac{s}{4\kappa_*^4}}=e^{-(\frac{1}{4\kappa_*^2}-\frac{1}{4\kappa_*^4})s}. 
\end{align}
Hence, we can upper bound
\begin{align*}
    \|M_1\|\leq \kappa_* (1-\gamma_*)^{\frac{t-s}{2}}
\end{align*}
and
\begin{align*}
    \|M_s\|\leq \kappa_* (1+\xi(\kappa_*,\bar{\rho})) (1-\gamma_*)^{\frac{t-s}{2}}.
\end{align*}
Using this bound, the upper bound on the state norm becomes
    \begin{align}
        \nonumber \|x_t\|\leq& \|M_1\|\|x_1\|+\sum_{s=1}^{t-1}\|M_{s+1}\|\|B_*\eta_j+\omega_{s+1}\|\\
       \nonumber \leq &\kappa_*  (1-\gamma_*)^{(t-1)/2} \|x_1\|+\kappa_* (1+\xi(\kappa_*,\bar{\rho}))\sum_{s=1}^{t-1}(1-{\gamma_*})^{(t-s-1)/2}  \|B_*\eta_s+\omega_{s+1}\| \\
       \nonumber \leq &\kappa_*  e^{-{\gamma_*(t-1)}/{2}}\, \|x_1\|+\kappa_* (1+\xi(\kappa_*,\bar{\rho})) \max_{1\leq s< t}\|B_*\eta_{s}+\omega_{s+1}\|\sum_{t=1}^{\infty}(1-{\gamma_*})^{t/2}\\
  \nonumber  \leq & \kappa_*  e^{-{\gamma_*(t-1)}/{2}}\, \|x_1\|+\frac{2\kappa_*}{\gamma_*} (1+\xi(\kappa_*,\bar{\rho})) \max_{1\leq s< t}\|B_*\eta_{s}+\omega_{s+1}\|.
    \end{align}
   This completes the proof.
    


\end{proof}

The following lemma gives appropriate upper-bounds for $\|x_t\|^2$ and $\|z_t\|^2$.

\begin{lemma} \label{lem:costatebound2p} 
Fix $\delta\in (0,1)$. Consider ARSLO$^{+}(\bar{\rho})$ equipped with the illustrated policy update mechanism. 
Suppose that the generated policies are $(\kappa_*,\gamma_*)$-strongly stabilizing. 
Then, with probability at least $1-\delta$, the following bound holds:
     \begin{align}
    \|x_t\|^2\leq 4\frac{\kappa_*^2}{\gamma_*^2}(1+\xi(\kappa_*,\bar{\rho}))^2\big(\frac{\|x_1\|}{\sqrt{\log \frac{1}{\delta}}}+\sqrt{20n \sigma_{\omega}^2}\big)^2\log \frac{t}{\delta}=:X_t^2(\kappa_*,\bar{\rho}).\label{eq:Xsttae+}
\end{align} 
And, with $z_t$ defined by (\ref{eq:defztn}), we have
    \begin{align}
        \|z_t\|^2\leq   {64 \kappa_*^6(1+\kappa_*^2)} (1+\xi(\kappa_*,\bar{\rho}))^2\bigg(\frac{\|x_1\|}{\log \frac{1}{\delta}}+\sqrt{20n \sigma_{\omega}^2}\bigg)^2\log \frac{t}{\delta}=:c_z^2(\kappa_*, \bar{\rho})\log \frac{t}{\delta}=:\zeta_t^2(\kappa_*,\bar{\rho}).\label{eq:zetaValllvv}
    \end{align}
\end{lemma}

\begin{proof}
The proof proceeds exactly as in Lemma~\ref{lem:costatebound}, as we still have
\begin{align*}    B_*\frac{\bar{p}_t\sigma_{\omega}^2\big(K(\mathcal{C}_t) K^\top(\mathcal{C}_t)+\frac{\|P(\mathcal{C}_t)\|}{\alpha_0}I\big)}{\sqrt{t+\bar{c}}} B_*^\top
    \preceq \sigma_{\omega}^2 I,
\end{align*}
which holds true by the choice of $\bar{c}$ in (\ref{eq:magncbar}).
\end{proof}

\subsection{Proof of Corollary \ref{cl:strong}}

\begin{proof}
   The proof follows from the definition of $\lambda$, noting that by (\ref{eq:lambda_valuefn01}), $\lambda$ has the same order as $a^2$, namely $\mathcal{O}(\bar{\kappa}_1^{\,22+\bar{\rho}})$. Moreover, by the definition of $\bar{\kappa}_1$ in (\ref{eq:kappabar_no_seq_Def}), it is of order $\mathcal{O}(\max\{\kappa_1,\;\kappa_1^{(6-\bar{\rho})/2}\})$, and $\kappa_1$ itself is upper-bounded by $\kappa_*$. Finally, since $\underline{\epsilon}(\bar{\kappa}_1,\bar{\rho})$ defined in (\ref{eq:epsstst0}) is of order $\mathcal{O}(1/\sqrt{\lambda})$, the claim follows.
\end{proof}

\section {Regret Bound Analysis of ARSLO$^+(\bar{\rho})$ Algorithm}
\label{sec:regasloplusap}



\subsection{Proof of Theorem \ref{thm:RegretBoundARSLO+}}

\begin{proof}
The regret bound decomposition in (\ref{eq:R1}--\ref{eq:R6}) remains valid for ARSLO$^{+}(\bar{\rho})$, with the difference that this time the "good event" is $\mathcal{E}^{\bar{\rho}}_k$ given by (\ref{eq:Godevent12}) rather than $\mathcal{E}_k$.
 The other main difference lies in the policy update criterion, $\det(V_t)\geq (1+\beta)\det(V_{\tau})$: for this algorithm, $\beta$ is adjusted according to (\ref{eq:goodbetatun}), rather than being fixed at $\beta = 1$ as in the original ARSLO. 

In the subsequent lemmas, we bound each term individually.

\begin{lemma} \label{Thm.BoundR1+}
The following upper bound holds for $|R_1(t)|$:
	\begin{align*}
	|R_1(t)|\lesssim \|P_*\|^{7-\frac{\bar{\rho}}{2}}n(n+m)\log^2\frac{t}{\delta}.
	\end{align*}
\end{lemma}
\begin{proof}
  The proof follows the same structure as Lemma~\ref{Thm.BoundR1}, with the only difference being that the policy update rule now satisfies $\det(V_t) \geq (1 +\beta_{\tau(k)}(\bar{\rho}))\det(V_{\tau(k)})$ where $\beta_{\tau(k)}(\bar{\rho})$ is defined in~(\ref{eq:goodbetatun}).
The only remaining technicality is to upper-bound the number of policy updates $N(t)$, which we have already established in~(\ref{eq:lamineq}):
\begin{align*}
    N(s)\leq \log_{1+\beta(\bar{\rho})} \bigg(\frac{\lambda+\|\sum_{k=1}^s z_kz_k^\top\|}{\lambda}\bigg)^{n+m}. 
\end{align*}
Moreover, by (\ref{eq:goodbetatun}), for any $\bar{\rho}\in[0,8)$ we have the property (\ref{eq:propusef})
\begin{align*}
    1+\beta(\bar{\rho})\geq 3 \quad \forall k.
\end{align*}
which implies
\begin{align*}
    N(s)&\leq \log_{3} \bigg(\frac{\lambda+\|\sum_{k=1}^s z_kz_k^\top\|}{\lambda}\bigg)^{n+m} \\
    &\leq 4(n+m) \log_{1+\xi(\kappa_*,\bar{\rho})}\frac{t}{\delta}
\end{align*}
where the last inequality holds on the event $\mathcal{E}_{t}^{\bar{\rho}}$ by our choice of $\lambda$ in (\ref{eq:lamavvalieh}), which satisfies
$\lambda \ge c_z^2(\kappa_,\bar{\rho})$.

Thus, similar to the ARSLO algorithm case,
\begin{align*}
    |R_1(t)|\leq& \|P_*\|X_t^{2}(\kappa_*, \bar{\rho})+ 4 \|P_*\|(n+m) X_t^{2}(\kappa_*,\bar{\rho}) \log_{3}\frac{t}{\delta}.
\end{align*}
Using the upper bound on $X_t^2(\kappa_*,\bar{\rho})$ given in (\ref{eq:Xsttae+}), and the fact that
$\xi(\kappa_*,\bar{\rho})\lesssim \|P_*\|^{(3-\bar{\rho}/2)/2}$ by (\ref{def:xirho}), which hold on the event $\mathcal{E}_t^{\bar{\rho}}$,  and the definition of $\kappa_*$, we obtain
\begin{align}
    X^2_t(\kappa_*,\bar{\rho})\lesssim \|P_*\|^{6-\frac{\bar{\rho}}{2}} n \log \frac{t}{\delta}. \label{eq:lessimxbar}
\end{align}
Therefore,
\begin{align*}
    |R_1(t)|\lesssim  \|P_*\|^{7-\frac{\bar{\rho}}{2}}n(n+m)\log^2\frac{t}{\delta}.
\end{align*}
\end{proof}

\begin{lemma} \label{R2+}
	Let $\delta\in (0,1)$. Then following upper bound holds for $|R_2(t)|$:
	\begin{align*}
	|R_2(t)| \lesssim \|P_*\|^{4-\frac{\bar{\rho}}{4}}\sqrt{n \log^2 \frac{t}{\delta}}
	\end{align*}	
with probability at least $1 - \delta/4$.
\end{lemma}
\begin{proof}
  The argument follows the same steps as in Lemma~\ref{R2}. In particular, on the event $\mathcal{E}_t^{\bar{\rho}}$ we have    
    \begin{align*}
	|R_2(t)|\leq 2\sigma_{\omega}\|P_*\|X_t(\kappa_*,\bar{\rho})\sqrt{t\log \frac{8}{\delta}}. 
	\end{align*}
with probability at least $1-\delta/4$.
    
 Applying the bound $X_t(\kappa_*,\bar{\rho})$ given in \eqref{eq:lessimxbar} yields the stated result.
\end{proof}

\begin{lemma}\label{thm:boundR3+}
	Fix $\delta\in (0,1)$. Then the term $R_3(t)$ admits the following bound:
	\begin{align*}
	\nonumber R_3(3)\lesssim \|P_*\|_*\sqrt{t \log^3 \frac{t}{\delta}}
	\end{align*}
    with probability at least $1-\delta/4$.
\end{lemma}
\begin{proof}
The proof follows directly from Lemma~\ref{thm:boundR3}, with the only difference being that the bound now holds on the event $\mathcal{E}_t^{\bar{\rho}}$.
\end{proof}

\begin{lemma}\label{thm:boundR4+}
		Let $\bar{\rho}\in [0,\,8)$. Then the term $R_4(t)$ admits the upper bound		
\begin{align*}
R_4(t)\lesssim\,\vartheta\, \beta_*(\|P_*\|,\bar{\rho}) \,\sqrt{n^2(n+m)^3 \|P_*\|^{9-\bar{\rho}/2}\, t\, \log^4\frac{nt}{\delta}},
\end{align*}
where 
\begin{align*}
    \beta_*(\|P_*\|,\bar{\rho}) =\begin{cases}
        \big(\|P_*\| +\|P_*\|^{(6-\bar{\rho})/2}\big)^{\frac{(6-\bar{\rho})}{4}} & \text{if } \bar{\rho}<6, \\
        2 & 6\leq\bar{\rho}<8
    \end{cases}. 
\end{align*}
\end{lemma}

\begin{proof}
   	The argument follows the same steps as in Lemma~\ref{thm:boundR4}, with the main difference being that $\beta$ is no longer equal to $1$ and is instead tuned according to (\ref{eq:goodbetatun}). Its upper bound on the event $\mathcal{E}_t^{\bar{\rho}}$ satisfies
    \begin{align*}
    \beta(\bar{\rho})\lesssim \beta_*(\|P_*\|,\bar{\rho}) =\begin{cases}
        \big(\|P_*\| +\|P_*\|^{(6-\bar{\rho})/2}\big)^{\frac{(6-\bar{\rho})}{4}} & \text{if } \bar{\rho}<6, \\
        2 & 6\leq\bar{\rho}<8
    \end{cases}. 
\end{align*}
 With this adjustment, then  
	\begin{align}
 R_4(t)\leq &4\sum _{k=1}^{t} \bigg(1+\beta_*(\|P_*\|,\bar{\rho})\bigg)\,\|P(\mathcal{C}_k)\|\mu_{\tau(k)}\big({z}^\top_kV_{k}^{-1}{z}_k\big)1_{\mathcal{E}_k`}.\label{eq:bndR4+}
	\end{align}
Following the same steps in bounding $\mu_t$ from Lemma~\ref{thm:boundR4}, and using the fact that  
\begin{align}
    c_z^2(\kappa_*,\bar{\rho})\lesssim\, n \|P_*\|^{7-\frac{\bar{\rho}}{2}}.\label{eq;bndstateco+}
\end{align}
which holds on the event $\mathcal{E}_t^{\bar{\rho}}$, we obtain
\begin{align*}
   \mu_{\tau(k)}\leq \mu_k\lesssim& \vartheta \,\sqrt{r_k}\,\sqrt{\zeta_k(\kappa_*,\bar{\rho})\, k}\\
   \lesssim &\vartheta \sqrt{n(n+m)\frac{2nt}{\delta}}\sqrt{n\, k\, \|P_*\|^{7-\frac{\bar{\rho}}{2}} \log \frac{k}{\delta}}\\
   \lesssim & \vartheta\sqrt{n^2(n+m)\|P_*\|^{7-\bar{\rho}/2}\, k\, \log^{2}\frac{2nk}{\delta}}.
\end{align*}
As in Lemma~\ref{thm:boundR4}, we also use
\begin{align}
   \nonumber  \sum_{t=1}^{t-1}z_t^\top V_t^{-1}z_t&\leq 2\log \frac{\det(V_{t})}{\det(\lambda I)}\lesssim (n+m)\log \frac{nt}{\delta}
\end{align}
where the last inequality holds on the event $\mathcal{E}_t^{\bar{\rho}}$.

Substituting all bounds into (\ref{eq:bndR4+}) yields:
\begin{align}
	\nonumber R_4(t)\lesssim\,\vartheta\, \beta_*(\|P_*\|,\bar{\rho}) \,\sqrt{n^2(n+m)^3 \|P_*\|^{9-\bar{\rho}/2}\, t\, \log^4\frac{nt}{\delta}}
\end{align}
which completes the proof.
\end{proof}

\begin{lemma}\label{thm:boundR5+}
	Let $\delta\in (0,1)$. Then the term $R_5(t)$ admits the following upper bound:	
    \begin{align*}
     R_5(t)\lesssim \alpha_1 (n^4(n+m))^{\frac{1}{4}} \sqrt{\vartheta  \|P_*\|^{(50-\bar{\rho})/4} \log^3 \frac{nt}{\delta}} \,t^{1/4}
\end{align*}
with probability at least $1-\delta/4$.
\end{lemma}
\begin{proof}
 The proof follows the same steps as in Lemma~\ref{thm:boundR5}. Similar to (\ref{eq:arslobndq1}), we may write  
\begin{align}
    R_5\leq 4\sqrt{2\bar{p}_t \log \frac{8}{\delta}} \alpha_1\sigma_{\omega}\kappa_*^2X_t(\kappa_*, \bar{\rho}) (t+\bar{c})^{1/4} \label{eq:arslobndq1+}
\end{align}
which holds with probability at least $1-\delta/4$.

On the event $\mathcal{E}_t^{a}$, using the definition of $\bar{p}_t$ in~(\ref{eq:candbarptp}) and the fact that 
$r_t\lesssim n(n+m)\log \tfrac{nt}{\delta}$, together with the bound in 
(\ref{eq;bndstateco+}), we obtain
\begin{align}
    \bar{p}_t\lesssim \vartheta \sqrt{n^2(n+m) \|P_*\|^{(9+\bar{\rho}/2)} \, \log ^2\frac{nt}{\delta}}. \label{eq:upBndpbar+}
\end{align}
Moreover, we already derived the bound on $X_t^2(\kappa_*,\bar{\rho})$ in 
(\ref{eq:lessimxbar}):
\begin{align}
    X^2_t(\kappa_*,\bar{\rho})\lesssim \|P_*\|^{6-\frac{\bar{\rho}}{2}} n \log \frac{t}{\delta}. 
\end{align}
Substituting these expressions into~\eqref{eq:arslobndq1+} completes the proof and yields
\begin{align*}
    R_5(t)\lesssim \alpha_1 (n^4(n+m))^{1/4} \sqrt{\vartheta  \|P_*\|^{(50-\bar{\rho})/4} \log^3 \frac{nt}{\delta}} t^{1/4}.
\end{align*}
with probability at least $1-\delta/4$.
\end{proof}
\begin{lemma}\label{thm:boundR6+}
	The term $R_6(t)$ has the following upper bound.	
\begin{align*}
     R_6(t)\lesssim \alpha_1 \vartheta \sqrt{n^2(n+m)m^2 \|P_*\|^{(11+\bar{\rho}/2)} t \log^4 \frac{nt}{\delta}}
\end{align*}
with probability at least $1-\delta$.
\end{lemma}
\begin{proof}
The following inequality holds
\begin{align}
   \nonumber R_6(t) =&\sum _{k=1}^{t} \eta^\top_k \big(R+4\mu_{\tau(k)} \| P(\mathcal{C}_k)\|V^{-1}_{\tau(k)}\big) \eta_k 1_{\mathcal{E}^{\bar{\rho}}_k}\\
   \nonumber \leq& \sum _{k=1}^{t} \eta^\top_k \big(R+\alpha_0I\big) \eta_k 1_{\mathcal{E}^{\bar{\rho}}_k}\\
   \nonumber  \leq& 2\alpha_1 \sum _{k=1}^{t} \eta^\top_k\eta_k 1_{\mathcal{E}^{\bar{\rho}}_k}. 
\end{align}
The second inequality follows from (\ref{eq: verygood22}), namely
\begin{align*}
    \mu_t  \| P(\mathcal{C}_t)\|_* V_t^{-1}\preceq \frac{\alpha_0}{4\kappa_t^{\bar{\rho}}}I \quad \forall t 
\end{align*}
which is ensured on the event $\mathcal{E}_t^{\bar{\rho}}$ by appropriate tuning of $\lambda$, $\bar{p}_t$, and $\bar{\epsilon}$, as followed by the proof of Theorem \ref{Stability_thm200}.

The proof follows the same steps of teh counter part lemma for ARSLO algorithm, i.e., Lemma \ref{thm:boundR6}. By applying Hanson--Wright inequality (Lemma \ref{lem:HansonWright}), it yields 
\begin{align*}
    \sum_{k=1}^t  z_k^\top \Gamma_k z_k\, 1_{\mathcal{E}_k^{\bar{\rho}}}&\leq 5\sum_{k=1}^t\|\Gamma_k\|_*\log\frac{4k}{\delta}\leq 10m \|\Gamma_t\| \sigma_{\omega}^2 \log \frac{4t}{\delta}\kappa_*^2\sum_{k=1}^t \frac{1}{\sqrt{k+\bar{c}}}\\
    &\leq 20\,m\,\bar{p}_t\, \sigma_{\omega}^2\kappa_*^2\,\log \frac{4t}{\delta} \big(\sqrt{t+\bar{c}}-\sqrt{\bar{c}}\big)
\end{align*}
which holds with probability at least $1-\delta/4$ on the event $\mathcal{E}_t^{\bar{\rho}}$, where we also used the fact that $\|\Gamma_k\|_* \leq m\|\Gamma_k\|$.

Finally by upper-bound of $\bar{p}_t$ given by (\ref{eq:upBndpbar+}) we conclude that

\begin{align*}
    R_6(t)\lesssim \alpha_1 \vartheta \sqrt{n^2(n+m)m^2 \|P_*\|^{(11+\bar{\rho}/2)} t \log^4 \frac{nt}{\delta}}.
\end{align*}
with probability at least $1-\delta/4$.
\end{proof}

To complete the proof, it suffices to observe that the terms $R_4(t)$ and $R_6(t)$ are dominant, and depending on the value of $\bar{\rho}$, either term may dominate individually. Therefore, for any $\bar{\rho} \in [0,\,8)$, the regret upper bound can be expressed as
\begin{align*}
   \tilde{R}_{\text{ARSLO}^+(\bar{\rho})}(t) 
   \leq &\{\mathcal{O}\Big(\sqrt{\max\bigg\{n^2 (n+m)^3\|P_*\|^{9-\frac{\bar{\rho}}{2}}\,\beta_*^2\big(\|P_*\|,\bar{\rho}\big)\,\; , n^2(n+m)m^2\|P_*\|^{(11+\frac{\bar{\rho}}{2})}\bigg\}  \, t\; \log^4\frac{t}{\delta}}\,\Big)
\end{align*}
where 
\begin{align*}
    \beta_*\big(\|P_*\|,\bar{\rho}\big) =\begin{cases}
        \big(\|P_*\| +\|P_*\|^{(6-\bar{\rho})/2}\big)^{\frac{(6-\bar{\rho})}{4}} & \text{if } \bar{\rho}<6, \\
        2 & 6\leq\bar{\rho}<8.
    \end{cases} 
\end{align*}
The above bound is established on the event $\mathcal{E}^{\bar{\rho}}_t$, which holds with probability at least $1-5\delta$. Therefore, applying a union bound over the failure events associated with $\mathcal{E}^{a}_t$ and the regret analysis yields
\begin{align*}
   R_{\text{ARSLO}^+(\bar{\rho})}(t) 
   \leq &\{\mathcal{O}\Big(\sqrt{\max\bigg\{n^2 (n+m)^3\|P_*\|^{9-\frac{\bar{\rho}}{2}}\,\beta_*^2\big(\|P_*\|,\bar{\rho}\big)\,\; , n^2(n+m)m^2\|P_*\|^{(11+\frac{\bar{\rho}}{2})}\bigg\}  \, t\; \log^4\frac{t}{\delta}}\,\Big)
\end{align*}
with probability at least $1-6\delta$.

To optimize the regret bound, it suffices to choose $\bar{\rho}$ to balance the dominant terms in their dependence on $\|P_*\|$. Setting $\bar{\rho} = \bar{\rho}_* = 2$ achieves this balance. In terms of dimensional dependence, the dominant term is $n^2 (n+m)^3$, which yields 
\begin{align*}
   R_{\text{ARSLO}^+(\bar{\rho}_*)}(t) 
   \leq & \mathcal{O}\Big(\sqrt{\, n^2 (n+m)^3 \; \|P_*\|^{12}\; t \;\log^4\frac{t}{\delta} }\,\Big). 
\end{align*}
with holds probability $1-6\delta$ that completes the proof.
\end{proof}

\section {Supporting Analysis of Warm-up Phase}
Before providing the proof of Theorem \ref{thm:reg_warmup}, we need to specify an upper bound on the state norm during warm-up phase.

\begin{lemma} \label{lem:statewarmbnd}
Consider the linear system \((A_*, B_*)\) with initial state \(x_0 = 0\), and let \(K_0\) be a \((\kappa_0, \gamma_0)\)-stabilizing policy. Then, during the warm-up phase, the state satisfies
\begin{align}
    \|x_t\| \;\le\; \sigma_{\omega} \, \sqrt{10 \Bigl(n + m \kappa_0^2 \bar{\vartheta}_{B_*}^2\Bigr) \log \frac{t}{\delta}} \label{eq:stnrmwarmup}
\end{align}
and 
\begin{align}
    \|z_t\| \le \frac{12 \sigma_{\omega} \kappa_0^2}{\gamma_0} \sqrt{\Bigl(n + m \kappa_0^2 \bar{\vartheta}_{B_*}^2\Bigr) \log \frac{4 t}{\delta}}. \label{eq:znormbnwup}
\end{align}
with probability at least \(1-\delta\), for any \(\delta \in (0,1/e)\).
\end{lemma}

\begin{proof}
The closed-loop dynamics at any time $s$ can be written as
\begin{align*}
    x_{s+1}=(A_*+B_*K_0)x_s+B_*\nu_s+\omega_s.
\end{align*}
Unrolling this recursion to time $t$ gives
   \begin{align*}
        x_t=M_1x_1+\sum_{s=1}^{t-1}M_s(B_*\nu_{s}+\omega_{s+1}) 
    \end{align*}
   where 
   \begin{align*}
       M_t=I,\;\; M_{s}=M_{s+1}(A_*+B_*K_0)=\prod_{j=s}^{t-1}(A_*+B_*K_0), \; \forall 1\leq s\leq t-1. 
   \end{align*}

 From strong $(\kappa_0, \gamma_0)-$ stability of $K_0$ we have  decomposition $A_*+B_*K_0= H L H^{-1}$ with $\|L\|\leq (1-\gamma_0)$ and $\|H\|\|H^{-1}\|\leq \kappa_0$. This implies
  \begin{align*}
     \nonumber  \|M_s\|\leq & \|H\| \bigg(\prod_{j=s}^{t-1}\|L\|\; \bigg) \|H^{-1}\|\leq{\kappa_0} (1-\gamma_0)^{t-s}. 
  \end{align*}
 Therefore,
\begin{align*}
       \nonumber  \|x_t\|&\leq \kappa_0 (1-\gamma_0)^t x_0+ \max_{0\leq  s\leq t-1} \|(B_*\nu_{t-1-s}+\omega_{t-1-s})\| \|\sum_{s=0}^{t-1}M_s\|\\
       \nonumber &\leq \kappa_0 (1-\gamma_0)^t x_0+ \max_{0\leq  s\leq t-1} \|(B_*\nu_{t-1-s}+\omega_{t-1-s})\| \sum_{s=0}^{\infty}\|M_s\|\\
      \nonumber  &\leq\kappa_0 (1-\gamma_0)^t \|x_0\|+ \frac{\kappa_0}{\gamma_0}\max_{0\leq  s\leq t-1} \|(B_*\nu_{t-1-s}+\omega_{t-1-s})\| \\
       &\leq \kappa_0 e^{-\gamma_0 t}\|x_0\|+\frac{\kappa_0}{\gamma_0}\max_{0\leq  s\leq t-1} \|(B_*\nu_{s}+\omega_{s})\|
    \end{align*}

  Let $\zeta_t:=B_*\nu_{s}+\omega_{s}$. Then $\zeta_t \sim \mathcal{N}(0, (\sigma_{\omega}^2+2\kappa_0^2 \sigma_{\omega}^2 B_* B_*^\top)I)$, i.e., $\zeta_t$ is Gaussian. Applying the Hanson-Wright inequality (Lemma \ref{lem:HansonWright}) we obtain
  \begin{align*}
      \|B_*\nu_t+\omega_{t}\|^2&\leq 5 \operatorname{Tr}((\sigma_{\omega}^2+2\kappa_0^2 \sigma_{\omega}^2 B_*B_*^\top)I) \log \frac{T_0}{\delta}\\
      &\leq 5 \sigma_{\omega}^2(n+2\kappa_0^2 \operatorname{Tr}(B_*B_*^\top))\log \frac{T_0}{\delta}\\
      &\leq 10 \sigma_{\omega}^2(n+m\kappa_0^2\vartheta^2)\log \frac{T_0}{\delta}.
  \end{align*}
  Setting $x_0=0$ concludes the proof of the first claim.

  By definition, we also have
\begin{align*}
    z_k = \begin{pmatrix} x_k \\ K_0 x_k + \nu_k \end{pmatrix},
\end{align*}
which implies
\begin{align*}
    \|z_k\| &\le \|x_k\| + \|K_0 x_k\| + \|\nu_k\| \\
            &\le 2 \kappa_0 \|x_k\| + \|\nu_k\|.
\end{align*}

Applying (\ref{eq:stnrmwarmup}), with probability at least $1-\delta/2$, we have
\begin{align*}
    \|x_t\| \le \sigma_{\omega} \sqrt{10 \Bigl(n + m \kappa_0^2 \bar{\vartheta}_{B_*}^2\Bigr) \log \frac{4 t}{\delta}}.
\end{align*}

Moreover, by the Hanson-Wright inequality (Lemma \ref{lem:HansonWright}),
\begin{align*}
    \max_{1 \le k \le t} \|\nu_k\|^2 \le 10 \sigma_{\omega}^2 m \kappa_0^2 \log \frac{4 t}{\delta}.
\end{align*}

Combining these results gives
\begin{align*}
    \|z_t\| \le \frac{12 \sigma_{\omega} \kappa_0^2}{\gamma_0} \sqrt{\Bigl(n + m \kappa_0^2 \bar{\vartheta}_{B_*}^2\Bigr) \log \frac{4 t}{\delta}}
\end{align*}
that completes the proof. 
\end{proof}

\subsection{Proof of Theorem \ref{thm:reg_warmup}}

\begin{proof}
Let $t_w$ denote the duration of the warm-up phase which is determined by termination condition of the warm-up phase $\epsilon_w(t)\leq \varepsilon$:
\begin{align*}
    t_w = \min \Bigg\{ t \;\Big|\; \frac{r_t^w}{\lambda_{\min}(\bar{V}_t)} \le \varepsilon \Bigg\}.
\end{align*}

A lower bound on the minimum eigenvalue of the covariance matrix $\bar{V}_t$ follows from Theorem 20 in \cite{cohen2019learning}:
\begin{align*}
    \bar{V}_t \succeq \frac{t \sigma_{\omega}^2}{80} I, \quad t \ge 400(n+m+ \log \tfrac{1}{\delta}).
\end{align*}
Therefore, the condition on $t_w$ becomes
\begin{align}
    t_w \leq \min \Bigg\{ t \;\Big|\; t \ge \frac{80 r_t^w}{\sigma_{\omega}^2 \varepsilon^2} \Bigg\}. \label{eq:novelCondn}
\end{align}

Since $r_t^w = \mathcal{O}(\log t)$, it follows from \eqref{eq:novelCondn} that such a $t_w$ exists.

Using the definition of $r_t^w$ and the bound on $\|z_t\|$, we obtain
\begin{align*}
    r_t^w &\le 4 \sigma_{\omega}^2 n(n+m) \log \frac{n}{\delta} \\
          &\quad + 4 \sigma_{\omega}^2 n(n+m) \log\Bigl(1 + t \cdot \frac{144 \sigma_{\omega}^2 \kappa_0^4}{\gamma_0^2} (n + m \kappa_0^2) \log \frac{4 t}{\delta} \Bigr) \\
          &\quad + 2 \sigma_{\omega}^2 m \\
          &\le 10 \sigma_{\omega}^2 n(n+m) \log \frac{1200 (n+m) \sigma_{\omega}^2 \kappa_0^6}{\gamma_0^2} \frac{t}{\delta}.
\end{align*}

Then (\ref{eq:novelCondn}) can be simplified applying the inequality $x \ge a \log(b x)$ for $a \ge 3$, which holds if $x \ge 3 a \log(ab)$:

\begin{align*}
    t_w \leq \min \Bigg\{ t \ge 800 n(n+m) \frac{1}{\varepsilon^2} \log \frac{1200 (n+m) \sigma_{\omega}^2 \kappa_0^6}{\gamma_0^2 \delta}\bigg\}.
\end{align*}

For the ARSLO algorithm, using the definition $\varepsilon = \bar{\epsilon}(\bar{\kappa}_1)$, we obtain
\begin{align*}
    \frac{1}{\bar{\epsilon}^2(\bar{\kappa}_1)}
    &\lesssim n(n+m)\|P_*\|^{14}\bar{\vartheta}_{B_*}^4\vartheta^2 
    \log \frac{\|P_*\| n(n+m)\bar{\vartheta}_{B_*}\vartheta}{\delta} \\
    &\lesssim (n+m)^2\|P_*\|^{14}\bar{\vartheta}_{B_*}^6 
    \log \frac{\|P_*\|(n+m)\bar{\vartheta}_{B_*}}{\delta}.
\end{align*}
Consequently, the duration of the warm-up phase satisfies
\begin{align*}
    t_w
    \lesssim (n+m)^4\|P_*\|^{14}\bar{\vartheta}_{B_*}^6
    \log \frac{\|P_*\|(n+m)\bar{\vartheta}_{B_*}}{\delta}
    \log \frac{(n+m)\kappa_0}{\gamma_0 \delta}.
\end{align*}

For the ARSLO$^{+}(\bar{\rho})$ algorithm, we instead have
$\varepsilon = \underline{\epsilon}(\bar{\kappa}_1,\bar{\rho})$, which yields
\begin{align*}
    \frac{1}{\underline{\epsilon}^2(\bar{\kappa}_1,\bar{\rho})}
    &\lesssim n(n+m)^2\|P_*\|^{h}\bar{\vartheta}_{B_*}^4\vartheta^2 
    \log^2 \frac{\|P_*\| n(n+m)\bar{\vartheta}_{B_*}\vartheta}{\delta} \\
    &\lesssim (n+m)^3\|P_*\|^{h}\bar{\vartheta}_{B_*}^6 
    \log^2 \frac{\|P_*\|(n+m)\bar{\vartheta}_{B_*}}{\delta},
\end{align*}
where $h$ is defined in~\eqref{eq:Defhforre}. Therefore, the warm-up phase duration for ARSLO$^{+}(\bar{\rho})$ satisfies
\begin{align*}
    t_w
    \lesssim (n+m)^5\|P_*\|^{h}\bar{\vartheta}_{B_*}^6
    \log^2 \frac{\|P_*\|(n+m)\bar{\vartheta}_{B_*}}{\delta}
    \log \frac{(n+m)\kappa_0}{\gamma_0 \delta}.
\end{align*}

Additionally, from \eqref{eq:znormbnwup}, we have
\begin{align*}
    \|z_t\|^2 \lesssim \frac{\kappa_0^6}{\gamma_0^2} (n+m) \bar{\vartheta}_{B_*}^2 \log \frac{t}{\delta}.
\end{align*}

The regret accumulated during the warm-up phase can therefore be written as
\begin{align*}
    R_{\text{warm-up}}(t_w) &= \sum_{k=0}^{t_w-1} z_k^\top \begin{pmatrix} Q & 0 \\ 0 & R \end{pmatrix} z_k \\
    &\le \alpha_1 \sum_{k=0}^{t_w-1} \|z_k\|^2 \lesssim \alpha_1 t_w \|z_{t_w}\|^2.
\end{align*}

As a result, for the ARSLO algorithm, we obtain
\begin{align*}
    R_{\text{warm-up}}(t_w)
    \lesssim \alpha_1 (n+m)^5 \|P_*\|^{14} \bar{\vartheta}_{B_*}^8
    \frac{\kappa_0^6}{\gamma_0^2}
    \log^2 \frac{\|P_*\|(n+m)\bar{\vartheta}_{B_*}}{\delta}
    \log \frac{(n+m)\kappa_0}{\gamma_0 \delta},
\end{align*}
which implies
\begin{align*}
    R_{\text{warm-up}}(t_w)
    \le \mathcal{O}\!\left(
        \frac{\kappa_0^6\, \alpha_1\, \bar{\vartheta}_{B_*}^8}{\gamma_0^2}
        (n+m)^5 \|P_*\|^{14}
    \right).
\end{align*}

Similarly, for the ARSLO$^{+}(\bar{\rho})$ algorithm, we have
\begin{align*}
    R_{\text{warm-up}}(t_w)
    \lesssim \alpha_1 (n+m)^6 \|P_*\|^{h} \bar{\vartheta}_{B_*}^8
    \frac{\kappa_0^6}{\gamma_0^2}
    \log^3 \frac{\|P_*\|(n+m)\bar{\vartheta}_{B_*}}{\delta}
    \log \frac{(n+m)\kappa_0}{\gamma_0 \delta},
\end{align*}
and consequently,
\begin{align*}
    R_{\text{warm-up}}(t_w)
    \le \mathcal{O}\!\left(
        \frac{\kappa_0^6\, \alpha_1\, \bar{\vartheta}_{B_*}^8}{\gamma_0^2}
        (n+m)^6 \|P_*\|^{h}
    \right).
\end{align*}

This completes the proof.
\end{proof}

\section{Rationale for the Perturbation Noise Scaling} \label{prop:ordernoise}
Throughout this paper, for both proposed algorithms, we choose Gaussian perturbation noise
\(\eta_t \sim \mathcal{N}(0, \Gamma_t)\), with \(\|\Gamma_t\| = \mathcal{O}\!\big(1/\sqrt{t}\big)\). 
However, we have not yet discussed the rationale behind this particular choice. 
The following proposition formally justifies why this order of the covariance matrix is desirable for achieving a favorable regret upper bound, and explains how deviations from this scaling can worsen the regret bound. 
This choice of scaling has previously been employed in self-tuning–regulator–based algorithms, including CE-based approaches such as \cite{simchowitz2020naive} and \cite{jedra2022minimal}.

\begin{proposition} 
Consider the perturbation noise $\eta_t \sim \mathcal{N}\!\left(0, \Gamma_t\right)$ applied to the designed feedback control in the proposed algorithms. Any deviation of $\|\Gamma_t\|$ from the order $\mathcal{O}\!\left(1/\sqrt{t}\right)$ either prevents satisfaction of the sufficient stability conditions or increases the regret upper bound beyond order $\mathcal{O}(\sqrt{t})$.
\end{proposition}

\begin{proof}
We analyze two cases for the perturbation noise $\eta_t \sim \mathcal{N}(0,\Gamma_t)$, corresponding to deviations of $\|\Gamma_t\|$ from the nominal order $\mathcal{O}(t^{-1/2})$.

\textbf{Case (i):} $\|\Gamma_t\| = \mathcal{O}(t^{-1/2+\bar{\alpha}})$ for some $\bar{\alpha} > 0$.  

Applying the procedure for lower bounding the minimum eigenvalue of the covariance matrix $V_t$ (see Appendix~\ref{sec:MinEigVal}), we obtain
\[
\lambda_{\min}(V_t) = \mathcal{O}(t^{1/2-\bar{\alpha}}).
\]
Consider the sufficient stability conditions for the ARSLO and ARSLO$^+(\bar{\rho})$ algorithms, given in (\ref{eq: verygoodRep}) and (\ref{eq: verygoodRep2}), respectively. Since the exploration parameter $\mu_t$ scales on the order of $\mathcal{O}(\sqrt{t})$, the resulting growth rate of $\lambda_{\min}(V_t)$ is insufficient to satisfy these conditions. Consequently, this choice of perturbation noise prevents the fulfillment of the sufficient conditions required for closed-loop stability.

\textbf{Case (ii):} $\|\Gamma_t\| = \mathcal{O}(t^{-1/2-\bar{\alpha}})$ for some $\bar{\alpha} > 0$.  

In this case, the same argument (see Appendix~\ref{sec:MinEigVal}) yields
\[
\lambda_{\min}(V_t) = \mathcal{O}(t^{1/2+\bar{\alpha}}),
\]
which is sufficient to satisfy the stability conditions in (\ref{eq: verygoodRep}) and (\ref{eq: verygoodRep2}). However, under this choice of perturbation noise, an upper bound on the regret term $R_6(t)$ shows that its order increases from $\mathcal{O}(\sqrt{t})$ to $\mathcal{O}(t^{1/2+\bar{\alpha}})$, thereby degrading the overall regret bound.

Combining the two cases completes the proof.
\end{proof}

\vskip 0.2in
\bibliography{sample}

\end{document}